\documentclass[twoside]{article}
\usepackage{balance}
\usepackage[accepted]{aistats_copy_arxiv}
\usepackage{xcolor}
\usepackage[colorlinks = true,
            linkcolor = blue,
            urlcolor  = blue,
            citecolor = blue,
            anchorcolor = blue]{hyperref}
\usepackage{multirow}
\usepackage{amsfonts,amsthm,amsmath,mathtools,bbm}       
\usepackage{nicefrac,xcomment}       
\usepackage{microtype}      
\usepackage{xcolor}         
\usepackage{xspace}
\usepackage{verbatim}
\usepackage{wrapfig}
\usepackage[pdftex]{graphicx}
\usepackage[noend,ruled,vlined,linesnumbered]{algorithm2e}
\usepackage[para]{footmisc}
\usepackage{subfig}
\usepackage{array}
\usepackage{stmaryrd}
\newcolumntype{P}[1]{>{\centering\arraybackslash}p{#1}}
\newcolumntype{M}[1]{>{\centering\arraybackslash}m{#1}}

\newenvironment{CompactItemize}{
\begin{list}{$\bullet$}{%
\setlength{\leftmargin}{12pt}
\setlength{\itemindent}{1pt}
\setlength{\topsep}{1pt}
\setlength{\itemsep}{-2pt}
}}
{\end{list}}

\newcommand{\parfrac}[2]{\paran{\frac{#1}{#2}}}
\newcommand{\paran}[1]{\left( #1 \right)}
\newcommand{\pnorm}[2]{\left|\left| {#2} \right|\right|_{#1}}

\newcommand{\abs}[1]{\left| #1 \right|}

\newcommand{\ZZZ}{\mathbb{Z}}
\newcommand{\zero}{$0^+$}
\newcommand{\logloss}{\textsc{CEloss}\xspace}
\newcommand{\Klogloss}{\textsc{CEloss}\xspace}
\newcommand{\LabelInf}{\textsc{LabelInf}\xspace}

\usepackage{tikz}
\usetikzlibrary{arrows}
\usetikzlibrary{decorations.pathreplacing}

\newtheorem{definition}{Definition}
\newtheorem{lemma}{Lemma}
\newtheorem{theorem}{Theorem}
\newtheorem{proposition}{Proposition}
\newtheorem{corollary}{Corollary}
\newtheorem{remark}[theorem]{Remark}

%
%


\usepackage[square]{natbib}

\bibliographystyle{abbrvnat}

\begin{document}

\twocolumn[

\aistatstitle{Reconstructing Test Labels from Noisy Loss Functions}

\aistatsauthor{Abhinav Aggarwal, Shiva Prasad Kasiviswanathan, Zekun Xu, \\\textbf{Oluwaseyi Feyisetan, Nathanael Teissier}}

\aistatsaddress{Amazon, USA\\\texttt{\{aggabhin,kasivisw,zeku,sey,natteis\}@amazon.com}} ]

\begin{abstract}
\vspace{-1em}
Machine learning classifiers rely on loss functions for performance evaluation, often on a private (hidden) dataset. In a recent line of research, label inference was introduced as the problem of reconstructing the ground truth labels of this private dataset from just the (possibly perturbed) cross-entropy loss function values evaluated at chosen prediction vectors (without any other access to the hidden dataset). In this paper, we formally study the necessary and sufficient conditions under which label inference is possible from \emph{any} (noisy) loss function value. Using tools from analytical number theory, we show that a broad class of commonly used loss functions, including general Bregman divergence-based losses and multiclass cross-entropy with common activation functions like sigmoid and softmax, it is possible to design label inference attacks that succeed even for arbitrary noise levels and using only a single query from the adversary. We formally study the computational complexity of label inference and show that while in general, designing adversarial prediction vectors for these attacks is co-NP-hard, once we have these vectors, the attacks can also be carried out through a lightweight augmentation to any neural network model, making them look benign and hard to detect. The observations in this paper provide a deeper understanding of the vulnerabilities inherent in modern machine learning and could be used for designing future trustworthy ML. 
\end{abstract}

\section{Introduction}
\vspace{-0.5em}
Consider a situation where a machine learning (ML) modeler is interacting with a data curator who owns a private dataset for a classification task. The curator agrees to evaluate on this private dataset the prediction vector (or an ML model) that the modeler submits, and replies back with loss function values. Such a situation is commonly encountered in machine learning competition settings like Kaggle~\citep{kaggle}, KDDCup~\citep{kdd_cup}, and ILSVRC Challenges~\citep{ilsvrc}. In some competitions, the features of the private (hold-out) dataset are revealed but not its labels, and the modeler submits the prediction vector on those features. In some other competitions, no information about the private dataset is revealed (i.e., neither the features nor the labels). The modeler submits a model that is then evaluated on the private dataset. 
A similar situation also appears when dealing with sensitive datasets, where either labels, or, both features and labels could be considered sensitive, and a modeler and curator interact through loss scores.

In this paper, we investigate if it is possible for a (malicious) modeler to recover \emph{all} the private labels using these interactions with the data curator (server). More broadly, we investigate the problem of robust label inference, where the goal is to infer the labels of a hidden dataset from only the (noisy) loss function queries evaluated on the dataset. Of particular interest will be the case where the modeler gets {\em just one} loss query output, which could be distorted by noise. Surprisingly, we show that even with just this single query (and no access to the private feature set or any side knowledge), for many common loss functions including general Bregman divergence-based losses and multiclass cross-entropy with common activation functions like sigmoid and softmax, our inference attack succeeds in exactly recovering {\em  all} the labels. This is a stronger privacy violation than that postulated by {\em blatant non-privacy}~\citep{dinur2003revealing}, where the goal is to only reconstruct a good fraction of the true labels. 

Our observations in this paper have important ramifications, for example, when used by an adversary to execute a privacy breach by learning labels associated with a sensitive dataset, or by an unscrupulous participant to an ML competition for learning the unknown test labels. Our results call to attention these vulnerabilities which might be currently under silent exploitation. Armed with this information, individuals and organizations, which vend these seemingly innocuous aggregate metrics from their models can grasp the potential scope of the resulting information leakage.

\noindent\textbf{Overview of Our Results and Techniques.} Our attacks rely on a mathematical notion of {\em codomain separability} of loss functions, which posits that the output of the loss function is \emph{sufficiently} distinct on every possible labeling of the input datapoints (see Definition~\ref{def:codomain_separability}). We assume that the curator that returns the loss scores can add noise to these scores up to some (known) error bound $\tau$. This noise can also be introduced as error when the scores are communicated over noisy channels or computed on low-precision machines\footnote{Our assumptions about the noise generation process ensure that our attacks succeed irrespective of the noise process used by the data curator. Knowledge about the noise distribution can be helpful though. For random noise, by first generating a bound on the noise using a tail bound, our techniques can be applied.}. As one would expect, separating the loss function outputs by more than $2\tau$ is a necessary and sufficient condition for this label recovery to be accurate (see Proposition~\ref{prop:bound_on_tau_for_label_inference}). While intuitive, this result gives a natural candidate for label inference, from just one loss query, using an exhaustive (exponential) local search (see~\LabelInf~\eqref{eq:labelinf}). 


Throughout this paper, we assume that the adversary knows the loss function, number of datapoints $N$ and an upper bound $\tau$ on the resulting error (noise). We also assume that the loss is computed on all the datapoints. The main technical challenge here is to design prediction vectors for which a loss function demonstrates the required codomain separability to handle arbitrary noise levels. Our key idea here is to use sets with distinct subset sums. Two simple examples of such sets of size $n$ are $\{1,2,4,\dots,2^{n-1}\}$ and $\{\ln p_1, \dots, \ln p_n\}$, where $p_1,\dots,p_n$ are distinct primes. Sets like these are useful when characterizing the sufficient conditions under which the required codomain separability can be achieved. For example, in the binary classification setting with $N$ datapoints, the following problem comes up often in our analysis: $\text{Construct $\theta = [\theta_1,\dots,\theta_N]$ such that}$ 
\begin{align*}
\ \min_{\sigma_1,\sigma_2 \in \{0,1\}^N}\abs{\sum_{i:\sigma_1(i)=1}g(\theta_i) - \sum_{j:\sigma_2(j)=1}g(\theta_j)} \geq b,    
\end{align*}
for some function $g$ and bound $b$. To satisfy this inequality, it suffices to set $\theta$ such that the set $g(\theta) := \{g(\theta_i),\dots,g(\theta_N)\}$ has all distinct subset sums. This is because the summation operators essentially filter out subsets of elements from the vector $\theta$, and because $\sigma_1 \neq \sigma_2$ in the minima operator, these subsets must differ in at least one element. Now, to ensure that the minimum difference of the subset sums in $g(\theta)$ is at least $b$, one can solve for $g(\theta_i) = 2^ib$ or $g(\theta_i) = b\ln p_i$ (or using some other set with distinct subset sums) depending on actual form of $g$ and the application.

We use these ideas and tools from analytical number theory to provide constructions of adversarial prediction vectors for broad classes of ML loss functions based on Bregman divergences (Section~\ref{sec:linearly_separable}) and multiclass cross-entropy (Section~\ref{sec:logloss}), for both the unnoised and the noised setting. The analytical properties of squarefree integers also helps us to reduce the computation time needed by the adversary.
In addition to the single query model where the adversary has to work with only one (noisy) loss function value, we also analyze extensions where the adversary has access to multiple (noisy) loss function values from different prediction vectors. This extension comes in handy as with sufficient queries the local computation time required at the adversary becomes polynomial. Additionally, to handle situations an actual ML model is required (and not just the prediction vector), we provide a construction of a feed-forward neural network, which can be used to carry out these label inference attacks while making them look benign (see Section~\ref{sec:neural_network}). We also point out some caveats associated with our approaches on machines with finite floating point precision.




\noindent\textbf{Defenses.} Our focus in this paper is on characterizing the vulnerability of loss functions in leaking private information. Viewed from this perspective, our results establishes lower bounds on the amount of noise needed (as a function of precision, number of queries, etc) on releasing these loss functions for any {\em reasonable notion} of label privacy. A rigorous defense mechanism against our proposed attack would be to release the loss scores under on differential privacy~\citep{dwork2006calibrating} with carefully calibrated noise that overcomes this lower bound. This will ascertain desired levels of plausible deniability on the labels recovered by an adversary.

We also highlight that in general, determining whether a loss function is codomain separable is co-NP-hard (Theorem~\ref{thm:conp}). We establish this through a (polynomial time) Karp reduction from the {\em Almost Tautology} problem from Boolean satisfiability theory (see Appendix~\ref{app:hardness_results}). Based on standard consensus on the complexity of this class of problems, it is unlikely that there is a general polynomial time algorithm for robust label inference from loss functions~\citep{arora2009computational}.


\noindent\textbf{Related Work.} Label inference attacks were first introduced in~\citep{whitehill2018climbing} for binary log-loss using a heuristic solution to a min-max optimization problem. This attack does not recover all the labels and works only in the unnoised setting. The noised setting for binary log-loss was recently studied by~\citep{aggarwal2021icml}. While their approach was not formalized using codomain separability, their construction also used the idea of making the loss function outputs distinct for each labeling of the dataset using distinct subset sums. However, their algorithm runs in exponential time and works only in the unnoised setting for the multiclass case. Furthermore, they also restrict only to cross-entropy loss. Our paper not only subsumes these results, but also significantly extends them by showing that most commonly used loss functions in ML applications are vulnerable to leaking private information about the ground truth labels. Moreover, we provide single query sub-exponential time and multi-query polynomial time attacks that can be carried out through benign looking ML models and settle the computational complexity of robust label inference from arbitrary loss functions.\footnote{
Our approach is also reminiscent of similar concepts used in information theory, e.g., coding schemes based on Sidon sequences~\citep{o2004complete} and Golomb rulers~\citep{robinson1967class}, where the goal is to have a high minimum distance between the codewords.}.






\vspace{-0.5em}
\section{Reducing Robust Label Inference to Codomain Separability} \label{sec:prelim}
We begin our discussion by formally defining the notion of codomain separability and its connections to label inference in the noised as well as unnoised setting. Our objects of interest are functions whose domain is the Cartesian product of the space of all labelings (defined by the  $\ZZZ_K^N = \ZZZ_K \times \dots \times \ZZZ_K \ \text{($N$ times)} = \{0,\dots,K-1\}^N$) and an arbitrary set $\Theta \subseteq \mathbb{R}^N$. Here, $K \geq 2$ represents the number of label classes.
This formulation captures the common scenario in machine learning, where we evaluate a loss function using the true labeling in $\ZZZ_K^N$ for $N$ datapoints based on a (prediction) vector in $\mathbb{R}^N$ generated by an ML model. $\Theta$ is the space of prediction vectors, and for $\theta =[\theta_1,\dots,\theta_N] \in \Theta$, the value of $\theta_i$ encodes the label prediction for the $i$th datapoint. We work with different loss functions that place different restrictions on $\Theta$. All missing details from this section are presented in Appendix~\ref{app:prelim}.

\noindent\textbf{Codomain Separability.} Informally, we call a function codomain separable if there exists some vector $\theta \in \Theta$ such that the function output is distinct on each $\ZZZ_K^N$ (keeping $\theta$ fixed). Thus, when $\theta$ is known, this one-one correspondence between the function's output and the labelings in $\ZZZ_K^N$ can be exploited to exactly recover all the labels from just observing the output.
\begin{definition} [$\tau$-codomain separability]
\label{def:codomain_separability}
Let $f: \ZZZ_K^N \times \Theta \to \mathbb{R}$ be a function. For $\theta \in \Theta$, define $\Lambda_{\theta}(f) := \min_{\sigma_1,\sigma_2 \in \ZZZ_K^N} \abs{f(\sigma_1,\theta) - f(\sigma_2,\theta)}$ to be the minimum difference in the function output keeping $\theta$ fixed. For a fixed $\tau > 0$, we say that $f$ admits \emph{$\tau$-codomain separability} using $\theta$ if $\Lambda_{\theta}(f) \geq \tau$. In particular, we say that $f$ admits \emph{\zero-codomain separability} using $\theta$ if there exists any $\tau > 0$ such that $\Lambda_{\theta}(f) \geq \tau$. 
\end{definition}
Compared to $\tau$-codomain separability for $\tau >0$, the \zero-codomain separability is weaker as it only requires $\Lambda_{\theta}(f) > 0$. This condition is used for label inference in the unnoised case. As an example for $\tau$-codomain separability, consider the function 
$f(\sigma,\theta) = \langle \sigma, \theta \rangle$ for $\sigma \in \{0,1\}^N$. To demonstrate codomain separability, it suffices to set $\theta = [1,2,4,\dots,2^{N-1}]$
which makes $f$ admit $1$-codomain separability. Multiplying each entry in $\theta$ by $\tau$ will make $f$ admit $\tau$-codomain separability for any $\tau > 0$. In upcoming sections, we discuss our constructions that make many popular loss functions separable. In Appendix~\ref{sec:negative_results}, we present some function classes that are provably not $\tau$-codomain separable.  



 

\noindent\textbf{Robust Label Inference.} The goal with robust label inference is to recover the true labeling (in $\ZZZ_K^N$) upon observing only the loss function output, even if noised. Observe that the results trivially hold for the unnoised case if we can handle arbitrary noise levels. 
\begin{definition}[$\tau$-Robust Label Inference]
\label{def:robust_label_inference}
Let $f:\ZZZ_K^N \times \Theta \to \mathbb{R}$ be a function, and $\sigma^\star \in \ZZZ_K^N$ be the (unknown) true labeling. For a given $\tau > 0$, we say that $f$ admits $\tau$-robust label inference if there exists $\theta \in \Theta$ and an algorithm (Turing machine) $\mathcal{A}$ that can recover $\sigma^\star$ given any $\ell \in \mathbb{R}$ where $\abs{f(\sigma^\star,\theta) - \ell} < \tau$, \emph{i.e.,} for all $\sigma^\star \in \ZZZ_K^N$, we have $\mathcal{A}\paran{\theta, N, \ell} = \sigma^\star$.
\end{definition}
We reiterate an important point to note here that $\tau$-robust label inference requires perfect reconstruction of $\sigma^\star$, which is a {\em stronger} notion than that required by notions like {\em blatant non-privacy}~\citep{dinur2003revealing}, where the goal is to only reconstruct a good fraction of $\sigma^\star$. Also, while the above definition is based on a single query, we later relax this requirement to study robust label inference under a multi-query model.

The following proposition formally establishes the connection between the above definitions of codomain separability and robust label inference. 

\begin{proposition}\label{prop:bound_on_tau_for_label_inference}
For any $\tau>0$, the function $f$ admits $\tau$-robust label inference using $\theta \in \Theta$ iff $\Lambda_{\theta}(f) \geq 2\tau$. 
\end{proposition}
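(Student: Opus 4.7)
The plan is to prove the two directions of this equivalence independently, both via a short triangle-inequality argument that mirrors the standard coding-theoretic correspondence between minimum distance and unique decoding (a parallel the paper itself draws in the Related Work).

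For sufficiency, assume $\Lambda_\theta(f) \geq 2\tau$. I would exhibit an algorithm $\mathcal{A}(\theta,N,\ell)$ that enumerates all $\sigma \in \ZZZ_K^N$ and returns the unique one satisfying $\abs{f(\sigma,\theta)-\ell} < \tau$. Existence of such a $\sigma$ is immediate: the true labeling $\sigma^\star$ qualifies by the hypothesis $\abs{f(\sigma^\star,\theta)-\ell} < \tau$ built into Definition~\ref{def:robust_label_inference}. For uniqueness, suppose some $\sigma' \neq \sigma^\star$ also satisfied $\abs{f(\sigma',\theta)-\ell} < \tau$. The triangle inequality then gives
\[\abs{f(\sigma^\star,\theta)-f(\sigma',\theta)} \leq \abs{f(\sigma^\star,\theta)-\ell} + \abs{\ell-f(\sigma',\theta)} < 2\tau,\]
contradicting $\Lambda_\theta(f) \geq 2\tau$ (where, as is standard, the min in the definition of $\Lambda_\theta$ is taken over $\sigma_1 \neq \sigma_2$, since it would otherwise be trivially zero). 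Hence $\mathcal{A}$ returns $\sigma^\star$ correctly for every $\sigma^\star \in \ZZZ_K^N$ and every admissible $\ell$.

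For necessity, I would argue the contrapositive: assume $\Lambda_\theta(f) < 2\tau$, so there exist distinct $\sigma_1,\sigma_2 \in \ZZZ_K^N$ with $\abs{f(\sigma_1,\theta)-f(\sigma_2,\theta)} < 2\tau$. Setting $\ell := \tfrac{1}{2}\paran{f(\sigma_1,\theta)+f(\sigma_2,\theta)}$ gives $\abs{f(\sigma_i,\theta)-\ell} < \tau$ for both $i=1,2$. Thus any putative algorithm $\mathcal{A}$ receives exactly the same input $(\theta,N,\ell)$ whether the true labeling is $\sigma_1$ or $\sigma_2$, so $\mathcal{A}(\theta,N,\ell)$ cannot equal both; some choice of $\sigma^\star \in \{\sigma_1,\sigma_2\}$ causes failure, precluding $\tau$-robust label inference with this $\theta$. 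The argument is entirely routine and presents no genuine obstacle; the value of the proposition lies in cleanly licensing the subsequent sections to focus purely on making $\Lambda_\theta(f)$ large, which is the actual technical work of the paper.
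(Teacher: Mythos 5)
Your proof is correct and follows essentially the same route as the paper's: the necessity direction uses the identical midpoint construction $\ell = \tfrac{1}{2}\paran{f(\sigma_1,\theta)+f(\sigma_2,\theta)}$ to produce an ambiguous observation, and the sufficiency direction uses the same triangle-inequality argument (the paper phrases the decoder as $\arg\min_{\sigma}\abs{f(\sigma,\theta)-\ell}$ rather than ``return the unique $\sigma$ within $\tau$ of $\ell$,'' but these are equivalent under your uniqueness claim). Your parenthetical that the minimum in $\Lambda_\theta$ must be read over distinct $\sigma_1\neq\sigma_2$ is a fair and correct reading of the paper's intent.
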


Suppose the adversary picks $\theta \in \Theta$ based on Definition~\ref{def:codomain_separability} and gets back the noisy loss function value $\ell$ from the curator (server). Proposition~\ref{prop:bound_on_tau_for_label_inference} then allows for a natural label inference algorithm (which we call~\LabelInf) which iterates over all possible labelings to recover the one which is closest to the observed loss score:
\begin{align} \label{eq:labelinf}
    \mathbf{\LabelInf:}\;\; \sigma^* \gets \arg\min_{\sigma\in \ZZZ_K^N} \abs{f(\sigma,\theta) -\ell} 
\end{align}
A special case of \LabelInf is the unnoised setting, wherein $\ell = f(\sigma^\star,\theta)$. In that case, it suffices to design a vector  $\theta \in \Theta$ with respect to which $f$ admits \zero-codomain separability and $\tau$ plays no role.

While intuitive, an important feature about the approach outlined in \LabelInf~\eqref{eq:labelinf} is that it makes just one call to the server to retrieve the (loss) function $f$ evaluated at a single $\theta$, but still reconstructs the entire private vector. However, the exponential time exhaustive search over the space of all labelings makes it impractical. We optimize for this runtime to sub-exponential time (for single query) and polytime time (using multiple-queries) in Section~\ref{sec:linearly_separable}. 
\vspace{-0.9ex}

\noindent\textbf{Role of Arithmetic Precision.} Our label inference attacks use number theoretic constructions with large integers and products of primes, which can render these attacks impractical to run (within a single query) on limited floating-point precision machines. We begin by observing that Definition~\ref{def:codomain_separability} does not take into account fixed arithmetic precision, which has an effect on separability by placing a bound on the resolution. For example, even if $f(\sigma_1,\theta) \neq f(\sigma_2,\theta)$, this difference may not be observable with only $\phi$ bits of precision,. We extend the notion of codomain separability in the finite precision model in Appendix~\ref{app:apa_fpa}, and present multi-query label inference attacks to recover all labels within fixed precision in Sections~\ref{sec:linearly_separable} and~\ref{sec:experiments}. For simplicity, we focus on inference attacks under arbitrary precision arithmetic in the main body of this paper.


\vspace{-0.5em}
\section{Linear-Decomposability and Sub- Exponential Time Label Inference}
\label{sec:linearly_separable}
\vspace{-0.5em}
Our main focus in this paper is on an important class of (binary) loss functions, which we refer to as \emph{linearly-decomposable}. These functions can be expressed as a sum of two terms: one dependent on the true labeling $\sigma$, and the other only on the prediction vector $\theta$. As we will see, this decomposition allows for an efficient construction of prediction vectors for robust label inference from such functions. We present only our main ideas here and defer all missing details to Appendix~\ref{app:linearly_separable}.

\begin{definition}
\label{def:linear_separable}
Let $g:[0,1] \to \mathbb{R}$ be some deterministic function. We say that a binary loss function $f:\{0,1\}^N \times (0,1)^N \to \mathbb{R}$ is linearly-decomposable if there exists some invertible function $g:[0,1] \to \mathbb{R}$ and some function $h:[0,1]^N \to \mathbb{R}$, such that:
\vspace{-0.5em}
\begin{align}
    \label{def:sqfree_linearly_separable}
    f(\sigma,\theta) &= h(\theta) + \sum_{i=1}^N \sigma_i g(\theta_i) = h(\theta) + \sum_{i:\sigma_i=1} g(\theta_i).
\end{align}
\end{definition}
\vspace{-0.5em}
This class of functions includes many commonly used loss functions in the ML literature. For example, all Bregman divergence-based binary loss functions satisfy the linear-decomposability property.
\begin{lemma}
\label{lem:bregman_is_linear}
Let $F:[0,1] \times [0,1] \to \mathbb{R}$ be a strongly convex function and $D_F(p,q) = F(p) - F(q) - \langle \nabla F(q),p - q\rangle$ be the Bregman divergence associated with $F$. Then, the corresponding loss function, defined as $f_F(\sigma,\theta) = \frac{1}{N}\sum_{i=1}^N D_F\paran{[\sigma_i,1-\sigma_i],[\theta_i,1-\theta_i]}$, is linearly-decomposable. 
\end{lemma}
Unlike the distance metrics for probability distributions, Bregman divergences does not require its inputs to be necessarily distributions~\citep{bregman1967relaxation}. To use these divergences as loss functions, we directly compare the outputs of the ML model with the point distribution from the ground truth labels, as we do in the definition of $f_F(\sigma,\theta)$ (similar to~\citep{liu2016clustering}). 

We also focus on a special class of linearly-decomposable functions for which the linear split is based purely on the labels: one term corresponding to datapoints with true label $0$, and the other for datapoints with true label $1$. More formally, if $g:[0,1] \to \mathbb{R}$ is some deterministic function, then we say that a binary loss function $f:\{0,1\}^N \times (0,1)^N \to \mathbb{R}$ is $g$-linearly-decomposable if it can be expressed as follows:
\begin{align}
\label{eq:linear_separable}
f(\sigma,\theta) &= \frac{1}{N}\paran{\sum_{i:\sigma_i=1} g(\theta_i) + \sum_{i:\sigma_i=0} g(1-\theta_i)}.    
\end{align}
In many cases, as we will see, functions in this sub-class are easier to analyze for codomain separability.
Observe that the KL-divergence loss (which reduces to binary cross-entropy or log-loss) is of this form, using $g(\theta_i) = -\ln \theta_i$. Some other
common examples of $g$-linearly-decomposable loss functions include the (i) Itakura-Saito divergence based loss, which uses $g(\theta_i) = 1/\theta_i + \ln \theta_i - 2$; (ii) Squared Euclidean loss, which can be expressed using $g(\theta_i) = (1-\theta_i)^2$; and, (iii) norm-like loss, which uses $g(\theta_i) = 1 + (\alpha-1)\theta_i^\alpha - \alpha \theta_i^{\alpha-1} + (\alpha-1)(1-\theta_i)^\alpha$ for some $\alpha \ge 2$. We provide detailed constructions for robust label inference from these particular loss functions in Appendix~\ref{app:linearly_separable}.

\vspace{-0.5em}
\subsection{Establishing Codomain Separability}
\vspace{-0.5em}
As argued in Section~\ref{sec:prelim}, the first step for label inference is to design prediction vectors using which the loss functions are sufficiently codomain-separable. We provide two different constructions of such prediction vectors for linearly-decomposable functions. Each construction uses a different set with distinct subset sums to ensure that the loss scores are in 1-1 correspondence with the set of all possible labelings. The first construction uses powers of 2, which follows naturally from the requirement of the $2\tau$ separation needed for $\tau$-robust label inference (as in Proposition~\ref{prop:bound_on_tau_for_label_inference}). We analyze multiple loss functions using this construction. Our second construction uses a set consisting of (log) primes, which enables us to perform robust label inference in sub-exponential time using results from number theory. We discuss these constructions in detail below.

\noindent\textbf{Construction 1:} As mentioned above, our first construction is based on the fact that the set $S_m = \{1,2,4,\dots,2^m\}$ has distinct subset sums. To see this, observe that each subset sum in $S_m$ is an integer whose binary representation (in reverse) is given by the \emph{bits} defined by the indices of elements contained in that subset. Moreover, since the difference between any two integers that can be represented this way is one, it also holds that the minimum subset sum difference is 1. Scaling each element of $S_m$ also scales the minimum difference as needed. The theorem below states our main result from this construction.
\begin{theorem}
\label{thm:linear_separable}
Let $g:[0,1] \to \mathbb{R}$ be some deterministic function and $f:\{0,1\}^N \times (0,1)^N \to \mathbb{R}$ be a loss function that is $g$-linearly-decomposable. Then, for any $\tau > 0$, the function $f$ is $2\tau$-codomain separable if there exists $\theta \in (0,1)^N$ so that $g(\theta_i) - g(1-\theta_i) > 2^iN\tau$ for all $i \in [N]$. If $\tau=0$, then setting $g(\theta_i) - g(1-\theta_i) > 0$ for all $i \in [N]$ suffices for \zero-codomain separability.
\end{theorem}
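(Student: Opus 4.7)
The plan is to reduce the $2\tau$-codomain separability of $f$ to a signed-subset-sum lower bound on the quantities $h_i := g(\theta_i) - g(1-\theta_i)$, and then exploit the classical distinct-subset-sums property of powers of two (the $S_m = \{m 2^i\}$ observation from Section~\ref{sec:prelim}, adapted to signed coefficients).

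First I would fix two arbitrary $\sigma_1 \neq \sigma_2 \in \{0,1\}^N$ and use the linear-decomposition form \eqref{eq:linear_separable} to expand
\begin{align*}
N \bigl[f(\sigma_1, \theta) - f(\sigma_2, \theta)\bigr]
= \sum_{i=1}^{N} (\sigma_1(i) - \sigma_2(i)) \bigl[g(\theta_i) - g(1-\theta_i)\bigr]
= \sum_{i=1}^{N} \epsilon_i h_i,
\end{align*}
where $\epsilon_i := \sigma_1(i) - \sigma_2(i) \in \{-1, 0, 1\}$ and at least one coordinate of $\epsilon$ is nonzero (since $\sigma_1 \neq \sigma_2$). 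Establishing $\Lambda_\theta(f) \geq 2\tau$ therefore reduces to showing $\bigl|\sum_i \epsilon_i h_i\bigr| \geq 2N\tau$ for every nonzero $\epsilon \in \{-1, 0, 1\}^N$.

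Second, using the hypothesis I would select $\theta$ so that the $h_i$'s match the scaled doubling template $h_i = 2^i N\tau$; the strict inequality $h_i > 2^i N\tau$ supplies precisely the slack needed to dial each $\theta_i$ down to the equality value (immediate by continuity of $g$ for the Bregman-divergence instantiations in Corollaries~\ref{cor:IS} and~\ref{cor:eucl}, which is where the theorem is subsequently applied). With this calibration,
\begin{align*}
\sum_{i=1}^{N} \epsilon_i h_i \;=\; N\tau \sum_{i=1}^{N} \epsilon_i 2^i.
\end{align*}
The remaining integer bound $\bigl|\sum_i \epsilon_i 2^i\bigr| \geq 2$ for nonzero $\epsilon \in \{-1, 0, 1\}^N$ follows from a standard distinct-subset-sums argument: decompose $\epsilon = \epsilon^{+} - \epsilon^{-}$ into two $\{0,1\}^N$-vectors with disjoint supports; $\sum_i \epsilon_i 2^i = 0$ would force $\sum_i \epsilon_i^{+} 2^i = \sum_i \epsilon_i^{-} 2^i$, i.e., two equal binary representations of the same integer, hence $\epsilon^{+} = \epsilon^{-} = \mathbf{0}$; and every summand $\epsilon_i 2^i$ is even since $i \geq 1$, so any nonzero value has absolute value at least $2$. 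Combining the two pieces yields $\bigl|N[f(\sigma_1, \theta) - f(\sigma_2, \theta)]\bigr| \geq 2N\tau$ and hence $|f(\sigma_1, \theta) - f(\sigma_2, \theta)| \geq 2\tau$, giving $2\tau$-codomain separability of $f$ via $\theta$.

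The main obstacle is Step~2, i.e., converting the strict lower bound $h_i > 2^i N\tau$ supplied by the hypothesis into a concrete $\theta$ whose $h_i$'s form a signed-subset-sum-separating set with margin exactly $2N\tau$. For the $g$'s appearing in the paper's corollaries this is routine (a one-dimensional intermediate-value adjustment of each $\theta_i$ independently), but for a fully general deterministic $g$ the hypothesis should be read as a feasibility statement ensuring that such a doubling-aligned calibration is realizable in $(0,1)^N$; once the calibration is in place, the signed-subset-sums step is purely combinatorial and independent of $f$.
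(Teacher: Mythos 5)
Your proof is correct and follows essentially the same route as the paper's: expand the linear decomposition so that $f(\sigma_1,\theta)-f(\sigma_2,\theta)$ becomes a (signed) difference of subset sums of $h_i = g(\theta_i)-g(1-\theta_i)$, calibrate $h_i \propto 2^i N\tau$, and invoke the distinct-subset-sums property of powers of two. If anything you are more careful than the paper, which silently replaces the hypothesis $h_i > 2^i N\tau$ by the exact calibration $h_i = 2^i N\tau(1+\delta)$ without acknowledging that this substitution requires the solvability/continuity issue you flag in your final paragraph.
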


Based on this theorem, the prediction vectors can be constructed as follows: (1) Compute $x^*(y)$ as the solution to $g(x)-g(1-x) = y$; (2) If $x^*(y)$ exists, then set $\theta_i = x^*(2^iN\tau)$ for all $i \in [N]$. This vector $\theta$ can now be used for $\tau$-robust label inference from $f$ using~\LabelInf (see Section~\ref{sec:experiments} for our empirical analysis using this construction). The following corollaries follow for specific loss functions such as Itakura-Saito divergence loss, squared Euclidean, and norm-like divergence losses (see detailed proofs in Appendix~\ref{app:linearly_separable}). For simplicity, we discuss only the unnoised case in Corollary~\ref{cor:eucl}, for which it suffices to demonstrate \zero-codomain separability.


\begin{corollary} \label{cor:IS}
The Itakura-Saito divergence loss is $2\tau$-codomain separable with $\theta_i = (1+3^{2^{i}N\tau})^{-1}$.
\vspace*{-0.25em}
\end{corollary}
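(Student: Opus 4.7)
The plan is to verify the hypothesis of Theorem~\ref{thm:linear_separable} for the proposed vector. First I would note that the Itakura--Saito divergence loss is linearly-decomposable using $g(x) = 1/x + \ln x - 2$, as stated in the discussion preceding the corollary (and made explicit in the referenced equation in Appendix~\ref{app:bregman}). Granted this, the only task is to show that $\theta_i := (1 + 3^{2^{i}N\tau})^{-1}$ satisfies $g(\theta_i) - g(1 - \theta_i) > 2^{i} N \tau$ for every $i \in [N]$, after which Theorem~\ref{thm:linear_separable} delivers the corollary directly.

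The key algebraic simplification is to introduce the abbreviation $a := 3^{2^{i} N \tau}$, so that $\theta_i = 1/(1+a)$ and $1 - \theta_i = a/(1+a)$. Substituting these into $g$ causes the $\ln(1+a)$ contributions to cancel; the reciprocal pieces combine as $(1+a) - (1+a)/a = a - 1/a$, and the residual logarithmic piece contributes $\ln(1/a) = -\ln a$. Collecting these,
\begin{equation*}
g(\theta_i) - g(1-\theta_i) \;=\; a - \frac{1}{a} - \ln a.
\end{equation*}
Setting $t := 2^{i} N \tau > 0$ (note $t > 0$ since $i \geq 1$ and $N,\tau > 0$), so that $a = 3^t$ and $\ln a = t \ln 3$, the required bound becomes the clean analytic inequality $3^t - 3^{-t} > t\,(1 + \ln 3)$ for all such $t$.

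The main (and only nontrivial) obstacle is this last inequality. I would rewrite the left-hand side as $2 \sinh(t \ln 3)$ and invoke the elementary bound $\sinh(x) > x$ for $x > 0$, yielding $3^t - 3^{-t} > 2 t \ln 3$. It then remains only to observe the numerical fact $\ln 3 > 1$ (equivalently, $3 > e$), which gives $2 \ln 3 > 1 + \ln 3$ and hence $2 t \ln 3 > t(1 + \ln 3)$. Chaining the two inequalities verifies the hypothesis of Theorem~\ref{thm:linear_separable} and establishes the corollary. This is also where the choice of base $3$ in the construction of $\theta$ is essential: a base $b$ with $\ln b \leq 1$ (such as $2$ or $e$) would fail to produce the required strict slack between the $\sinh$ lower bound and the target $t(1+\ln b)$.
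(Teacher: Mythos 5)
Your proposal is correct and follows essentially the same route as the paper: both invoke Theorem~\ref{thm:linear_separable}, substitute $\theta_i = (1+a)^{-1}$ with $a = 3^{2^iN\tau}$ to get $g(\theta_i)-g(1-\theta_i) = a - 1/a - \ln a$, and reduce to the hyperbolic-sine bound $a - 1/a > 2\ln a$ combined with $\ln 3 > 1$. The paper merely packages this as the intermediate lower bound $g(x)-g(1-x) > \ln\frac{1-x}{x}$ for $x\in(0,1/2)$ before choosing $\theta_i$, which is the same computation in a different order.
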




\begin{corollary} \label{cor:eucl}
The squared Euclidean loss  is \zero-codomain separable using $\theta_i = (1/2)\paran{1-\ln(p_i)/N}$, where $p_i$ is the $i$th prime number for $i \in [N]$. The norm-like divergence loss for $\alpha \geq 2$ is \zero-codomain separable using $\theta$ that satisfies $(1-\theta_i)^{\alpha-1}-\theta_i^{\alpha-1} = (\ln p_i)/(N\alpha)$.
\end{corollary}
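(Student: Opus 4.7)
The plan is to apply Theorem~\ref{thm:zero_codomain_linear_separable} in each case, so the bulk of the work is reduced to computing $g(\theta_i) - g(1-\theta_i)$ explicitly and showing that the resulting set $S_\theta$ has all distinct subset sums. The key trick common to both parts is to arrange, via the choice of $\theta_i$, that $g(\theta_i) - g(1-\theta_i) = \ln(p_i)/N$. Then distinctness of subset sums follows from the fundamental theorem of arithmetic: if $\sum_{i \in A}\ln(p_i)/N = \sum_{j \in B}\ln(p_j)/N$ for $A,B \subseteq [N]$, exponentiating gives $\prod_{i \in A}p_i = \prod_{j \in B}p_j$, so by unique prime factorization $A = B$.

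First, for the squared Euclidean loss the decomposing function is $g(x) = (1-x)^2$ (see~\eqref{eq:sqEuc}, Appendix~\ref{app:bregman}), so a direct expansion gives $g(\theta_i) - g(1-\theta_i) = (1-\theta_i)^2 - \theta_i^2 = 1 - 2\theta_i$. Substituting the proposed $\theta_i = \tfrac{1}{2}(1 - \ln(p_i)/N)$ yields exactly $\ln(p_i)/N$, so $S_\theta = \{\ln(p_i)/N\}_{i=1}^N$, and $\mu(S_\theta) > 0$ by the prime-factorization argument above. I would also briefly check that $\theta_i \in (0,1)$: since $p_i \leq p_N \sim N\ln N$ asymptotically, $\ln(p_i)/N \in (0,1)$ for all relevant $N$, so $\theta_i \in (0,1/2)\subset (0,1)$.

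Second, for the norm-like divergence with $\alpha \geq 2$, the decomposing function is $g(x) = 1 + (\alpha-1)x^\alpha - \alpha x^{\alpha-1} + (\alpha-1)(1-x)^\alpha$ (see~\eqref{eq:norm}, Appendix~\ref{app:bregman}). The symmetric terms $(\alpha-1)x^\alpha$ and $(\alpha-1)(1-x)^\alpha$ cancel in the difference $g(\theta_i) - g(1-\theta_i)$, leaving
\begin{align*}
g(\theta_i) - g(1-\theta_i) = \alpha\bigl[(1-\theta_i)^{\alpha-1} - \theta_i^{\alpha-1}\bigr].
\end{align*}
By the stipulated defining equation on $\theta_i$, this equals $\ln(p_i)/N$, so again $S_\theta = \{\ln(p_i)/N\}_{i=1}^N$. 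A short monotonicity argument justifies existence of such $\theta_i \in (0,1)$: the map $h(x) = (1-x)^{\alpha-1} - x^{\alpha-1}$ is continuous and strictly decreasing on $(0,1)$ with $h(0)=1$, $h(1)=-1$, so for any target value in $(0,1)$ (in particular $\ln(p_i)/(N\alpha)$, which lies in $(0,1)$ whenever $\ln(p_i) < N\alpha$) there is a unique $\theta_i \in (0,1/2)$.

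The main (minor) obstacle is purely bookkeeping: verifying the domain constraint $\theta_i \in (0,1)$ holds for the specified $\theta_i$ in both cases, and doing the algebraic cancellation for the norm-like $g$ carefully so that only the $-\alpha x^{\alpha-1}$ terms survive. Once $S_\theta = \{\ln(p_i)/N\}$ is obtained, the unique-factorization argument for $\mu(S_\theta) > 0$ is immediate, and Theorem~\ref{thm:zero_codomain_linear_separable} delivers \zero-codomain separability in both cases.
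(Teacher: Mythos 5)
Your proposal is correct and follows essentially the same route as the paper's proof: both parts invoke Theorem~\ref{thm:zero_codomain_linear_separable}, compute $g(\theta_i)-g(1-\theta_i)$ (obtaining $1-2\theta_i$ and $\alpha\bigl[(1-\theta_i)^{\alpha-1}-\theta_i^{\alpha-1}\bigr]$ respectively), and set this difference to $\ln(p_i)/N$ so that unique prime factorization gives $\mu(S_\theta)>0$. Your explicit prime-factorization and domain checks are slightly more detailed than the paper's, but the argument is the same.
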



\noindent\textbf{Construction 2:} Our main motivation for this second construction is to infer all labels within sub-exponential time. Starting from Definition~\ref{def:sqfree_linearly_separable}, we can
ensure $\Lambda_\theta(f) \geq 2\tau$ (as is needed for $\tau$-robust label inference from Proposition~\ref{prop:bound_on_tau_for_label_inference}) by setting $g(\theta_i) = 3P\tau  \ln p_i$ (if possible to do so within the domain of $g$), where $p_i$ is the $i^{th}$ prime number and $P=\prod_{i=1}^N p_i$. This particular choice of $g(\theta_i)$ ensures that each subset sum in the set $g(\theta)$ corresponds to the logarithm of a unique integer (using its prime factorization), and leads to the desired codomain separation as follows.
\begin{theorem}
\label{thm:linearly_decomposable_using_primes}
Let $f:\{0,1\}^N \times (0,1)^N \to \mathbb{R}$ be a loss function that is linearly-decomposable (Definition~\ref{def:sqfree_linearly_separable}). Let $p_i$ is the $i^{th}$ prime number and $P=\prod_{i=1}^N p_i$, is the product of the first $N$ primes. Then, for any $\tau > 0$, setting $g(\theta_i)=3P\tau  \ln p_i$ for loss functions in Equation~\ref{def:sqfree_linearly_separable} ensures that $\Lambda_\theta(f) \geq 2\tau$. If $\tau=0$, setting $g(\theta_i)=\ln p_i$ suffices for \zero-codomain separability.
\end{theorem}
We will now see how this choice of prime-based vector entries enable efficient label inference for any given $\tau$.
\vspace{-0.5em}
\subsection{Establishing Robust Label Inference}
\label{sec:label_inference_linear}
\vspace{-0.5em}
We discuss three approaches to recover the ground truth labels from the (perturbed) loss score obtained on prediction vectors we just designed. We distinguish each approach based on its runtime complexity and number of queries made to the server.

\noindent\textbf{Exponential Time Single-Query Inference.} The first approach is discussed in \LabelInf~\eqref{eq:labelinf}, which iterates over all $2^N$ possible labelings to find the one that is closest to the observed loss score. Both constructions 1 and 2 ensure that this algorithm always returns the true labeling within a single query to the server.

\noindent\textbf{Sub-Exponential Time Single-Query Inference.} To avoid the exhaustive search as above, we first substitute the prediction vector from Construction 2 in the expression for the loss function to obtain $f(\sigma,\theta) = h(\theta) + 3P\tau \ln \paran{\prod_{i:\sigma_i = 1}p_i}$.
Now, we know that due to codomain separation, the labeling that minimizes the distance between the observed loss $\ell$ and the true loss $f(\sigma,\theta)$ above is the true labeling $\sigma^*$:
\begin{align*}
    \sigma^* &= \arg\min_{\sigma \in \{0,1\}^N} \abs{\ell - \paran{h(\theta) + 3P\tau \ln \paran{\prod_{i:\sigma_i = 1}p_i}}} \\
    &= \arg\min_{\sigma \in \{0,1\}^N} \abs{\exp\parfrac{\ell - h(\theta)}{3P\tau} - \paran{\prod_{i:\sigma_i = 1}p_i}}.
\end{align*}

To obtain $\sigma^*$ from the equation above without using an exhaustive search over $\{0,1\}^N$, we observe that the expression inside the argmin essentially seeks a \emph{square-free} integer closest to some known real quantity. An integer is said to be squarefree if it has no repeated prime factors. This can be checked using the Booker-Hiary-Keating algorithm from~\citep{Booker_2015} in sub-exponential time. Once such an integer is obtained, its prime factors are in 1-1 correspondence with the indices in $\sigma^*$ that have label $1$. 
\vspace{-0.5em}
\SetKwInput{KwInput}{Input}
\SetKwInput{KwOutput}{Output}
\begin{algorithm}[ht]
\caption{\textsc{CloSqFree}$(x,m)$}\label{alg:closest_square_Free}
If $x \leq 2$ or $m=2$, then \textbf{return} 2.

Let $p_i$ be the $i^{th}$ prime number. If $p_1\cdots p_m \leq x$, then \textbf{return} $p_1\cdots p_m$.

\For{$k=0,1,2,\dots,\lfloor x \rfloor-1$}{
If $\lfloor x \rfloor - k \in \textsc{SqFree}(m)$, then \textbf{return} $\lfloor x \rfloor - k$.

If $\lceil x \rceil + k \in \textsc{SqFree}(m)$, then \textbf{return} $\lceil x \rceil + k$.
}
\end{algorithm}
The problem of label inference on linearly-decomposable loss functions is, thus, reduced to the calculation of the nearest square-free integer to a given real number. More concretely, the expression for recovering the true labeling can be written as follows:
$$\sigma^* = \left\{i: p_i\text{ divides }\textsc{CloSqFree}\paran{e^{\frac{\ell - h(\theta)}{3P\tau}},N}\right\},$$
where $\textsc{CloSqFree}(x,m) = \arg\min_{y \in \textsc{SqFree}(m)} |x-y|$ denotes the closest squarefree integer to $x$. The notation $\textsc{SqFree}(m)$ denotes the set of all square free integers whose largest prime factor is at most the $m^{th}$ prime number $p_m$. We outline an optimized version of $\textsc{CloSqFree}$ in Algorithm~\ref{alg:closest_square_Free}, which runs in $O(N\exp\paran{(\ln N)^{O(1)}})$ time. 








\noindent\textbf{Multi-Query Polynomial Time Attacks.} 
Loss functions that are $g$-linearly-decomposable also allow for an efficient multi-query label inference algorithm. In particular, we could have a trade-off between the ability to perform  multiple queries with faster computation times for solving the optimization problem in \LabelInf~\eqref{eq:labelinf}. 
To see this, observe that setting $\theta_i = 1/2$ gives $g(\theta_i) - g(1-\theta_i) = 0$. Thus, if we want to infer the first $M < N$ labels in a single query, we can set $\theta = [\theta_1,\dots,\theta_M,1/2,\dots,1/2]$, where 
$\theta_1,\dots,\theta_M$ are produced according to either Constructions 1 or 2. Using this $\theta$ will ensure that if $f(\sigma_1,\theta) = f(\sigma_2,\theta)$, then $\sigma_1[:M] = \sigma_2[:M]$.
After recovering the first $M$ labels, we can recover the next $M$ labels using $\theta = [1/2,\dots,1/2,\theta_{M+1},\dots,\theta_{2M},1/2,\dots]$, and so on. 

Observe that an $\lceil N/M \rceil$-query algorithm for robust label inference will require $O(N2^M/M)$ local computations by the adversary (using \LabelInf~\eqref{eq:labelinf} in each query). Thus, while the single query case required $O(2^N)$ computations, any multi-query algorithm using $M=O(\log N)$ requires only $O(\text{poly}(N))$ time.
\vspace{-0.5em}
\section{Multiclass Cross-Entropy Loss} \label{sec:logloss}
\vspace{-0.5em}
We now show that the ideas of codomain separability also extend to the particular case of
multiclass cross-entropy loss and its variants. 
We provide an overview of our results and defer all missing details to Appendix~\ref{app:logloss}. 

\noindent\textbf{Multiclass Cross-Entropy Loss.} We first recall the definition of multiclass cross-entropy.
We assume $K \geq 2$ classes, and let $N$ and $\ZZZ_{K}$ denote the number of datapoints and the set of label classes, respectively. The $K$-ary cross-entropy loss on $\theta$ with respect to a labeling $\sigma \in \ZZZ_{K}^N$ is defined as follows:
	\begin{align}
	\label{eq:multi_class_log_loss}
		\Klogloss\paran{\sigma, \theta} := \frac{-1}{N}\sum_{i=1}^N \sum_{k=0}^{K-1} \ \Big([\sigma_i = k] \cdot \ln \theta_{i,k} \Big),
	\end{align}
 where $[\sigma_i = k] = 1$ if $\sigma_i = k$ and $0$, otherwise.
Here, $\theta \in \Theta = [0,1]^{N \times K}$ is a matrix of prediction probabilities, where the $i$th row is the vector of prediction probabilities $\theta_{i,0},\dots,\theta_{i,K-1}$ (with $\sum_{k \in \ZZZ_K} \theta_{i,k} = 1$) for the $i$th datapoint on classes $0,\dots,K-1$ respectively. 

With this definition, we can now describe our construction of a matrix $\theta \in [0,1]^{N \times K}$ that makes $\Klogloss$ function $2\tau $-codomain separable for any $\tau >0$ (i.e., $\Lambda_{\theta}(\Klogloss) \geq 2 \tau $). At a high level, we obtain the required codomain separability for $\Klogloss$ by splitting the loss into label dependent and independent terms, and designing the entries in the matrix $\theta$ in such a way that the expression reduces to a distinct integer in some set. This calibration allows us to control the minimum difference in the output of the cross-entropy loss on different labelings, which we can scale to the desired amount ($\geq 2\tau$) easily. The following theorem summarizes our construction.  
\begin{theorem}\label{thm:multi_class_proof}
Let $\tau > 0$. Define matrices $\vartheta, \theta \in \mathbb{R}^{N \times K}$ such that $\vartheta_{n,k} = 3^{\paran{2^{(n-1)K+k}N\tau}}$ and $\theta_{n,k} = \vartheta_{n,k} / \sum_{k=1}^K \vartheta_{n,k}$. Then, it holds that $\Klogloss$ is $2 \tau $-codomain separable using $\theta$. If $\tau = 0$, then using $\vartheta_{n,k} = 3^{\paran{2^{(n-1)K+k}}}$ ensures \zero-codomain separability.
\end{theorem}
Using Theorem~\ref{thm:multi_class_proof} and Proposition~\ref{prop:bound_on_tau_for_label_inference} implies that for these cross-entropy loss functions, the approach outlined in~\LabelInf~\eqref{eq:labelinf} succeeds in recovering all the labels when the loss scores are noised by less than $\tau$ in magnitude. We bring to the reader's attention the doubly-exponential nature of the entries used to construct the prediction vector in Theorem~\ref{thm:multi_class_proof}. This blowup is unfortunately unavoidable for constructing $\tau$-codomain separability, even for the binary case (see \citep[Theorem~7]{aggarwal2021icml}).

\noindent\textbf{Extensions of Cross-Entropy Loss.} Often in practice, when using the cross-entropy loss to assess the performance of ML models (like CNNs), it is common to apply an activation function (such as softmax or sigmoid) before the cross-entropy loss calculation. For example, a common idea in multiclass classification is to apply the softmax function (to convert any sequence of real outputs into a probability distribution) as:
\begin{align} \label{eq:softmax}
 \frac{-1}{N}\sum_{i=1}^N \sum_{k=0}^{K-1} \ \Big([\sigma_i = k] \cdot \ln \textsc{softmax}(\theta_{i,k}) \Big),
\end{align}
where $ \textsc{softmax}(\theta_{i,k}) = \exp(\theta_{i,k})/\sum_{j=1}^K \exp(\theta_{i,j})$.
To extend Theorem~\ref{thm:multi_class_proof} to this setting, we do the following: (1) Let $\theta \in [0,1]^{N \times K}$ be the matrix from Theorem~\ref{thm:multi_class_proof}. 
(2) For each $i \in [N]$, solve the following (fully specified) system of equations for $\theta'_{i,k}$'s: for all $k \in \ZZZ_K , i \in [N]$: $\exp\paran{\theta'_{i,k}}/\sum_{j=1}^K \exp\paran{\theta'_{i,j}} = \theta_{i,k}$.
Once $\theta'$ is obtained, since $\textsc{softmax}(\theta'_{i,k}) = \theta_{i,k}$, from Theorem~\ref{thm:multi_class_proof}, we get that softmax cross-entropy loss is $2\tau $-codomain separable using $\theta'$. A similar argument also works for Sigmoid cross entropy loss (Appendix~\ref{app:logloss}).



\vspace{-0.75em}
\section{One Model to Infer Them All}
\label{sec:neural_network}
\vspace{-0.75em}
The results in previous sections highlight how prediction vectors ($\theta$'s) could be generated that succeed with $\tau$-robust label inference. 
This raises an interesting question of whether the prediction vectors utilized in these label inference attacks can actually be an output of a non-trivial benign looking ML model? We answer this question in the affirmative in this section.

\noindent\textbf{Setup.} Assume a classification problem on $K$ classes.
We will design a multi-layer feed-forward neural network $\textsc{MutNet}$ using following specifications.
(1) The input to $\textsc{MutNet}$ is a vector $\mathbf{v} \in \mathbb{R}^{d_1}$ with true label $\sigma_\mathbf{v} \in \ZZZ_K$.   (2) The output of $\textsc{MutNet}$ is a vector in $ (0,1)^{d_2}$ that represents an encoding of the prediction. 
Let $f: \ZZZ_K \times (0,1)^{d_2} \to \mathbb{R}$ be a  loss function.  
The goal of the network is to ensure that on any input $\mathbf{v}$ the network generates an output $\mathbf{u}_{m+1}$ such that $f$ admits $2\tau$-codomain separability using $\mathbf{u}_{m+1}$. Consequently, for any input $\mathbf{v}$, given a noisy value of $f(\sigma_{\mathbf{v}},\mathbf{u}_{m+1})$, an adversary can use \LabelInf to infer~$\sigma_{\mathbf{v}}$.



\noindent\textbf{Our Mutator Network.} We construct a 2-layer network $\textsc{MutNet}_{\mathbf{x}}:\mathbb{R}^{d_1} \to (0,1)^{d_2}$ that can convert any real vector $\mathbf{v} \in \mathbb{R}^{d_1}$ into any desired fixed vector $\mathbf{x} \in (0,1)^{d_2}$. The transformations we use in this network are the \textsc{ReLU} and \textsc{Sigmoid} activation functions: one layer of the former and one layer of the latter. Let $M_1\in \mathbb{R}^{d_1 \times d_1}$ and $M_2 \in \mathbb{R}^{d_1 \times d_2}$ be matrices such that all entries in $M_1$ are negative (the entries in $M_2$ can be arbitrary). Let $\mathbf{x}'$ be a vector such that $x'_i = \ln x_i/(1 - x_i)$. Then, for an input vector $\mathbf{u}_1 = \mathbf{v}$, the transformations in $\textsc{MutNet}_{\mathbf{x}}$ are as follows:
$
   \mathbf{u}_2 = \textsc{ReLU}\paran{\mathbf{v}^\top M_1},\ \mathbf{u}_3 = \textsc{Sigmoid}\paran{\mathbf{u}_2^\top M_2 + \mathbf{x}'}
$, where $\textsc{ReLU}$ and $\textsc{Sigmoid}$ are applied element-wise on their input vectors. Effectively, this construction inhibits the propagation of $\mathbf{v}$ by outputting the same vector $\mathbf{u}_3 = \mathbf{x}$ always.
By setting $\mathbf{x}$ to the desired (prediction vector) $\theta$ for $2\tau$-codomain separability on $f$, we get that $\abs{f(\sigma_\mathbf{v},\textsc{MutNet}_\theta(\mathbf{v})) - \ell} \leq \tau$ (Theorem~\ref{thm:nn}).
\begin{remark}
We note that  any neural network model can be modified to carry out our attack, as an adversary can replace the top layer of any neural network model with the above construction. This highlights the versatility of our attack.
\end{remark}

\vspace{-1em}
\section{Empirical Analysis}
\label{sec:experiments}
\vspace{-0.5em}
We now present an empirical evaluation of our label inference attacks. We analyze the following datasets: 
\begin{figure*}[t]
    \centering
    \vspace{-0.5em}
    \subfloat[]{\includegraphics[width=120pt]{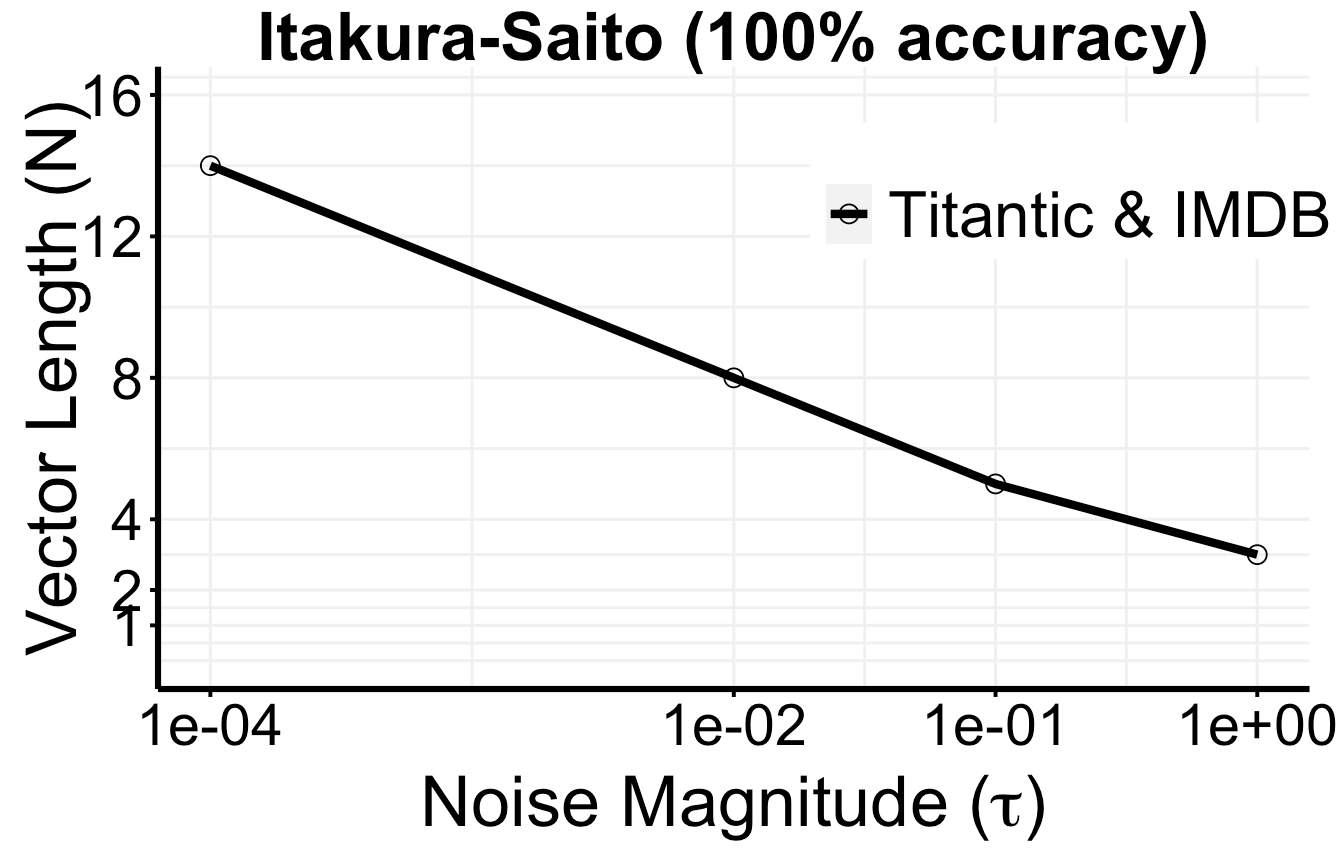}}
    \hspace{-0.1em}
    \subfloat[]{\includegraphics[width=120pt]{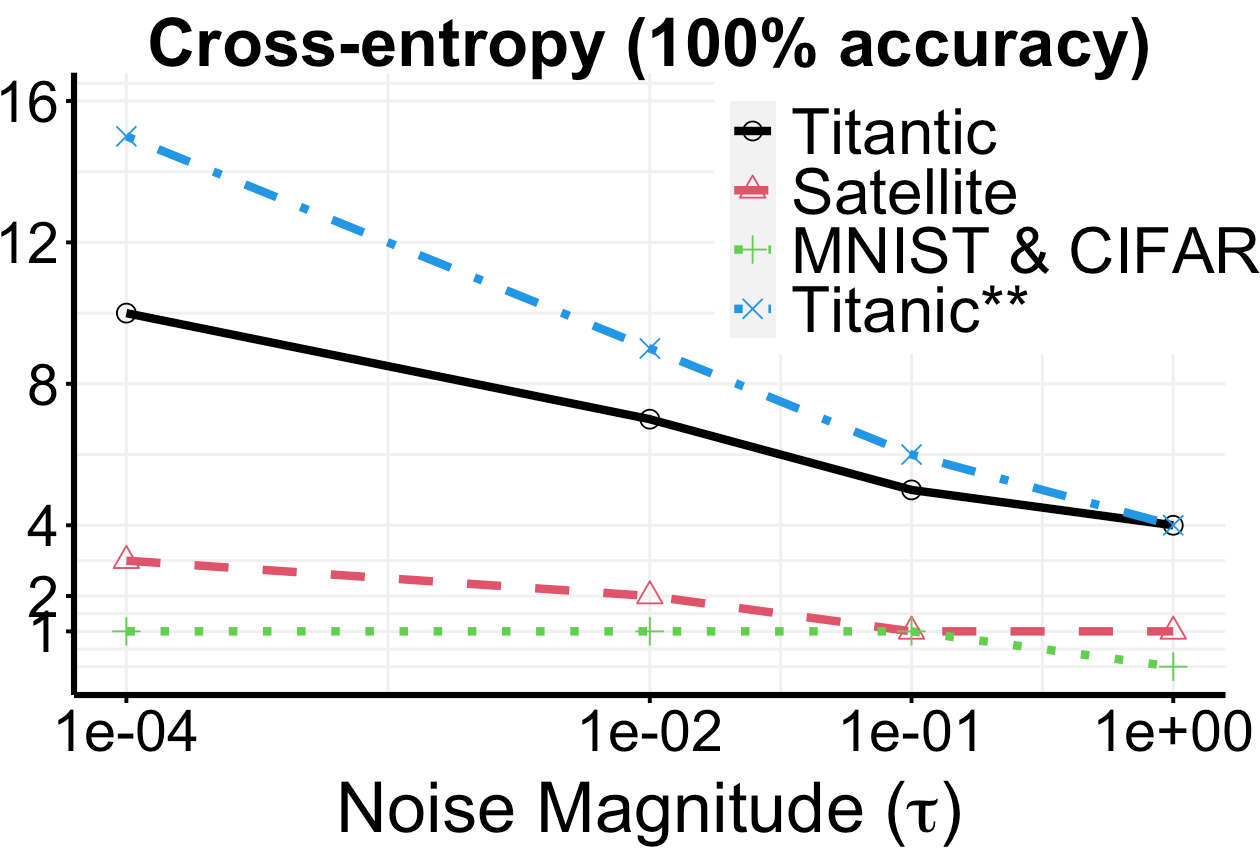}}
    \hspace{-0.1em}
    \subfloat[]{\includegraphics[width=120pt]{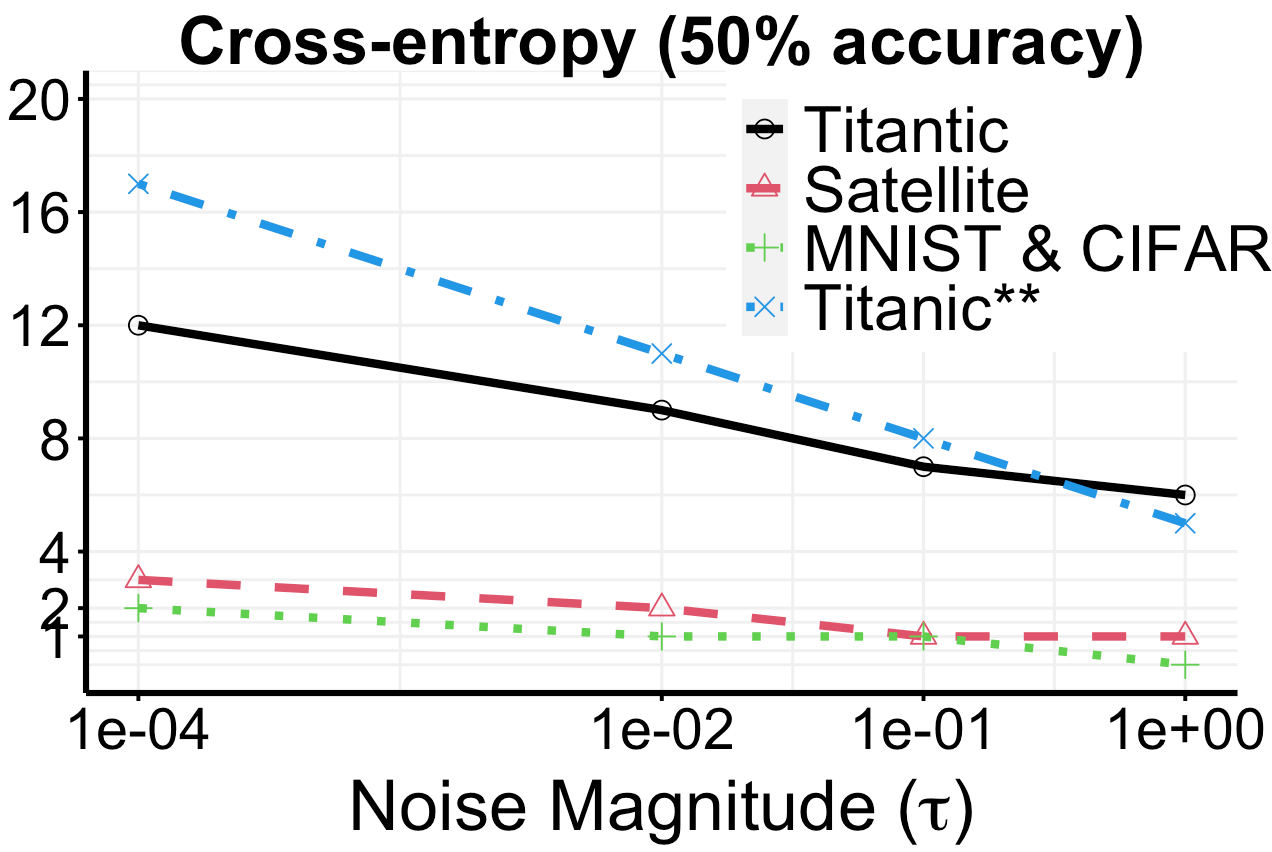}} 
    \hspace{-0.1em}
    \subfloat[]{\includegraphics[width=120pt]{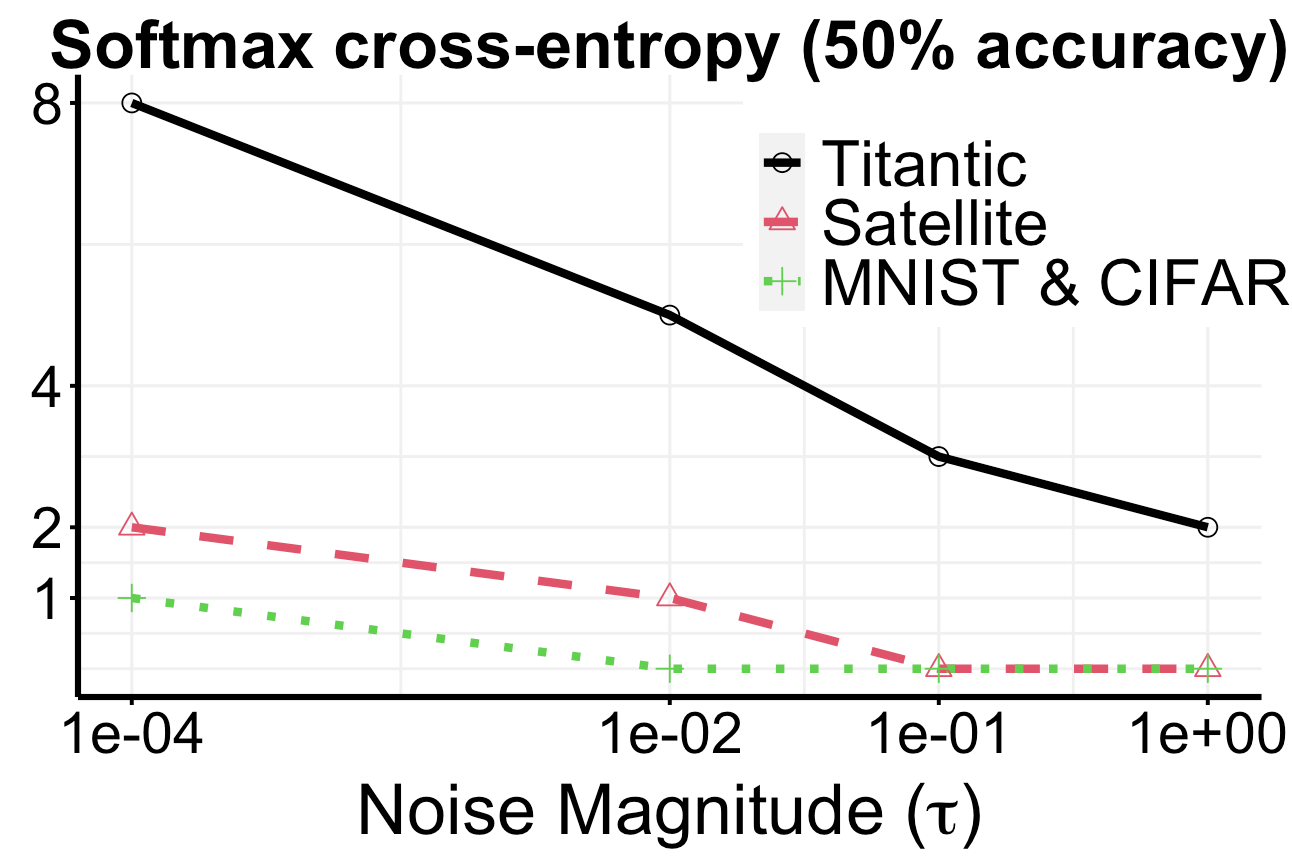}}
    \vspace*{-1ex}
    \caption{Results for single-query label inference. The Y-axis in Figures (a) and (b) represents the number of datapoints that can be recovered with 100\% accuracy, while for Figures (c) and (d), it represents the number of datapoints that can be recovered at 50\% accuracy, i.e., we recover at least this length vector accurately in at least 50\% of the 1000 runs.
    The max computation time per inference attack is roughly 10 seconds.} 
    \label{fig:singlequery}
    \vspace*{-1em}
\end{figure*}
\captionsetup[subfloat]{labelformat=empty}
\begin{figure*}[t]
    \centering
    \vspace{-0.5em}
    \subfloat[]{\includegraphics[width=120pt]{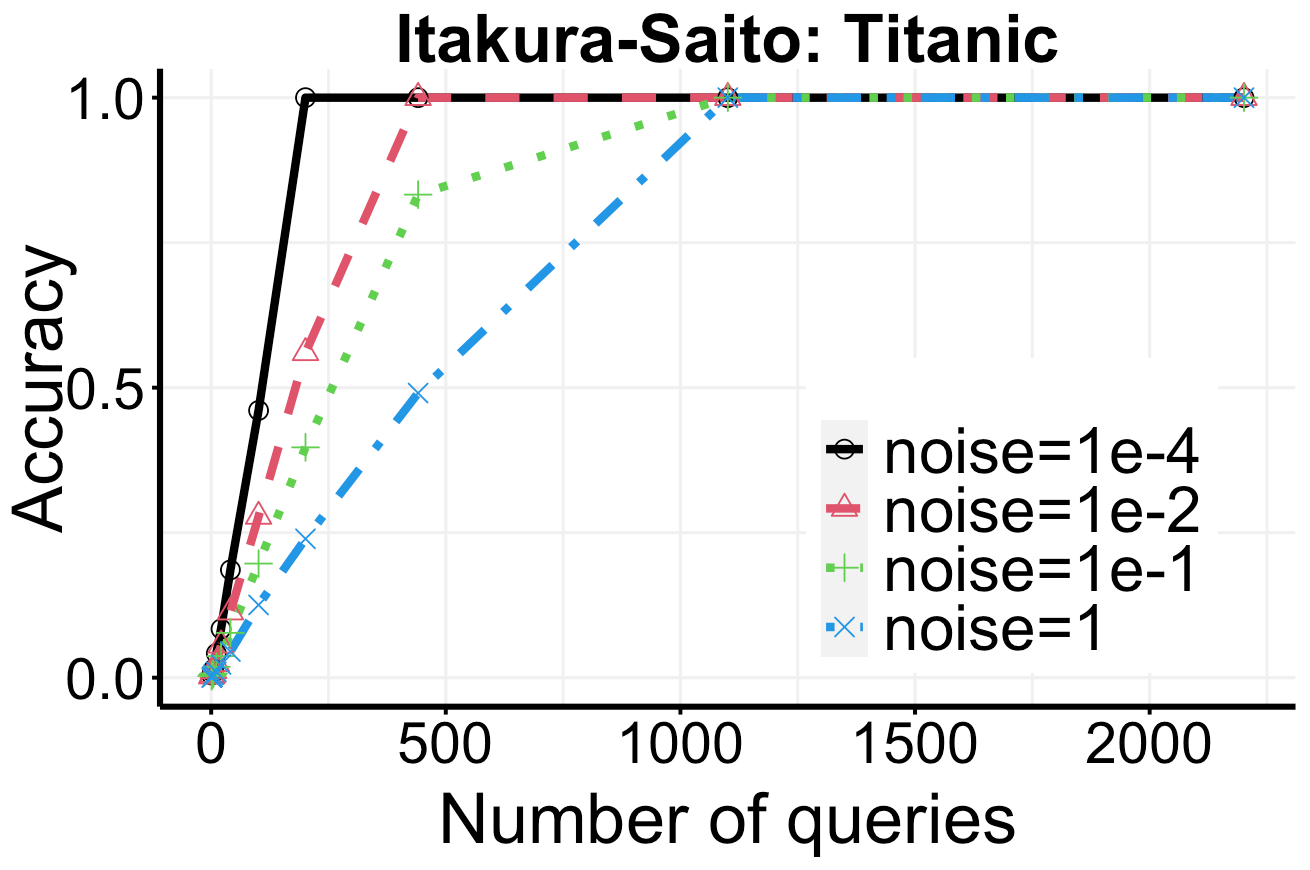}} 
     \hspace{0.1em}
     \subfloat[]{\includegraphics[width=120pt]{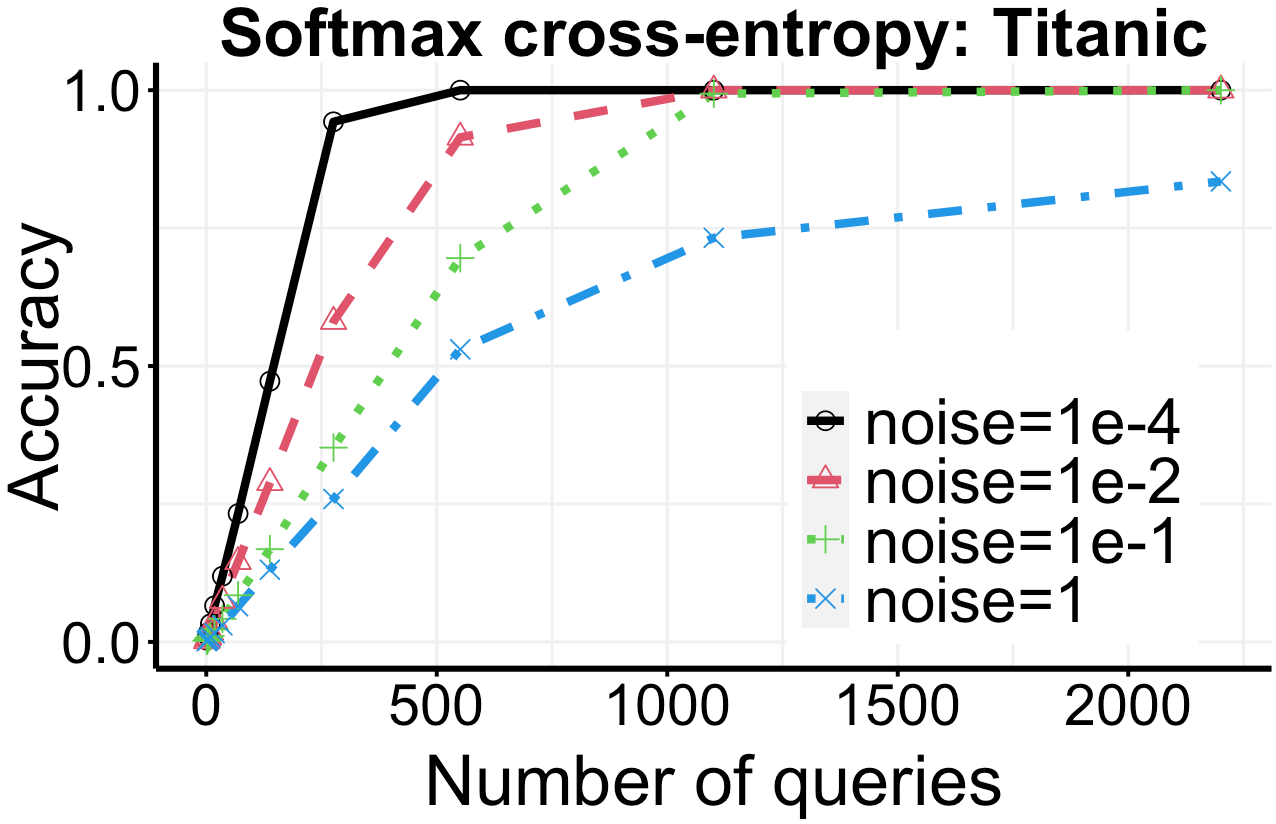}}
     \hspace{0.1em}
    \subfloat[]{\includegraphics[width=120pt]{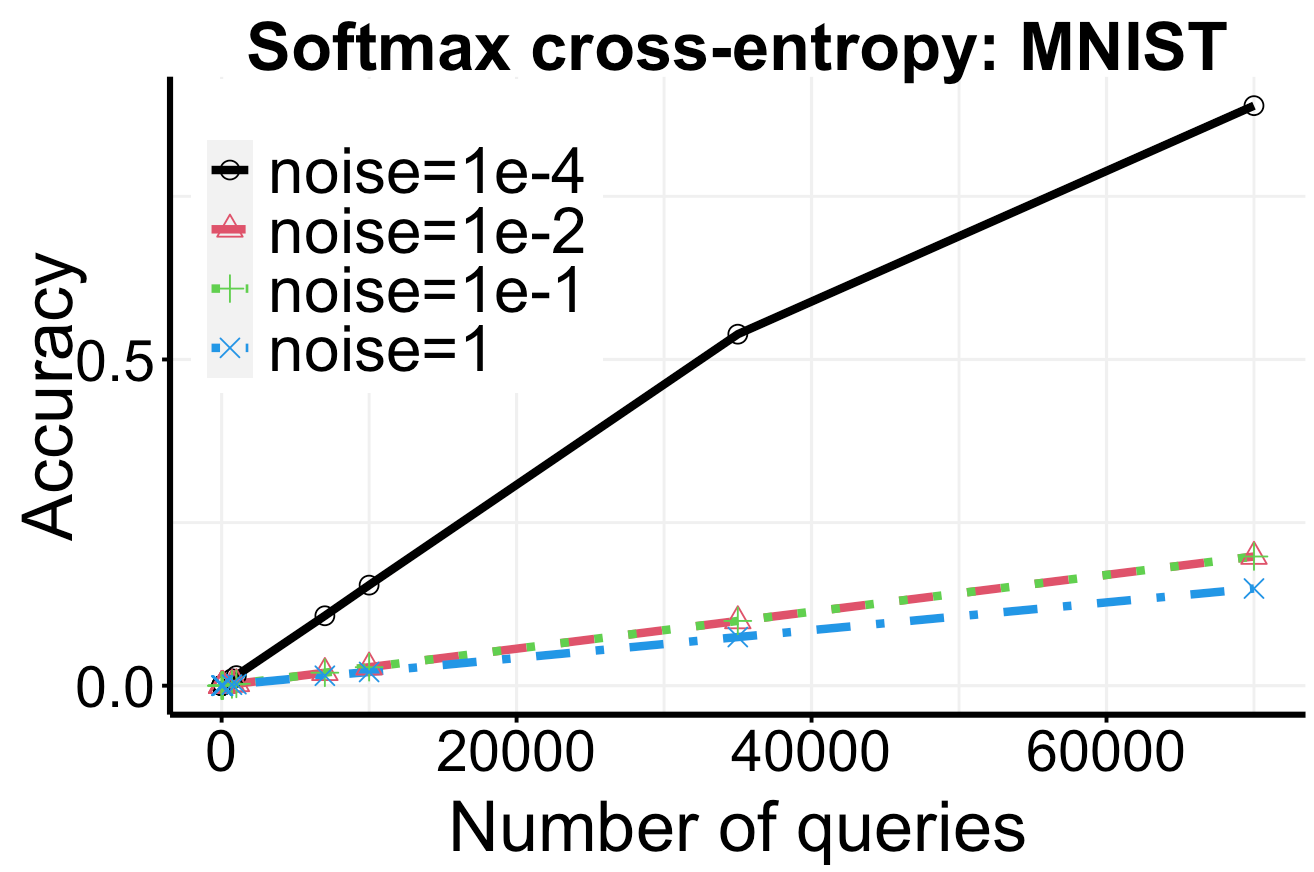}} 
    \\[-3ex]
     \vspace{-0.8em}
    \subfloat[]{\includegraphics[width=120pt]{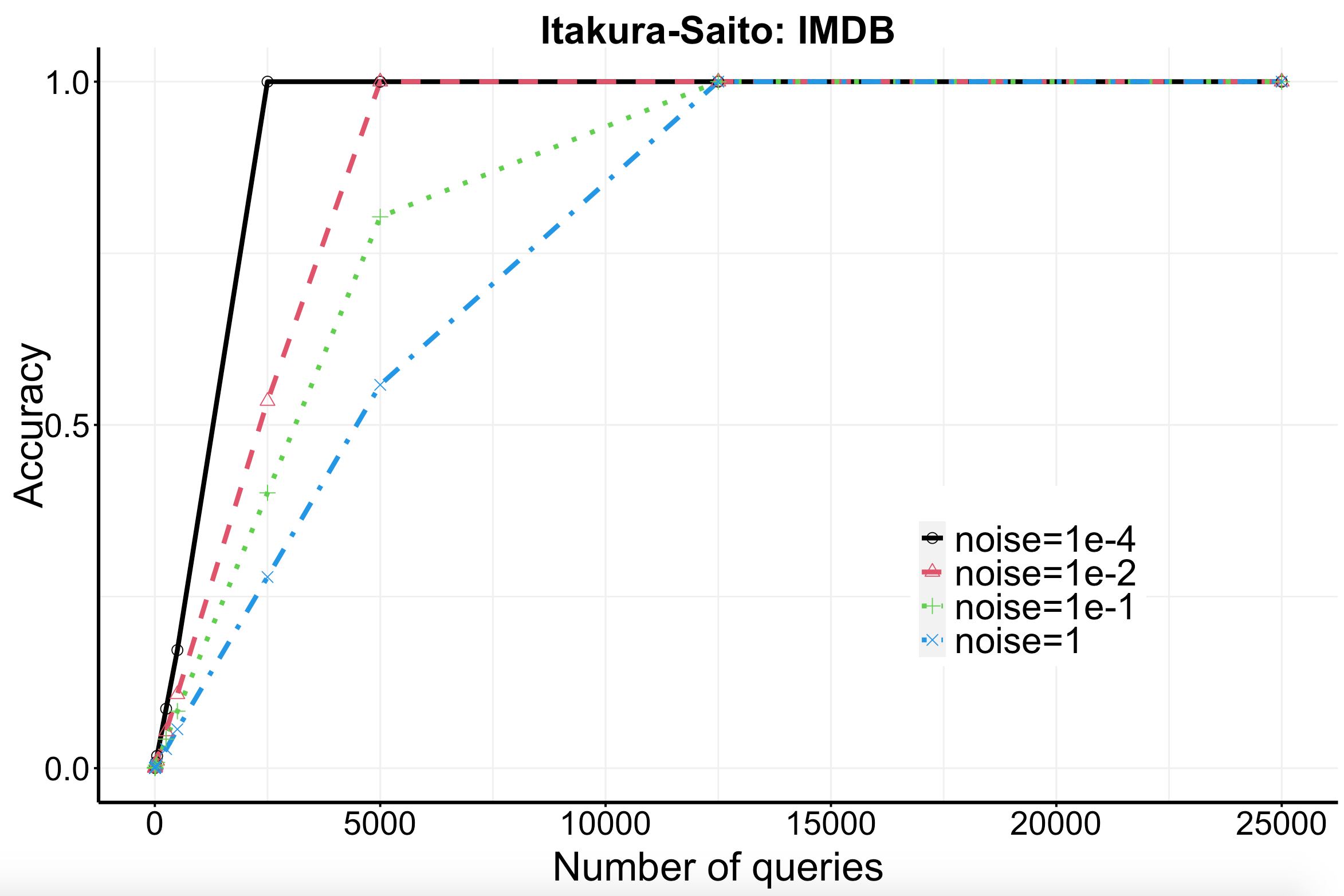}} 
    \hspace{0.1em}
    \subfloat[]{\includegraphics[width=120pt]{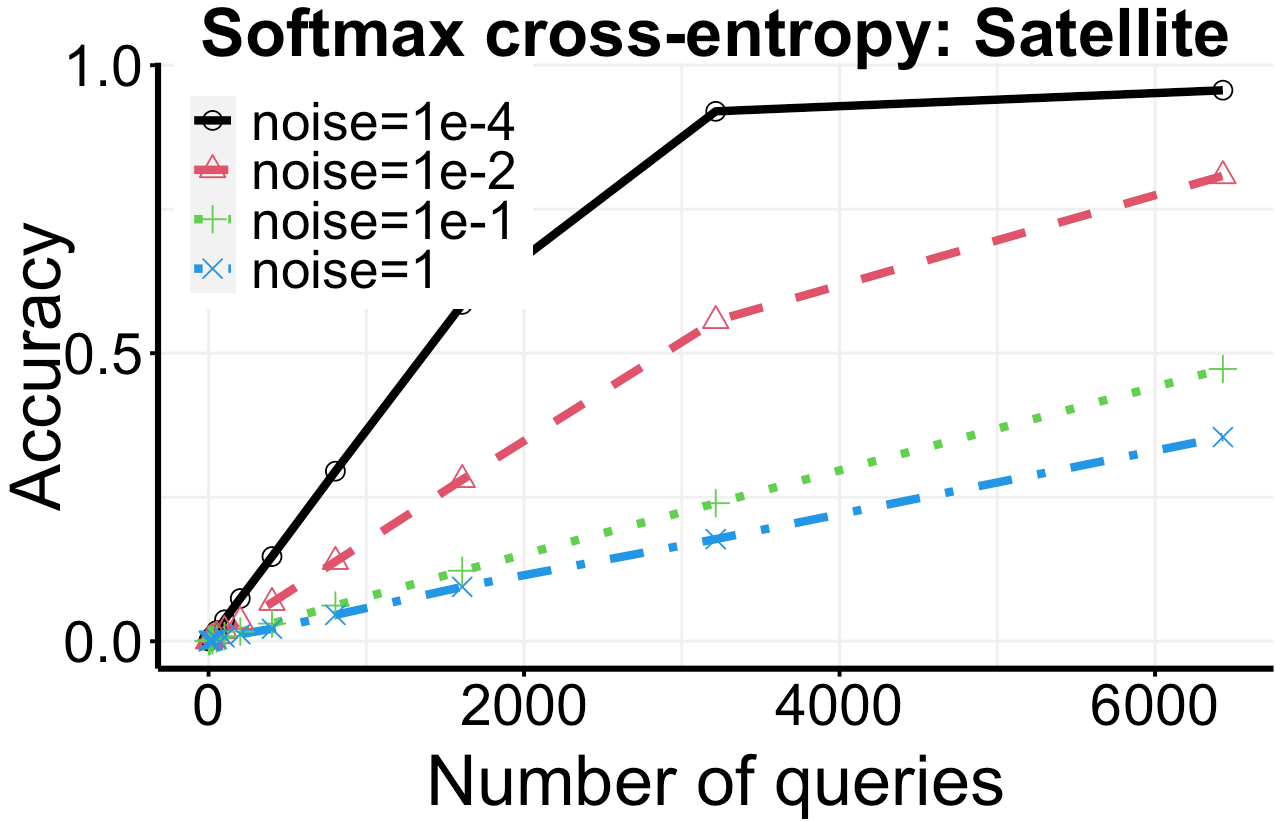}}  
    \hspace{0.1em}
    \subfloat[]{\includegraphics[width=120pt]{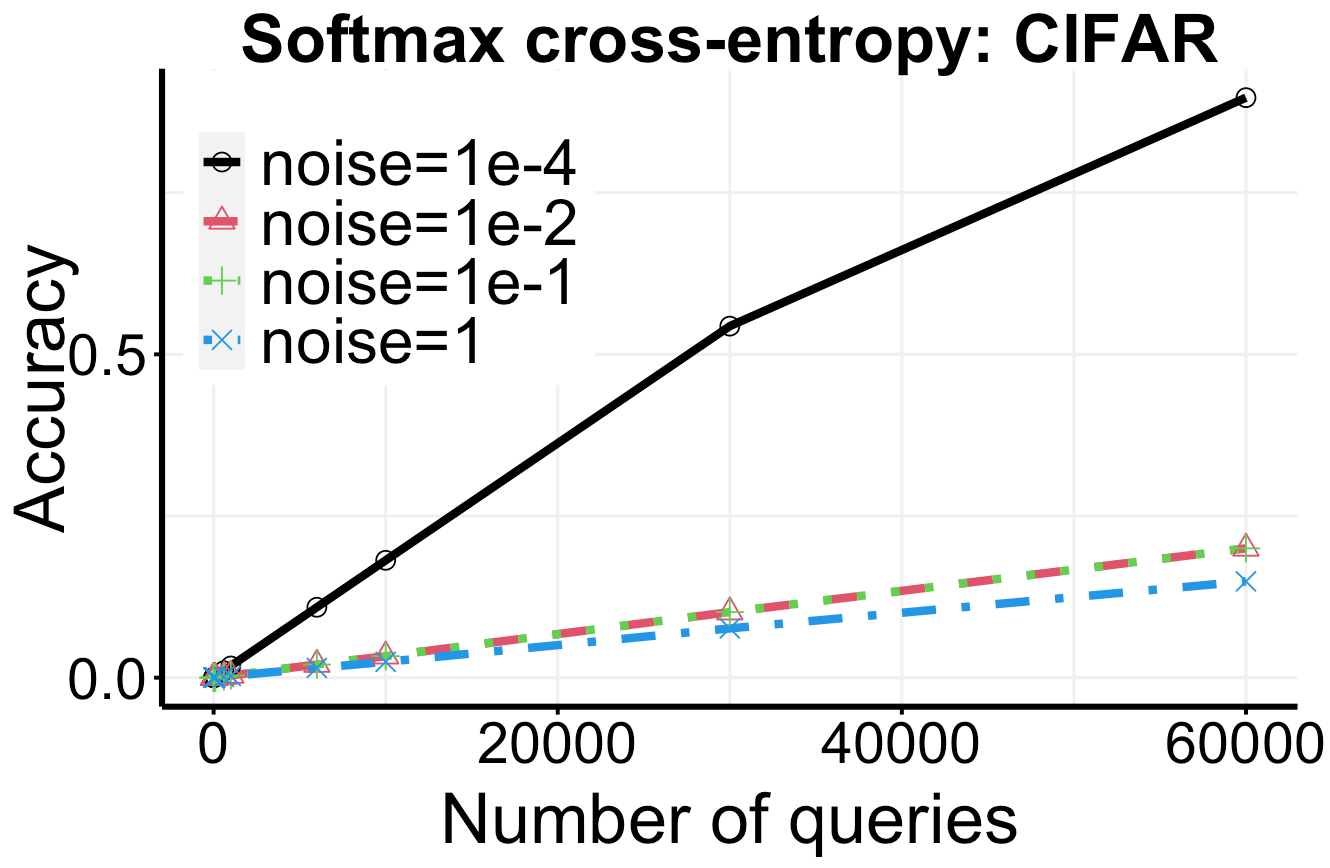}} 
    \vspace*{-4ex}
    \caption{Label reconstruction accuracy with the multi-query label inference attack. As discussed in the text, for a given loss-function, we expect similar plots for any two datasets with same number of classes, e.g., Itakura-Saito for Titanic and IMDB, Softmax cross-entropy for MNIST and CIFAR. }
    \label{fig:multiquery}
    \vspace{-1.7em}
\end{figure*}
\vspace{-1em}
\begin{CompactItemize}
    \item {\bf{Titanic}}~\citep{titanic}: a binary classification dataset (2201 rows) on the survival status of passengers.
    \item {\bf{IMDB}}~\citep{imdb}: a binary classification dataset (25000 rows) on the movie reviews.
    \item {\bf{Satellite}}~\citep{satellite}: a six-class classification dataset (6430 rows) on the satellite images of soil.
    \item {\bf{MNIST}}~\citep{mnist}: a ten-class classification dataset (70000 rows) on handwritten digits.
    \item {\bf{CIFAR}}~\citep{cifar10}: a ten-class classification dataset (60000 rows) on color images.
\end{CompactItemize}
As our attacks construct prediction vectors that are independent of the dataset contents, we ignore the dataset features in our experiments, but all the labels of the dataset are considered for the attack\footnote{All experiments are run on a 64-bit machine with 2.6GHz 6-Core processor, using the standard IEEE-754 double precision format. For reproducibility, the code is included as part of the supplementary material.}. We consider four common loss functions arising from Sections~\ref{sec:linearly_separable} and \ref{sec:logloss}, two of which are multiclass losses (multiclass cross-entropy (\Klogloss~\eqref{eq:multi_class_log_loss}) and softmax cross-entropy~\eqref{eq:softmax}), and the other two are binary losses (binary Itakura-Saito (\textsc{ISLoss}~\eqref{eq:IS}) and sigmoid cross-entropy~\eqref{eq:sigmoid_definition}). 
As a baseline, for the binary labeled dataset (Titanic) with the (plain) cross-entropy loss, we implemented the label inference attack of~\citep{aggarwal2021icml}.
We also present additional experimental results in Appendix~\ref{app:experiments}.

We start with the distinction between our experiments and the approach outlined in \LabelInf, which is presented in the arbitrary precision model.
For \Klogloss, simulating \LabelInf on a finite precision machine must be able to differentiate $\min_{i,k}\theta_{i,k}$ from 0 (or else the label inference will be ambiguous). A rough analysis (from Theorem~\ref{thm:multi_class_proof}) gives that 
$\phi=\Omega(2^{NK}N\tau)$ bits are required to make this distinction. This bound hides constant factors, but gives an idea of how arithmetic precision  plays a role in our experiments. 

Figure~\ref{fig:singlequery} shows the number of datapoints $N$ recovered by \LabelInf as we increase the noise for \textsc{ISLoss}, \Klogloss, and  softmax cross-entropy loss. We sample 1000 random sets of labels each of length $N$ from the dataset here. At error magnitude $\tau = 1$, the noise is comparable to the actual loss function values computed. We measure accuracy as the percentage of labels correctly inferred out of $N$.
Figure~\ref{fig:singlequery}(a) shows that the number of datapoints ($N$) for which 100\% label inference accuracy is achieved with \textsc{ISLoss} as we vary noise magnitude. As expected at lower noise magnitude the attacks can extract more labels correctly and this quantity decreases as the noise magnitude ($\tau$) increases. Note that as mentioned, our attacks are independent of the dataset feature set, therefore the recovery performance is same on both Titanic and IMDB, as they are both binary labeled.
Figure~\ref{fig:singlequery}(b) shows the number of datapoints ($N$) for which 100\% label inference accuracy is achieved with \Klogloss. Here, the results differ (except for MNIST and CIFAR) because the datasets have different number of classes $K$.
At the same noise magnitude, the accuracy also drops as the number of classes ($K$) increases from 2 (in Titanic) to 10 (in MNIST and CIFAR), which is also expected. These happen because of the dependence on number of datapoints and number of classes in our prediction vector construction (Theorem~\ref{thm:multi_class_proof}), which given fixed machine precision runs into representation issues. 
We note that the construction from~\citep{aggarwal2021icml} on the Titanic dataset (the only case where it is applicable), works slightly better (especially at lower $\tau$ values) than ours due to a small difference in the exponent: it holds that $\min_{i \in [N]}(-\ln \theta_{i}) = \Omega(2^N N\tau)$ in their paper, but $\Omega(2^{2N}N\tau)$ when using Theorem~\ref{thm:multi_class_proof}.



Figures~\ref{fig:singlequery}(c)-(d) show the number of datapoints on which \LabelInf achieves at least 50\% accuracy for \Klogloss and softmax cross-entropy losses respectively. We notice this number is smaller for softmax cross-entropy loss, which is also not surprising. Through a similar argument as that above for the number of bits required for cross-entropy loss, one can show that computing the softmax cross-entropy loss will require an additional $\Omega(NK + \ln(N\tau))$ bits (see the discussion in Appendix~\ref{app:experiments} for details). This additional requirement further constraints the number of labels that can be recovered with softmax cross-entropy loss.

We also examine a multi-query label inference algorithm in Figure \ref{fig:multiquery}. For these plots, we simulated \LabelInf on $M < N$ datapoints at a time (instead of all), to obtain a total of $\lceil N/M\rceil$ queries. The idea is to use Figure~\ref{fig:singlequery} to determine the maximum number of labels that can be correctly inferred in a single query for a given noise level, and then perform label inference on only those many datapoints at a time.  As expected, we observed that the accuracy increases with the number of queries: for \textsc{ISLoss} on Titanic, we achieved 100\% accuracy using $M \geq 220$ with $\tau=0.0001$, and $M \geq 1100$ with $\tau = 1$. The accuracy is again lower for the softmax case again due to reasons mentioned above. Additional experimental results on \Klogloss and sigmoid cross-entropy losses are included in Appendix~\ref{app:experiments}.

\noindent\textbf{Concluding Remarks.}
\label{sec:conclusion}
In this paper, we demonstrated how a large class of common ML loss functions can be exploited to recover the unknown test labels. Our attacks, based on tools from analytical number theory, succeed with provable guarantees, even when provided with noisy loss function values. Our investigation also highlights the role of number of queries and arithmetic precision. We also demonstrate how our attack could be carried out using a simple augmentation to any neural network model, making it look benign. 




\bibliography{ref}

\begin{thebibliography}{22}
\providecommand{\natexlab}[1]{#1}
\providecommand{\url}[1]{\texttt{#1}}
\expandafter\ifx\csname urlstyle\endcsname\relax
  \providecommand{\doi}[1]{doi: #1}\else
  \providecommand{\doi}{doi: \begingroup \urlstyle{rm}\Url}\fi

\bibitem[cif()]{cifar10}
\url{https://www.cs.toronto.edu/~kriz/cifar.html}.
\newblock Last accessed in October, 2021.

\bibitem[ils()]{ilsvrc}
\url{http://www.image-net.org/challenges/LSVRC/}.
\newblock Last accessed in October, 2021.

\bibitem[imd()]{imdb}
\url{https://www.kaggle.com/lakshmi25npathi/imdb-dataset-of-50k-movie-reviews}.
\newblock Last accessed in October, 2021.

\bibitem[kag()]{kaggle}
\url{https://www.kaggle.com}.
\newblock Last accessed in October, 2021.

\bibitem[kdd()]{kdd_cup}
\url{https://www.kdd.org/kdd-cup}.
\newblock Last accessed in October, 2021.

\bibitem[mni()]{mnist}
\url{http://yann.lecun.com/exdb/mnist}.
\newblock Last accessed in October, 2021.

\bibitem[sat()]{satellite}
\url{https://www.openml.org/d/182}.
\newblock Last accessed in October, 2021.

\bibitem[tit()]{titanic}
\url{https://www.openml.org/d/40704}.
\newblock Last accessed in October, 2021.

\bibitem[Aggarwal et~al.(2021)Aggarwal, Kasiviswanathan, Xu, Feyisetan, and
  Teissier]{aggarwal2021icml}
A.~Aggarwal, S.~Kasiviswanathan, Z.~Xu, O.~Feyisetan, and N.~Teissier.
\newblock Label inference attacks from log-loss scores.
\newblock In \emph{International Conference on Machine Learning}. PMLR, 2021.

\bibitem[Arora and Barak(2009)]{arora2009computational}
S.~Arora and B.~Barak.
\newblock \emph{Computational complexity: A Modern Approach}.
\newblock Cambridge University Press, 2009.

\bibitem[Blass and Gurevich(1982)]{blass1982unique}
A.~Blass and Y.~Gurevich.
\newblock On the unique satisfiability problem.
\newblock \emph{Information and Control}, 55\penalty0 (1-3):\penalty0 80--88,
  1982.

\bibitem[Booker et~al.(2015)Booker, Hiary, and Keating]{Booker_2015}
A.~R. Booker, G.~A. Hiary, and J.~P. Keating.
\newblock Detecting squarefree numbers.
\newblock \emph{Duke Mathematical Journal}, 164\penalty0 (2), Feb 2015.
\newblock ISSN 0012-7094.
\newblock \doi{10.1215/00127094-2856619}.
\newblock URL \url{http://dx.doi.org/10.1215/00127094-2856619}.

\bibitem[Bregman(1967)]{bregman1967relaxation}
L.~M. Bregman.
\newblock The relaxation method of finding the common point of convex sets and
  its application to the solution of problems in convex programming.
\newblock \emph{USSR computational mathematics and mathematical physics},
  7\penalty0 (3):\penalty0 200--217, 1967.

\bibitem[Brent and Zimmermann(2010)]{brent2010modern}
R.~P. Brent and P.~Zimmermann.
\newblock \emph{Modern computer arithmetic}, volume~18.
\newblock Cambridge University Press, 2010.

\bibitem[Dinur and Nissim(2003)]{dinur2003revealing}
I.~Dinur and K.~Nissim.
\newblock Revealing information while preserving privacy.
\newblock In \emph{Proceedings of the twenty-second ACM SIGMOD-SIGACT-SIGART
  symposium on Principles of database systems}, pages 202--210, 2003.

\bibitem[Dwork et~al.(2006)Dwork, McSherry, Nissim, and
  Smith]{dwork2006calibrating}
C.~Dwork, F.~McSherry, K.~Nissim, and A.~Smith.
\newblock Calibrating noise to sensitivity in private data analysis.
\newblock In \emph{Theory of cryptography conference}, pages 265--284.
  Springer, 2006.

\bibitem[Knuth(2014)]{knuth2014art}
D.~E. Knuth.
\newblock \emph{Art of computer programming, volume 2: Seminumerical
  algorithms}.
\newblock Addison-Wesley Professional, 2014.

\bibitem[Liu and Belkin(2016)]{liu2016clustering}
C.~Liu and M.~Belkin.
\newblock Clustering with bregman divergences: An asymptotic analysis.
\newblock In \emph{Proceedings of the 30th International Conference on Neural
  Information Processing Systems}, pages 2351--2359. Citeseer, 2016.

\bibitem[O'Bryant(2004)]{o2004complete}
K.~O'Bryant.
\newblock A complete annotated bibliography of work related to sidon sequences.
\newblock \emph{arXiv preprint math/0407117}, 2004.

\bibitem[Papadimitriou and Yannakakis(1982)]{papadimitriou1982complexity}
C.~H. Papadimitriou and M.~Yannakakis.
\newblock The complexity of facets (and some facets of complexity).
\newblock In \emph{Proceedings of the fourteenth annual ACM symposium on Theory
  of computing}, pages 255--260, 1982.

\bibitem[Robinson and Bernstein(1967)]{robinson1967class}
J.~p. Robinson and A.~Bernstein.
\newblock A class of binary recurrent codes with limited error propagation.
\newblock \emph{IEEE Transactions on Information Theory}, 13\penalty0
  (1):\penalty0 106--113, 1967.

\bibitem[Whitehill(2018)]{whitehill2018climbing}
J.~Whitehill.
\newblock Climbing the kaggle leaderboard by exploiting the log-loss oracle.
\newblock In \emph{Workshops at the Thirty-Second AAAI Conference on Artificial
  Intelligence}, 2018.

\end{thebibliography}
\appendix
\onecolumn
\aistatstitle{Appendix: Reconstructing Test Labels from Noisy Loss Functions}
\section{Computational Hardness of Codomain Separability}\label{app:hardness_results}
We show that that determining the codomain separability is co-NP-hard (see~\citep{arora2009computational} for a definition of co-NP-hard). We establish the result for the weaker notion of \zero-codomain separability (Definition~\ref{def:codomain_separability}) and restrict ourselves to functions of the form $f: \{0,1\}^N \times \ZZZ_{+}^N \to \mathbb{R}$, where the decision problem is to determine whether there exists a $\theta \in \ZZZ_{+}^N$ such that $f$ is \zero-codomain separable using $\theta$. We denote this decision problem by \textsc{Codomain-Sep}. Note that this automatically implies that determining the $\tau$-codomain separability of such functions is also co-NP-hard.

Let $\Phi(x_1,\dots,x_N)$ be a Boolean formula over $N$ variables in the 3-CNF form $\Phi(x_1,\dots,x_N) = C_1 \land \dots \land C_N$, where $C_i$ are disjunctive clauses containing 3 literals each. We say that $\Phi(x_1,\dots,x_N)$ is an \emph{almost tautology} if there are at least $2^N-1$ satisfying assignments for $\Phi$. In other words, there is at most one assignment of the variables that makes $\Phi(x_1,\dots,x_N)$ false. Define \textsc{Almost-Tautology} to be the problem of determining if a given Boolean formula is an almost tautology. 

\begin{lemma}
\label{lem:almost_taut_conpcomplete}
\textsc{Almost-Tautology} is co-NP-complete.
\end{lemma}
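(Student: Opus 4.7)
The plan splits into two parts: showing \textsc{Almost-Tautology} is in co-NP, and showing it is co-NP-hard.

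For membership in co-NP, I observe that the complementary problem, ``$\Phi$ admits at least two distinct falsifying assignments'', sits in NP: a non-deterministic verifier guesses a pair $a_1, a_2 \in \{0,1\}^N$ with $a_1 \ne a_2$ and confirms in polynomial time that $\Phi(a_1) = \Phi(a_2) = 0$ by directly evaluating $\Phi$. Consequently \textsc{Almost-Tautology} itself lies in co-NP.

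For co-NP-hardness, I plan to reduce from \textsc{Tautology} (which is co-NP-complete for general Boolean formulas) via a variable-doubling trick. Given any input $\Phi(x_1,\dots,x_N)$, let $\Phi'$ be the same syntactic formula regarded as a function of $N+1$ variables $x_1,\dots,x_N,y$, where $y$ is a fresh symbol not appearing in $\Phi$. Because the value of $\Phi'$ does not depend on $y$, each falsifying $a \in \{0,1\}^N$ of $\Phi$ lifts to exactly the two falsifying assignments $(a,0)$ and $(a,1)$ of $\Phi'$, so the number of falsifying assignments of $\Phi'$ is precisely twice that of $\Phi$. Hence $\Phi$ is a tautology iff $\Phi'$ has zero falsifying assignments iff $\Phi'$ has at most one (since any nonzero falsifying count is at least two) iff $\Phi'$ is an almost-tautology. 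The transformation is polynomial time, establishing co-NP-hardness; combined with the upper bound this yields co-NP-completeness.

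The only delicate point I would flag is the syntactic class assumed by \textsc{Almost-Tautology}. If one reads the input as literally a 3-CNF (as suggested by the preceding definition), then \textsc{Tautology} on that class is polynomial-time decidable by inspecting each clause, so the doubling reduction does not establish hardness on its own; in that narrow reading one instead reduces from 3-CNF \textsc{Unsatisfiability} and must pair the doubling idea with a local encoding that keeps $\Phi'$ inside 3-CNF while still (at least) doubling the falsifying-count. Under the natural interpretation that the input is a general Boolean formula (or CNF with unbounded clause width), the doubling trick is self-contained and the argument above is complete.
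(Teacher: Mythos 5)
Your membership argument coincides with the paper's: a witness that $\Phi$ is \emph{not} an almost-tautology is a pair of distinct falsifying assignments, verifiable in polynomial time, so the complement is in NP. Your hardness argument, however, takes a genuinely different route. The paper splits the yes-instances into two cases --- $\Phi$ is a tautology, or $\Phi$ has exactly one falsifying assignment --- and appeals to the known hardness of \textsc{Tautology} for the first and of Unique-SAT for the second; no explicit many-one reduction to \textsc{Almost-Tautology} is actually constructed, and observing that each case is ``related to'' a hard problem does not by itself establish hardness of the combined problem. You instead give a self-contained Karp reduction from \textsc{Tautology}: padding with a dummy variable doubles the falsifying count, so the count for $\Phi'$ is either $0$ or at least $2$, and hence $\Phi'$ is an almost-tautology iff $\Phi$ is a tautology. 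This is the more rigorous argument, and it buys a clean reduction where the paper offers only a case analysis. The caveat you flag at the end is real, and it in fact bites the paper's own proof harder than yours: the paper defines $\Phi$ to be a 3-CNF, and \textsc{Tautology} restricted to CNF is polynomial-time decidable (a CNF is a tautology iff every clause contains a complementary pair of literals), so the paper's appeal to \textsc{Tautology}-hardness is unavailable on that input class. Worse, for a 3-CNF over $N \geq 4$ variables any non-tautological clause already contributes at least $2^{N-3} \geq 2$ falsifying assignments, so \textsc{Almost-Tautology} on 3-CNF inputs is itself decidable in polynomial time and cannot be co-NP-hard unless P $=$ co-NP; consequently your proposed repair via 3-CNF \textsc{Unsatisfiability} cannot succeed either. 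Under the reading that the input is a general Boolean formula (where \textsc{Tautology} is co-NP-complete), your reduction is complete and correct.
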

\begin{proof}
We first show that \textsc{Almost-Tautology} is in co-NP, i.e., any certificate that $\Phi$ is not an almost-tautology can be checked in polynomial time. To see this, observe that such a certificate must contain at least two distinct assignments for which $\Phi$ is not satisfied, which can be verified efficiently. 

To show that \textsc{Almost-Tautology} is co-NP Hard, there are two cases: either $\Phi$ is a tautology, or there is exactly one assignment that does not satisfy $\Phi$. Deciding the former is co-NP-hard~\citep{arora2009computational}. For the latter, consider the logical negation $\lnot \Phi(x_1,\dots,x_N)$. If $\Phi$ is an almost tautology (but not a tautology), then there is exactly one satisfying assignment for $\lnot \Phi$. This is the same as the Unique-SAT problem, in which we determine if a Boolean formula has a unique solution. This problem is also co-NP-hard~\citep{papadimitriou1982complexity,blass1982unique}, and hence, \textsc{Almost-Tautology} is co-NP-hard.
\end{proof}

We start with an arbitrary Boolean formula $\Phi(x_1,\dots,x_N)$. For $\theta \in \mathbb{Z}_+^N$, define the following function: $$f_{\Phi}(\sigma,\theta) = \hat{C}_1^{\theta_1}\cdots \hat{C}_N^{\theta_N} p_1^{\sigma_1}\cdots p_N^{\sigma_N},$$where $p_1,\dots,p_N \geq 5$ are distinct prime numbers, and $\hat{C}_i = C_i(\sigma)$ in the additive form (i.e., mapping all variables to $\{0,1\}^N$ by representing all negative literals $\bar x_i$ as $1 - x_i$, and converting all disjunctions to addition). For example, if $C_1 = (x_3 \lor \bar x_4 \lor x_6)$, then $\hat{C}_1 = C_1(\sigma) = \sigma_3 + (1 - \sigma_4) + \sigma_6$. Another example is as follows: assume $\Phi(x_1,x_2,x_3) = (\bar x_1 \lor x_2 \lor x_3) \land (x_1 \lor x_2 \lor \bar x_3)$. Then, first repeat the third clause in $\Phi$ to make the number of clauses the same as the number of variables to obtain $\Phi(x_1,x_2,x_3) = (\bar x_1 \lor x_2 \lor x_3) \land (x_1 \lor x_2 \lor \bar x_3) \land (x_1 \lor x_2 \lor \bar x_3)$. Now, the corresponding function can be written as follows:
\begin{align}
\label{eq:conpcomplete}
    f_\Phi(\sigma,\theta) = 5^{\sigma_1}7^{\sigma_2}11^{\sigma_3}(1-\sigma_1 + \sigma_2 + \sigma_3)^{\theta_1} (1 + \sigma_1 + \sigma_2 - \sigma_3)^{\theta_2 + \theta_3}.
\end{align}
We now prove our hardness result for \zero-codomain separability using this reduction.

\begin{lemma}
\label{lem:conp_helper}
For any $\theta \in \mathbb{Z}_+^N$ and distinct $\sigma_1,\sigma_2 \in \{0,1\}^N$, it holds that $f(\sigma_1,\theta) \neq f(\sigma_2,\theta)$ if and only if at least one of $\sigma_1$ or $\sigma_2$ satisfies $\Phi$. Here, we abuse the notation to represent the Booleans True and False by $1$ and $0$, respectively.
\end{lemma}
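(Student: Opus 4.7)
\textbf{Proof plan for Lemma~\ref{lem:conp_helper}.} The plan is to analyze the value of $f_\Phi(\sigma,\theta)$ based on whether $\sigma$ satisfies $\Phi$, and then handle the three possible cases for the pair $(\sigma_1,\sigma_2)$.

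First, I would record two structural observations about $\hat{C}_i = C_i(\sigma)$. Since each clause $C_i$ is a disjunction of three literals, $\hat{C}_i$ is a sum of three terms drawn from $\{0,1\}$, so $\hat{C}_i \in \{0,1,2,3\}$. Moreover, $\hat{C}_i = 0$ holds if and only if all three literals of $C_i$ evaluate to false under $\sigma$, i.e., if and only if $\sigma$ falsifies $C_i$. Consequently, since $\theta_i \in \mathbb{Z}_+$, the product $\hat{C}_1^{\theta_1}\cdots\hat{C}_N^{\theta_N}$ is zero exactly when $\sigma$ falsifies at least one clause, i.e., exactly when $\sigma$ does \emph{not} satisfy $\Phi$. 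Combining with the strictly positive factor $p_1^{\sigma_1}\cdots p_N^{\sigma_N}$, this gives the key characterization: $f_\Phi(\sigma,\theta) = 0$ iff $\sigma$ does not satisfy $\Phi$, and $f_\Phi(\sigma,\theta) \in \mathbb{Z}_+$ otherwise.

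Next, I would split into the three cases for $\sigma_1,\sigma_2$ with $\sigma_1 \neq \sigma_2$. If neither satisfies $\Phi$, then by the above both values are $0$, so $f_\Phi(\sigma_1,\theta) = f_\Phi(\sigma_2,\theta)$; this gives the ``only if'' direction (contrapositive). If exactly one, say $\sigma_1$, satisfies $\Phi$, then $f_\Phi(\sigma_1,\theta) > 0 = f_\Phi(\sigma_2,\theta)$, yielding distinctness. The remaining case -- both $\sigma_1$ and $\sigma_2$ satisfy $\Phi$ -- is the main point. Here I would invoke unique prime factorization: each $\hat{C}_j \in \{1,2,3\}$ contributes only the primes $2$ and $3$ to the factorization, whereas each $p_i \geq 5$ is coprime to both $2$ and $3$. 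Therefore the exponent of $p_i$ in $f_\Phi(\sigma,\theta)$ is exactly $\sigma(i)$ for every $i \in [N]$. Since $\sigma_1 \neq \sigma_2$, there exists $i$ with $\sigma_1(i) \neq \sigma_2(i)$, so the exponent of $p_i$ differs between $f_\Phi(\sigma_1,\theta)$ and $f_\Phi(\sigma_2,\theta)$, and by unique factorization the two values are distinct.

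There is no substantive obstacle here; the reduction was set up precisely so that (a) the product of clause values acts as a $\{0,\text{positive}\}$-indicator of satisfaction, and (b) the prime tags $p_i \geq 5$ allow the vector $\sigma$ to be read off uniquely from the factorization once satisfaction is guaranteed. The only care needed is to note that $\hat{C}_j$ cannot introduce a factor of any $p_i \geq 5$ (guaranteed by $\hat{C}_j \leq 3$) and that $\theta_i \geq 1$ (so that an unsatisfied clause genuinely zeroes the product). Combined with Lemma~\ref{lem:almost_taut_conpcomplete}, this lemma will then immediately yield the co-NP-hardness of \textsc{Codomain-Separability}, since $f_\Phi$ fails to be \zero-codomain separable using \emph{any} $\theta \in \mathbb{Z}_+^N$ iff there exist two distinct non-satisfying assignments, i.e., iff $\Phi$ is not an almost-tautology.
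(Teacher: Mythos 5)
Your proof is correct and follows essentially the same route as the paper's: the product of clause values acts as a zero/nonzero indicator of satisfaction, and when at least one assignment satisfies $\Phi$ the values are separated either by the zero/positive dichotomy or by unique prime factorization. You are somewhat more careful than the paper in one spot worth keeping: you explicitly note that $\hat{C}_j \leq 3$ while $p_i \geq 5$, so the clause factors cannot perturb the exponent of any $p_i$ --- a fact the paper's proof uses implicitly without stating.
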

\begin{proof}
Observe that for any distinct $\sigma_1,\sigma_2 \in \{0,1\}^N$, we have $f(\sigma_1,\theta) = f(\sigma_2,\theta)$ only when there is some set of clauses $\hat{C}_i$ and $\hat{C}_j$ such that $\hat{C}_i(\sigma_1) = \hat{C}_i(\sigma_2) = 0$, i.e. they are unsatisfied by $\sigma_1$ and $\sigma_2$, respectively. This is because the product of the primes satisfies $\prod_{i=1}^N p_i^{\sigma_1(i)} \neq \prod_{j=1}^N p_j^{\sigma_2(j)}$ (since both products differ in at least one prime -- the one corresponding to the index of the element at which $\sigma_1$ and $\sigma_2$ differ). The lemma statement then follows from the contrapositive of this result.  
\end{proof}

\begin{theorem} \label{thm:conp}
\textsc{Codomain-Sep} is co-NP-hard.
\end{theorem}
\begin{proof}
We prove this by demonstrating a Karp reduction from the \textsc{Almost-Tautology} problem, which we showed is co-NP-complete in Lemma~\ref{lem:almost_taut_conpcomplete}. Let $\Phi(x_1,\dots,x_N)$ be an arbitrary Boolean formula and $f_\Phi(\sigma,\theta)$ be the corresponding function from~\eqref{eq:conpcomplete}. Now, for $\Phi$ to be an almost tautology, there can at most one unsatisfying solution: (1) if there are no unsatisfying solutions, then for any Boolean assignment of $x_1,\dots,x_N$, all clauses in $\Phi(x_1,\dots,x_N)$ must be satisfied and hence, from Lemma~\ref{lem:conp_helper}, the value of $f_\Phi(\sigma,\theta)$ must also be distinct for all $\sigma$, implying that $f_\Phi$ is \zero-codomain separable; (2) if there is an unsatisfying assignment, then the value of $f_\Phi(\sigma,\theta)$ at this assignment must be zero, and it is non-zero at all other assignments. This also implies that $f_\Phi$ is \zero-codomain separable. Lastly, let $\theta$ be a vector with respect to which $f_\Phi$ is \zero-codomain separable. This immediately implies that $\Phi(x_1,\dots,x_N)$ must be an almost tautology --- if not, then either one of (1) or (2) must be false since there will be at least two unsatisfying assignments in this case, implying that the function $f_\Phi$ will be zero on at least two inputs.
\end{proof}

\section{Missing Details from Section~\ref{sec:prelim}} \label{app:prelim}
The following proposition states the connection between $\tau$-robust label inference and $\tau$-codomain separability. This connection, for the specific case of binary cross-entropy loss, was also noted by~\citep{aggarwal2021icml}.

\noindent\textbf{Restatement of Proposition~\ref{prop:bound_on_tau_for_label_inference}.} \emph{A function $f$ admits $\tau$-robust label inference using $\theta \in \Theta$ if and only if $\Lambda_{\theta}(f) \geq 2\tau$.}
\begin{proof}
We start with one direction and show that if we can do label inference (there exists algorithm $\mathcal{A}$ in Definition~\ref{def:robust_label_inference}), then $\Lambda_{\theta}(f) \ge 2\tau$ must hold. We prove this by contradiction. The idea is to construct a score from which a unique labeling cannot be unambiguously derived. Without loss of generality, let $\sigma_1, \sigma_2$ be two distinct labelings for which $0 < f(\sigma_2,\theta) - f(\sigma_1,\theta) < 2\tau$. It follows that $f(\sigma_2,\theta) - \tau < f(\sigma_1,\theta) + \tau$. Now, let $\ell = \paran{f(\sigma_1,\theta)+f(\sigma_2,\theta)}/2$ and $x = \ell -  f(\sigma_1,\theta)$.
Clearly, $x < \tau$. Similarly, $f(\sigma_2,\theta) - \ell < \tau$. In other words, $\ell$ could be generated by a $\tau$ magnitude perturbation to both $f(\sigma_1,\theta)$ and $f(\sigma_2,\theta)$ (with $\sigma_1 \neq \sigma_2$). Therefore, there can exists no algorithm $\mathcal{A}$ that, given just $\ell$, can recover whether the true label is $\sigma_1$ or $\sigma_2$ (i.e., no $\mathcal{A}$ can succeed with $\tau$-robust label inference). This is a contradiction, therefore, $\Lambda_{\theta}(f) = \min_{\sigma_1,\sigma_2}\abs{f(\sigma_2,\theta) - f(\sigma_1,\theta)} \geq 2\tau$.

For the other direction, let $\ell$ be as given in Definition~\ref{def:robust_label_inference} with $\abs{f(\sigma^\star,\theta) - \ell} < \tau$. By triangle inequality it follows that if $\Lambda_{\theta}(f) \geq 2\tau$, then $\abs{\ell - f(\sigma^\star,\theta)} < \min_{\sigma \in \ZZZ_K^N \setminus \sigma^\star} \abs{\ell-f(\sigma,\theta)}$ (i.e., addition of any noise less than $\tau$ in  magnitude will maintain the invariant that the noised score is closest to the score on the true labeling).
Hence, solving  $\arg\min_{\sigma\in \ZZZ_K^N} \abs{f(\sigma,\theta) -\ell}$ will return the true label $\sigma^\star$.
\end{proof}

\subsection{Separability in Arbitrary Precision vs.\ Finite Floating-Point Precision} 
\label{app:apa_fpa}
One important consideration in our label inference attacks is the precision of arithmetic that is required at the adversary. In this context, there are two natural models of arithmetic computation: a) arbitrary precision and b) finite floating-point precision. Arbitrary precision arithmetic model allows precise arithmetic results even with very large numbers. In the floating-point precision model, the arithmetic is constrained by limited precision. An example of the floating-point precision model is the commonly used IEEE-754 double precision standard. Designing algorithms for standard arithmetic in both these models have been studied extensively~\citep{knuth2014art,brent2010modern}. We refer to the arbitrary precision arithmetic model as $\mathsf{APA}$ and floating point  arithmetic model with $\phi$ bits as $\mathsf{FPA}(\phi)$. For ease of discussion, we assume that in the $\mathsf{FPA}(\phi)$ model, we have $1$ bit for sign, $(\phi-1)/2$ bits for the exponent and $(\phi-1)/2$ bits for the fractional part (mantissa). This assumption can be relaxed to accommodate $\phi_a > 0$ bits for the exponent and $\phi_b > 0$ bits for the fractional part where $\phi_a + \phi_b = \phi-1$. Furthermore, for any loss function $f$ in the $\mathsf{APA}$ model, we denote by $f_\phi$ the algorithm that computes $f$ on a machine with an instruction set for performing computations within these $\phi$ bits of precision.

We begin by observing that Definition~\ref{def:codomain_separability} does not take into account fixed arithmetic precision (i.e., deals with the case where we have arbitrary precision arithmetic). Finite floating-point precision has an effect on separability, as bits of precision places a bound on the resolution. For example, even if $f(\sigma_1,\theta) \neq f(\sigma_2,\theta)$ in the $\mathsf{APA}$ model, with only $\phi$ bits of precision, this difference may not be observable. This leads to notion of separability in the $\mathsf{FPA}(\phi)$ model.

\begin{definition} [$\tau$-codomain Separability in the $\textsc{FPA}(\phi)$ model]
\label{def:FPA_codomain_separability}
Let $f: \ZZZ_K^N \times \Theta \to \mathbb{R}$ be a function. Let $f_{\phi}$ be the representation of $f$ in the $\textsc{FPA}(\phi)$ model. For $\theta \in \Theta$, define $\Lambda_{\theta}(f_\phi) := \min_{\sigma_1,\sigma_2 \in \ZZZ_K^N} \abs{f_\phi(\sigma_1,\theta) - f_\phi(\sigma_2,\theta)}$ to be the minimum difference in the function output keeping $\theta$ fixed. For a fixed $\tau > 0$, we say that $f$ admits \emph{$\tau$-codomain separability} using $\theta$ in the $\textsc{FPA}(\phi)$ model if $\Lambda_{\theta}(f_\phi) \geq \tau$.

In particular, we say that $f$ admits \emph{\zero-codomain separability} using $\theta$ in the $\textsc{FPA}(\phi)$ model if there exists any $\tau > 0$ such that $\Lambda_{\theta}(f_\phi) \geq \tau$. 
\end{definition}

Let $f_\phi$ be the representation of $f$ in the $\mathsf{FPA}(\phi)$ model. Informally, representation in the $\mathsf{FPA}(\phi)$ model implies computing $f$ within the granularity defined by $\phi$, and reporting underflow/overflow when the results are out of range. It is easy to show that if $f_\phi$ admits $\tau$-codomain separability using $\theta$, then $f$ also admits $\tau$-codomain separability in the $\mathsf{APA}$ model using $\theta$ (see Proposition~\ref{prop:FPAAPA}). The other direction is trickier, but we can establish that if $f$ admits $\tau$-codomain separability in the $\mathsf{APA}$ model using  $\theta$, then $f_\phi$ admits $\tau$-codomain separability using $\theta$ if $\phi\geq\max(2 \log_2 \tau  + 5,  -2\log_2\tau - 1)$ (see Proposition~\ref{prop:APAFPA}).  

\begin{proposition} \label{prop:FPAAPA}
Let $f:\ZZZ_K^N \times \Theta \to \mathbb{R}$, $\theta \in \Theta$, and $\tau > 0$. Let $f_\phi$ be the representation of $f$ in the $\mathsf{FPA}(\phi)$ model. If $f_\phi$ admits $\tau$-codomain separability in the $\mathsf{FPA}(\phi)$ model (using $\theta$) for some $\phi > 0$, then $f$ also admits $\tau$-codomain separability in the $\mathsf{APA}$ model using $\theta$. Moreover, $f_{\phi'}$ also admits $\tau$-codomain separability in the $\mathsf{FPA}(\phi')$ model using $\theta$ for all $\phi' > \phi$.
\end{proposition}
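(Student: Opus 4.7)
The plan is to exploit the relationship between the $\mathsf{APA}$ and $\mathsf{FPA}(\phi)$ models: under any standard IEEE-style rounding discipline, $f_\phi$ is a real-valued approximation of $f$ whose worst-case absolute error is bounded by some precision-dependent quantity $\epsilon_\phi$, with the monotonicity $\epsilon_{\phi'} \leq \epsilon_\phi$ whenever $\phi' \geq \phi$ (refining precision never increases rounding error). I would treat $\epsilon_\phi$ as a black-box quantity depending only on $\phi$ and the arithmetic circuit defining $f$, and reduce both claims to reverse-triangle-inequality arguments together with the fact that rounding is a deterministic function of its input.

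First I would establish the $\mathsf{FPA}(\phi) \Rightarrow \mathsf{APA}$ direction. Fixing any distinct $\sigma_1, \sigma_2 \in \ZZZ_K^N$, the reverse triangle inequality gives
\begin{align*}
|f(\sigma_1,\theta) - f(\sigma_2,\theta)| \;\geq\; |f_\phi(\sigma_1,\theta) - f_\phi(\sigma_2,\theta)| - 2\epsilon_\phi \;\geq\; \tau - 2\epsilon_\phi,
\end{align*}
using the hypothesis $\Lambda_\theta(f_\phi) \geq \tau$. The key upgrade from this to the claimed bound $\Lambda_\theta(f) \geq \tau$ is that rounding is a function: if $f(\sigma_1,\theta) = f(\sigma_2,\theta)$ then $f_\phi(\sigma_1,\theta) = f_\phi(\sigma_2,\theta)$, contradicting $\Lambda_\theta(f_\phi) \geq \tau$. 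Hence all $|\ZZZ_K^N|$ values of $f(\cdot,\theta)$ are pairwise distinct, and the FPA-representable witnesses to the $\tau$-gap are themselves real numbers whose APA preimages inherit at least that gap; taking the minimum over distinct pairs yields $\Lambda_\theta(f) \geq \tau$.

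Second I would handle the refinement-in-precision claim. Applying the reverse triangle inequality at precision $\phi'$ and using the APA separability just established gives
\begin{align*}
|f_{\phi'}(\sigma_1,\theta) - f_{\phi'}(\sigma_2,\theta)| \;\geq\; |f(\sigma_1,\theta) - f(\sigma_2,\theta)| - 2\epsilon_{\phi'} \;\geq\; \tau - 2\epsilon_{\phi'}.
\end{align*}
The same determinism-of-rounding observation applied at precision $\phi'$ (APA-equal pairs map to $f_{\phi'}$-equal pairs) forces all $f_{\phi'}(\sigma,\theta)$ to be pairwise distinct, so that the gap they carry is at least as large as the FPA$(\phi)$ gap, yielding $\Lambda_\theta(f_{\phi'}) \geq \tau$.

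The main obstacle will be turning the slack $\tau - 2\epsilon_\phi$ in the reverse-triangle-inequality bound into the clean constant $\tau$ asserted by the proposition. The cleanest route is to argue that codomain separability is a property of real-number gaps and that both $f$ and $f_\phi$ take values in the same ambient $\mathbb{R}$, so the $\tau$-gap recorded for $f_\phi$ literally is a $\tau$-gap of real numbers whose APA preimages, being mapped onto these witnesses by a monotone deterministic rounding, must lie at least as far apart; everything else reduces to routine floating-point accounting with no separability-theoretic content.
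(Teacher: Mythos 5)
Your reduction of both claims to a reverse triangle inequality with slack $\tau - 2\epsilon_\phi$ is fine as far as it goes, but the step you yourself flag as the main obstacle --- upgrading $\tau - 2\epsilon_\phi$ to the clean constant $\tau$ --- does not go through as argued, and this is a genuine gap. Your upgrade rests on two observations: (i) rounding is deterministic, so APA-equal pairs map to FPA-equal pairs; and (ii) rounding is monotone, so the APA preimages of two FPA values at distance $\geq \tau$ ``inherit at least that gap.'' Observation (i) only yields that all values $f(\sigma,\theta)$ are pairwise distinct, i.e.\ \zero-codomain separability, not a quantitative gap of $\tau$. Observation (ii) is false: monotone rounding can strictly expand gaps. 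If $f(\sigma_1,\theta)$ and $f(\sigma_2,\theta)$ are two reals lying just on either side of a boundary between representable values, their rounded images differ by a full unit in the last place while the true values may be arbitrarily close; so $|f_\phi(\sigma_1,\theta)-f_\phi(\sigma_2,\theta)| \geq \tau$ does not imply $|f(\sigma_1,\theta)-f(\sigma_2,\theta)| \geq \tau$. The same defect propagates to your second claim, where the assertion that the $f_{\phi'}$ values ``carry a gap at least as large as the $\mathsf{FPA}(\phi)$ gap'' is a non sequitur under your error model.

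For contrast, the paper's proof avoids the $\epsilon_\phi$ bookkeeping entirely: it writes $\Lambda_\theta(f_\phi)$ explicitly as a $\phi$-bit binary string, observes that passing to $\mathsf{FPA}(\phi+1)$ merely pads that string with a leading or trailing zero (so the numerical value of the minimum gap is preserved exactly, with no slack), and then inducts on $\phi$ and takes $\phi \to \infty$ to recover the $\mathsf{APA}$ statement. The implicit modeling assumption there is that increasing precision refines the representation of $f$'s outputs without changing them --- under which the gap is literally unchanged --- whereas your model, in which each precision level independently introduces rounding error of magnitude $\epsilon_\phi$, cannot recover the exact constant $\tau$ without an additional argument that you have not supplied.
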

\begin{proof}
Given $\theta$ and the fact that $f_\phi$ admits $\tau$-codomain separability using $\theta$ in the $\mathsf{FPA}(\phi)$ model, denote
\begin{align*}
    \Lambda_\theta(f_\phi) = \min_{\sigma_1,\sigma_2 \in \ZZZ_K^N} \abs{f_\phi(\sigma_1,\theta) - f_\phi(\sigma_2,\theta)} = b_{\frac{\phi-3}{2}}\cdots b_1 b_0\ .\ b_{-1}b_{-2} \cdots b_{-\frac{\phi-1}{2}} \geq 2\tau, 
\end{align*}
where each bit $b_i \in \{0,1\}$. Then, in the $\mathsf{FPA}(\phi+1)$ model, we can write (without loss of generality):
\begin{align*}
    \Lambda_\theta(f_{\phi+1}) &= \begin{cases}
    0b_{\frac{\phi-3}{2}}\cdots b_1 b_0\ .\ b_{-1}b_{-2} \cdots b_{-\frac{\phi-1}{2}} &\ \ \text{if $\phi$ is even},\\
    b_{\frac{\phi-3}{2}}\cdots b_1 b_0\ .\ b_{-1}b_{-2} \cdots b_{-\frac{\phi-1}{2}}0&\ \ \text{if $\phi$ is odd}
    \end{cases}\\
     &= b_{\frac{\phi-3}{2}}\cdots b_1 b_0\ .\ b_{-1}b_{-2} \cdots b_{-\frac{\phi-1}{2}} \geq 2\tau,
\end{align*}
which establishes that $f_{\phi+1}$ is $\tau$-codomain separable using $\theta$ in the $\mathsf{FPA}(\phi+1)$ model. By induction, this implies that $f_{\phi'}$ admits $\tau$-codomain separation in the $\mathsf{FPA}(\phi')$ model using $\theta$ for all $\phi' > \phi$. In the limit when $\phi \to \infty$, this is equivalent to saying that $f$ admits $\tau$-codomain separation in the $\mathsf{APA}$ model.
\end{proof}

\begin{proposition} \label{prop:APAFPA}
Let $f:\ZZZ_K^N \times \Theta \to \mathbb{R}$, $\theta \in \Theta$, and $\tau > 0$. Let $f_\phi$ be the representation of $f$ in the $\mathsf{FPA}(\phi)$ model.  If $f$ admits $\tau$-codomain separation in the $\mathsf{APA}$ model using some $\theta \in \Theta$ and for some $\tau > 0$, then $f_\phi$ admits $\tau$-codomain separation in the $\mathsf{FPA}(\phi)$ model using $\theta$ for $\phi\geq\max(2 \log_2 \tau  + 5,  -2\log_2\tau - 1)$.
\end{proposition}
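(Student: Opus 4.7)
The plan is to show that the $\mathsf{FPA}(\phi)$ discretization of the APA values $f(\sigma,\theta)$ does not collapse any pairwise gap below $\tau$. I would start by decoding the two clauses in the hypothesis in terms of the $\mathsf{FPA}(\phi)$ layout used throughout the paper ($1$ sign bit, $(\phi-1)/2$ exponent bits, $(\phi-1)/2$ mantissa bits). The clause $\phi\geq 2\log_2\tau+5$ is equivalent to $(\phi-1)/2\geq \log_2\tau+2$, which ensures the exponent is wide enough that values of magnitude on the order of $\tau$ (and a small multiple thereof) do not overflow. The clause $\phi\geq -2\log_2\tau-1$ is equivalent to $(\phi-1)/2\geq -\log_2\tau-1$ and ensures the mantissa's finest resolvable unit sits strictly below $\tau$, so differences of magnitude $\tau$ are representable without underflow. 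In combination, the representable grid at the scale of $\tau$ has spacing strictly smaller than $\tau$.

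Next, for any two distinct labelings $\sigma_1,\sigma_2\in\ZZZ_K^N$, I would write out each APA value $f(\sigma_j,\theta)$ in its binary representation, in the same spirit as the bit-level calculation carried out in the proof of Proposition~\ref{prop:FPAAPA}. The APA hypothesis $|f(\sigma_1,\theta)-f(\sigma_2,\theta)|\geq \tau$ forces at least one bit at position $\geq \log_2\tau$ to differ between the two expansions. By the two clauses on $\phi$ above, that bit lies strictly inside the window of positions preserved by the $\mathsf{FPA}(\phi)$ representation, so the corresponding $\mathsf{FPA}(\phi)$ representations $f_\phi(\sigma_1,\theta)$ and $f_\phi(\sigma_2,\theta)$ still disagree in that bit. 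Combining this with the triangle-inequality estimate
\begin{align*}
|f_\phi(\sigma_1,\theta)-f_\phi(\sigma_2,\theta)| \geq |f(\sigma_1,\theta)-f(\sigma_2,\theta)| - |f(\sigma_1,\theta)-f_\phi(\sigma_1,\theta)| - |f(\sigma_2,\theta)-f_\phi(\sigma_2,\theta)|,
\end{align*}
and the observation that each rounding-error term is bounded by the $\mathsf{FPA}(\phi)$ grid spacing at the appropriate scale, yields $|f_\phi(\sigma_1,\theta)-f_\phi(\sigma_2,\theta)|\geq \tau$, which is the desired conclusion.

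The main obstacle I expect is matching the slack constants $+5$ and $-1$ to the actual rounding budget tightly enough to keep the final separation at $\tau$ rather than settling for a strictly weaker $\tau/2$-separability that a crude triangle-inequality argument would produce. The key observation for closing this gap is that $f_\phi(\sigma_j,\theta)$ is itself an $\mathsf{FPA}(\phi)$ grid point, so the two rounded values differ by an integer multiple of the grid spacing; consequently, once the APA gap of $\geq \tau$ survives even partially under rounding, it must be at least the smallest grid spacing above $\tau$, which by the two bounds on $\phi$ is $\geq \tau$. A secondary concern is that $f_\phi(\sigma,\theta)$ is produced by running the entire computation of $f$ within $\mathsf{FPA}(\phi)$ rather than by a single projection step, so one needs to verify that the accumulated intermediate rounding errors also fit inside the slack bits afforded by the constants $+5$ and $-1$; I expect this to be a standard error-accumulation bookkeeping, comparable in spirit to the induction used in Proposition~\ref{prop:FPAAPA}.
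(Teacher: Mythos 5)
Your first paragraph is, in substance, the paper's entire proof: the paper simply reads the clause $\phi \geq 2\log_2\tau + 5$ as ``enough bits before the binary point to represent a quantity of size $2\tau$ when $\tau > 1$'' and the clause $\phi \geq -2\log_2\tau - 1$ as ``enough bits after the binary point so that $2\tau$ does not fall below the resolution when $\tau < 1$,'' and stops there; it never analyzes how $f_\phi$ is obtained from $f$ by rounding. So the portion of your plan that matches the paper is the decoding of the two clauses, and that part is fine.

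The difficulty is with the additional machinery you propose, which is where your argument would actually have to do the work, and it does not close. First, the resolution clause only guarantees $2^{-(\phi-1)/2} \leq 2\tau$, i.e., the grid spacing at the relevant scale can be as large as $2\tau$; hence each rounding error $\abs{f(\sigma_j,\theta) - f_\phi(\sigma_j,\theta)}$ is only bounded by something of order $\tau$, and your triangle inequality yields a vacuous lower bound rather than $\tau$ minus a small correction. Second, the claim that a differing bit at position $\geq \log_2\tau$ ``survives'' rounding fails because rounding can propagate carries into arbitrarily high bit positions (e.g.\ $0.1111$ rounding up to $1.0$ flips the bit at position $0$), and in any case two numbers can disagree in a high-order bit while being arbitrarily close, so bit disagreement does not certify a gap of size $\tau$. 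Third, the grid-point observation only shows that a nonzero difference of rounded values is at least one grid spacing, which under the stated bound on $\phi$ can be far below $\tau$; it does not upgrade $\tau - 2\epsilon$ back to $\tau$. In short, either one accepts the paper's reading of the proposition as a pure representability statement about the size of the gap (in which case your first paragraph suffices and the remaining steps are unnecessary), or one insists on genuinely controlling the rounding from $f$ to $f_\phi$, in which case the stated bound on $\phi$ is not strong enough and your proposed steps cannot be repaired without strengthening it.
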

\begin{proof}
Suppose $f$ admits $\tau$-codomain separation in the $\mathsf{APA}$ model. To establish $f_\phi$ admits $\tau$-codomain separation in the  $\mathsf{FPA}(\phi)$ model, we need to split the two cases: $\tau>1$ and $\tau<1$.
Represent $\Lambda_{\theta}(f_\phi)$ in the $\mathsf{FPA}(\phi)$ model as 
$$b_{\frac{\phi-3}{2}}\cdots b_1 b_0\ .\ b_{-1}b_{-2} \cdots b_{-\frac{\phi-1}{2}}.$$
When $\tau>1$, a sufficient condition for $f_\phi$ to satisfy the $\tau$-codomain separation is to have enough precision before the decimal point,
$$2^{\frac{\phi-3}{2}}\geq 2\tau,$$
which gives $\frac{\phi-3}{2}  \geq 1 + \log_2 \tau  \Longrightarrow \phi \geq 2 \log_2 \tau  + 5$. When $\tau<1$, a sufficient condition for $f_\phi$ to satisfy $\tau$-codomain separation is to have enough precision after the decimal point,
$$2^{-\frac{\phi-1}{2}}\leq 2\tau,$$
which gives $-\frac{\phi-1}{2}\leq 1 +  \log_2 \tau \Longrightarrow \phi \geq -2\log_2\tau - 1$. The proposition follows from combining the two together to obtain $\phi\geq\max(2 \log_2 \tau  + 5,  -2\log_2\tau - 1)$.
\end{proof}


\section{Missing Details from Section~\ref{sec:linearly_separable}} \label{app:linearly_separable}

We show that all Bregman divergence based loss functions are  linearly-decomposable.  Given a continuously differentiable strictly convex function $F:\mathcal{S} \to \mathbb{R}$ over some closed convex set $\mathcal{S} \subseteq \mathbb{R}^d$, the Bregman divergence $D_F: \mathcal{S} \times \mathcal{S} \to \mathbb{R}$ associated with $F$ is defined as $D_F(\mathbf{p},\mathbf{q}) = F(\mathbf{p})-F(\mathbf{q})-\langle \nabla F(\mathbf{q}), \mathbf{p}-\mathbf{q} \rangle$. We will focus on the binary case for our discussion in this section and assume that the domain of $F$ is the closed convex set $[0,1]^N$.  
 
\textbf{Restatement of Lemma~\ref{lem:bregman_is_linear}}\emph{
Let $F:[0,1] \times [0,1] \to \mathbb{R}$ be a strongly convex function and $D_F(p,q) = F(p) - F(q) - \langle \nabla F(q),p - q\rangle$ be the Bregman divergence associated with $F$. Let $f_F(\sigma,\theta)$ be the corresponding loss function, defined as follows:
$$f_F(\sigma,\theta) = \frac{1}{N}\sum_{i=1}^N D_F\paran{[\sigma_i,1-\sigma_i],[\theta_i,1-\theta_i]}.$$
Then, $f_F(\sigma,\theta)$ is linearly-decomposable.}
\begin{proof}
First, observe that any loss function of the form $f(\theta,\sigma) = \sum_{i=1}^N \paran{\sigma_i g(\theta_i) + (1-\sigma_i)h(\theta_i)}$ is additively linearly separable, since it can be rewritten as $f(\theta,\sigma) = \Sigma_{i=1}^N \sigma_i g'(\theta_i) + \sum_{i=1}^N h(\theta_i)$, where $g'(\theta_i) = g(\theta_i)-h(\theta_i)$. 

Let $F:[0,1] \times [0,1] \to \mathbb{R}$ be a strongly convex function and $DF(p,q) = F(p) - F(q) - \langle \nabla F(q),p - q\rangle$ be the Bregman divergence associated with $F$. Let $\mathcal{L}_F(\theta,\sigma)$ be the corresponding loss function, defined as follows:
$$\mathcal{L}_F(\theta,\sigma) = \frac{1}{N}\sum_{i=1}^N D_F([\sigma_i,1-\sigma_i], [\theta_i,1-\theta_i]).$$

We start by using a shorthand $\phi(x) = F([x,1-x])$. Then, we can write the following:
\begin{align*}
    \mathcal{L}_F(\theta,\sigma) &= \frac{1}{N}\sum_{i=1}^N D_F([\sigma_i,1-\sigma_i], [\theta_i,1-\theta_i])\\
    &= \frac{1}{N} \sum_{i=1}^N \paran{\phi(\sigma_i) - \phi(\theta_i)-\langle \nabla F([\sigma_i, 1-\sigma_i]), [\sigma_i-\theta_i, \theta_i - \sigma_i] \rangle}\\
    &= \frac{1}{N}\sum_{i=1}^N \paran{\phi(\sigma_i) - \phi(\theta_i)- (\sigma_i - \theta_i) \langle \nabla F([\sigma_i, 1-\sigma_i]), [1,-1] \rangle}\\
    &= \frac{1}{N}\paran{\sum_{i:\sigma_i=1} \phi(1) - \phi(\theta_i) - (1-\theta_i)\langle \nabla F([1,0]), [1,-1] \rangle} + \frac{1}{N}\paran{\sum_{i:\sigma_i=0} \phi(0) - \phi(\theta_i) + \theta_i\langle \nabla F([0,1]), [1,-1] \rangle}
\end{align*}
Let $a = \langle \nabla F([1,0]), [1,-1] \rangle$ and $b = \langle \nabla F([0,1]), [1,-1] \rangle$ be constants. Then, we have the following:
\begin{align*}
    \mathcal{L}_F(\theta,\sigma) &= \frac{1}{N}\paran{\sum_{i:\sigma_i=1} \phi(1) - \phi(\theta_i) - a  + \theta_ia} + \frac{1}{N}\paran{\sum_{i:\sigma_i=0} \phi(0) - \phi(\theta_i) + \theta_ib}\\
    &= \frac{1}{N}\sum_{i=1}^N \paran{\sigma_i \paran{\phi(1) - a + \theta_i a} + (1-\sigma_i)\paran{\phi(0) + \theta_i b}}-\frac{1}{N}\sum_{i=1}^N \phi(\theta_i)\\
    &= \sum_{i=1}^N \sigma_ig(\theta_i) + h(\theta),
\end{align*}
where $g(\theta_i) = \frac{1}{N}\paran{\theta_i (a-b)+\phi(1)- \phi(0)-a}$, and $h(\theta) = \phi(0) + \frac{1}{N}\sum_{i=1}^N \paran{\theta_i b - \phi(\theta_i)}$. Thus, a separating vector for the Bregman loss is obtained by setting $g(\theta_i) = \tau \ln p_i$, where $p_i$ is the $i^{th}$ prime number, as follows:
\begin{align*}
    &\frac{1}{N}\paran{\theta_i (a-b)+\phi(1)- \phi(0)-a} = \tau \ln p_i \\
    \implies & \theta_i = \frac{N\tau \ln p_i + \phi(0) + a - \phi(1)}{a-b} \\
    \implies & \theta_i = \frac{N\tau \ln p_i + F([0,1]) + \langle \nabla F([1,0]), [1,-1] \rangle - F([1,0])}{\langle \nabla F([1,0]), [1,-1] \rangle-\langle \nabla F([0,1]), [1,-1] \rangle}. \qedhere
\end{align*}
\end{proof}

\noindent\textbf{Restatement of Theorem~\ref{thm:linear_separable}.} \emph{Let $g:[0,1] \to \mathbb{R}$ be some deterministic function and $f:\{0,1\}^N \times (0,1)^N \to \mathbb{R}$ be a loss function that is $g$-linearly-decomposable (Definition~\ref{def:linear_separable}). Then, for any $\tau > 0$, the function $f$ is $2\tau$-codomain separable if there exists $\theta \in (0,1)^N$ so that $g(\theta_i) - g(1-\theta_i) > 2^iN\tau$ for all $i \in [N]$. If $\tau=0$, then setting $g(\theta_i) - g(1-\theta_i) > 0$ for all $i \in [N]$ suffices for \zero-codomain separability.}
\begin{proof}
We begin by observing that~\eqref{eq:linear_separable} can be rewritten as follows:
\begin{align*}
    f(\sigma,\theta) &= \frac{1}{N}\sum_{i=1}^N\paran{\sigma_i g(\theta_i) + (1-\sigma_i)g(1-\theta_i)} \\ &= \frac{1}{N} \paran{\sum_{i:\sigma(i)=1}\paran{g(\theta_i)-g(1-\theta_i)}+\sum_{i=1}^N g(1-\theta_i)}.
\end{align*}
For any $\theta\in(0,1)^N$, we can then write the following:
\begin{align*}
    \Lambda_\theta(f) &= \min_{\sigma_1,\sigma_2 \in \{0,1\}^N} \abs{f(\sigma_1,\theta)-f(\sigma_2,\theta)}\\
    &= \frac{1}{N} \min_{\sigma_1,\sigma_2 \in \{0,1\}^N} \abs{\sum_{i:\sigma_1(i)=1}\paran{g(\theta_i)-g(1-\theta_i)} - \sum_{j:\sigma_2(j)=1}\paran{g(\theta_j)-g(1-\theta_j)}}
\end{align*}
If for all $i \in [N]$, it holds that $g(\theta_i) - g(1-\theta_i) = 2^iN\tau(1+\delta)$ for some $\delta > 0$, then:
\begin{align*}
    \Lambda_\theta(f) &= (2\tau(1+\delta)) \min_{\sigma_1,\sigma_2 \in \{0,1\}^N} \abs{\sum_{i:\sigma_1(i)=1} 2^{i-1} + \sum_{j:\sigma_1(j)=1}2^{j-1}} = 2\tau(1+\delta) > 2\tau,
\end{align*}
where the last step holds because $\sigma_1 \neq \sigma_2$. 
\end{proof}

\subsection{Kullback-Leibler Divergence Loss} 
\label{sec:kl_loss}
The (generalized) Kullback-Leibler (KL) divergence between vectors $\mathbf{p}, \mathbf{q} \in \mathcal{S} \subseteq \mathbb{R}^d$ is defined as:
$$D_{\text{KL}}(\mathbf{p},\mathbf{q}) = \sum_{i \in [d]} p_i \ln\frac{p_i}{q_i} - \sum_{i \in [d]} (p_i-q_i), $$
where $\mathbf{p}=(p_1,\dots,p_d)$ and $\mathbf{q}=(q_1,\dots,q_d)$.

For a binary classification setting, considering the $i$th datapoint, we have the true label $\sigma_i \in \{0,1\}$ and $\theta_i \in (0,1)$ which is the probability assigned to the event $\sigma_i =1$ by the ML model.  In that case, we have
$$D_{\text{KL}}([\sigma_i,1-\sigma_i],[\theta_i,1-\theta_i]) = - \sigma_i \ln \theta_i - (1-\sigma_i) \ln(1-\theta_i).
$$
Summing over the $N$ datapoints (and dividing by $N$) gives the Kullback-Leibler divergence loss,
 \begin{align} \label{eq:KL}
    \textsc{KLLoss}(\sigma,\theta) & = \frac{1}{N}\sum_{i=1}^N  D_{\text{KL}}([\sigma_i,1-\sigma_i],[\theta_i,1-\theta_i]) = \frac{-1}{N}  \sum_{i=1}^N \sigma_i \ln \theta_i + (1-\sigma_i) \ln(1-\theta_i) \nonumber \\
    & = \frac{-1}{N} \paran{ \sum_{i: \sigma_i = 1} \ln \theta_i + \sum_{i: \sigma_i = 0} \ln (1-\theta_i)},
\end{align}
which is exactly the binary cross-entropy loss\footnote{Here, we adopt the notion that $0\ln 0=0$, so that KL divergence is well-defined.}. 

\subsection{Itakura-Saito Divergence Loss} 
\label{app:IS_loss}
The Itakura-Saito divergence for vectors $\mathbf{p}, \mathbf{q} \in \mathcal{S} \subseteq \mathbb{R}^d$ is defined as: $$D_{\text{IS}}(\mathbf{p},\mathbf{q}) = \sum_{i \in [d]} \paran{\frac{p_i}{q_i} - \ln \frac{p_i}{q_i} - 1},$$
where $\mathbf{p}=(p_1,\dots,p_d)$ and $\mathbf{q}=(q_1,\dots,q_d)$.

For a binary classification setting, considering the $i$th datapoint, we have the true label $\sigma_i \in \{0,1\}$ and $\theta_i \in (0,1)$, which is the probability assigned to the event $\sigma_i = 1$ by the ML model. In this case, based on $D_{\text{IS}}$, the Itakura-Saito divergence loss is defined as:
\begin{align} 
\label{eq:IS}
  \textsc{ISLoss}(\sigma,\theta) =  \frac{1}{N}  \paran{\sum_{i:\sigma_i=1} \paran{\frac{1}{\theta_i}+\ln \theta_i - 1} + \sum_{i:\sigma_i=0}  \paran{\frac{1}{1-\theta_i}+\ln (1-\theta_i) - 1}}  
\end{align}
The above equation shows the linear decomposability of this loss, therefore, Theorem~\ref{thm:linear_separable} can be applied to get the following result.

\noindent\textbf{Restatement of Corollary~\ref{cor:IS}.} \emph{The Itakura-Saito divergence loss (\textsc{ISLoss}) is $2\tau$-codomain separable with $\theta_i = \paran{1+3^{2^iN\tau}}^{-1}$.}
\begin{proof}
We apply Theorem~\ref{thm:linear_separable} here. For the Itakura-Saito divergence loss in~\eqref{eq:IS}, we begin by noticing that for $x\in\paran{0,1/2}$, it holds that $$\frac{1}{x}-\frac{1}{1-x}+\ln\frac{x}{1-x} > \ln\frac{1-x}{x} > 0.$$ 
Thus, since 
$$g(\theta_i) - g(1-\theta_i) = \frac{1}{\theta_i}-\frac{1}{1-\theta_i}+\ln\parfrac{\theta_i}{1-\theta_i}$$ 
for this loss, it suffices to ensure that $\ln\parfrac{1-\theta_i}{\theta_i} > 2^iN\tau$ for Theorem~\ref{thm:linear_separable} to apply. In particular, we solve $\ln\parfrac{1-\theta_i}{\theta_i} = 2^iN\tau\ln 3$ to obtain $\theta_i = \paran{1+3^{2^iN\tau}}^{-1}$. Note that $\theta_i < 1/2$ as needed above.
\end{proof}

\subsection{Squared Euclidean Loss} 
\label{app:sq_euc_loss}
The squared Euclidean divergence for vectors $\mathbf{p}, \mathbf{q} \in \mathcal{S} \subseteq \mathbb{R}^d$ is defined as:
\begin{align}
    \label{eq:sqEuc}
    D_{\text{SE}}(\mathbf{p},\mathbf{q}) = \sum_{i \in [d]} \paran{\abs{p_i - q_i}^2},
\end{align}
where $\mathbf{p}=(p_1,\dots,p_d)$ and $\mathbf{q}=(q_1,\dots,q_d)$.

Again for the binary classification setting, considering the $i$th datapoint, we get the following expression for this loss:
$$D_{\text{SE}}([\sigma_i,1-\sigma_i],[\theta_i,1-\theta_i])= 2\| \sigma_i -\theta_i \|^2.$$ 
Summing over the $N$ datapoints (and dividing by $N$, and ignoring the factor of $2$), we get the squared Euclidean loss as follows:
\begin{align} 
\label{eq:sq}
\textsc{SELoss} (\sigma,\theta) &=  \frac{1}{N}  \paran{\sum_{i:\sigma_i=1} \abs{\sigma_i - \theta_i}^2 + \sum_{i:\sigma_i=0} \abs{\sigma_i - \theta_i}^2} = \frac{1}{N}  \paran{\sum_{i:\sigma_i=1} ( 1 - \theta_i)^2 + \sum_{i:\sigma_i=0} \theta_i^2}.
\end{align}

In this case, we establish \zero-codomain separability. 

\noindent\textbf{Restatement of the first part of Corollary~\ref{cor:eucl}.} \emph{The squared Euclidean loss (\textsc{SELoss})  is  \zero-codomain separable using $\theta_i = (1/2)\paran{1-\ln(p_i)/N}$, where $p_i$ is the $i$th prime number.}
\begin{proof}
To apply Theorem~\ref{thm:linear_separable} to the squared Euclidean loss, we have $g(\theta_i) = (1-\theta_i)^2$, which gives $g(\theta_i) - g(1-\theta_i) = 1-2\theta_i$. Setting this to $\frac{\ln p_i}{N}$, where $p_i$ is the $i$th prime number, ensures that $\mu(S_\theta)>0$. Equivalently, $\theta_i = \frac{1}{2}\paran{1-\frac{\ln p_i}{N}}$ works.
\end{proof}

Note that the proof above assumes that $\theta \in (0,1)^N$. We show in Theorem~\ref{thm:lp_loss} that restricting $\theta$ to $\{0,1\}^N$ prohibits $\tau$-codomain separability for any $\tau > 0$.

\subsubsection{Norm-like Divergence Loss} 
\label{app:norm_like_loss}
The norm-like divergence for vectors $\mathbf{p}, \mathbf{q} \in \mathcal{S} \subseteq \mathbb{R}^d$ and $\alpha \geq 2$ is defined as:
$$D_{\text{NL}}(\mathbf{p},\mathbf{q}) = \sum_{i \in [d]} \paran{p_i^\alpha + (\alpha-1)q_i^\alpha - \alpha p_i q_i^{\alpha-1}}, $$
where $\mathbf{p}=(p_1,\dots,p_d)$ and $\mathbf{q}=(q_1,\dots,q_d)$.
Again for binary classification, considering the $i$th datapoint, we get the following expression for this loss:
\begin{multline*}
D_{\text{NL}}([\sigma_i,1-\sigma_i],[\theta_i,1-\theta_i]) 
\\= \paran{ \sigma_i^\alpha + (\alpha-1) \theta_i^\alpha - \alpha \sigma_i \theta_i^{\alpha-1} + (1-\sigma_i)^\alpha + (\alpha-1) (1-\theta_i)^\alpha - \alpha (1-\sigma_i) (1-\theta_i)^{\alpha-1}}.
\end{multline*}
Summing over the $N$ datapoints (and dividing by $N$), and simplifying gives the norm-like divergence loss.
\begin{multline}
\label{eq:norm}
\textsc{NLLoss}(\sigma,\theta)= \frac{1}{N}  \big (\sum_{i:\sigma_i=1} \paran{1 + (\alpha-1) \theta_i^\alpha - \alpha \theta_i^{\alpha-1} + (\alpha-1)(1-\theta_i)^\alpha} 
\\ + \sum_{i:\sigma_i=0} \paran{1 + (\alpha-1) (1-\theta_i)^\alpha - \alpha(1-\theta_i)^{\alpha-1} + (\alpha-1) \theta_i^\alpha}
 \big ).
\end{multline}
In this case, we establish \zero-codomain separability. 

\noindent\textbf{Restatement of the second part of Corollary~\ref{cor:eucl}.} \emph{The norm-like divergence loss (\textsc{NLLoss}) for $\alpha \geq 2$ is \zero-codomain separable using $\theta$ where: $(1-\theta_i)^{\alpha-1}-\theta_i^{\alpha-1} = (\ln p_i)/(N\alpha)$ with $p_i$ as the $i$th prime number.}
\begin{proof}
Here we have $g(\theta_i) = 1+(\alpha-1)\theta_i^\alpha - \alpha\theta_i^{\alpha-1}+(\alpha-1)(1-\theta_i)^\alpha$, which gives $g(\theta_i) - g(1-\theta_i) = \alpha((1-\theta_i)^{\alpha-1} - \theta_i^{\alpha-1})$. Similar to above, setting $g(\theta_i) - g(1-\theta_i) = \frac{\ln p_i}{N}$ suffices. This is equivalent to finding a solution to the following equation, which has a unique solution in $(0,1)$ for any fixed $\alpha \geq 2$: $$(1-\theta_i)^{\alpha-1}-\theta_i^{\alpha-1} = \frac{\ln p_i}{N\alpha}.$$It is easy to see that such a $\theta_i < 1/2$ exists.
\end{proof}




\begin{align} \label{eq:sqfree_min_separation}
    \Lambda_\theta(f) = \min_{\sigma_1 \ne \sigma_2}\abs{\sum_{i:\sigma_1(i)=1} g(\theta_i) - \sum_{i:\sigma_2(i)=1} g(\theta_i)}.
\end{align}

\textbf{Restatement of Theorem~\ref{thm:linearly_decomposable_using_primes}} \emph{
Let $f:\{0,1\}^N \times (0,1)^N \to \mathbb{R}$ be a loss function that is linearly-decomposable (Definition~\ref{def:sqfree_linearly_separable}). Let $p_i$ is the $i^{th}$ prime number and $P=\prod_{i=1}^N p_i$, is the product of the first $N$ primes. Then, for any $\tau > 0$, setting $g(\theta_i)=3P\tau  \ln p_i$ for loss functions in Equation~\ref{def:sqfree_linearly_separable} ensures that $\Lambda_\theta(f) \geq 2\tau$. If $\tau=0$, setting $g(\theta_i)=\ln p_i$ suffices for \zero-codomain separability.}
\begin{proof}
We prove this by substituting $g(\theta_i) = 6P\tau \ln p_i$ in Equation~\ref{eq:sqfree_min_separation} as follows:
\begin{align*}
    \Lambda_\theta(f) &= \min_{\sigma_1 \ne \sigma_2}\abs{\sum_{i:\sigma_1(i)=1} g(\theta_i) - \sum_{j:\sigma_2(j)=1} g(\theta_j)}\\
    &= (3P\tau)\ \min_{\sigma_1 \ne \sigma_2}\abs{\sum_{i:\sigma_1(i)=1} \ln p_i - \sum_{j:\sigma_2(j)=1} \ln p_j}\\
    &= (3P\tau)\ \min_{\substack{S_1,S_2 \subseteq [N]\\S_1 \cap S_2 = \emptyset}} \abs{\ln \parfrac{\prod_{i \in S_1}p_i}{\prod_{s \in S_2}p_j}} \geq 2\tau.
\end{align*}
Too see why the last inequality holds, let $S_1^*$ and $S_2^*$ denote the sets of primes that achieve the minimum value above. Then, without loss of generality, it must hold that $\prod_{j \in S_2^*}p_j \leq P$ and $\prod_{j \in S_2^*}p_j \leq 1 + \prod_{i \in S_1^*}p_i$. Thus, using the fact that $1.5x\ln \parfrac{1+x}{x} > 1$ for all $x \geq 1$, we obtain that $1.5P$ times the log expression above is at least $1$. Further scaling by $2\tau$ gives $\Lambda_\theta(f) \geq 2\tau$.
\end{proof}

\section{Missing Details from Section~\ref{sec:logloss}} \label{app:logloss}
\noindent\textbf{Worked out example for \zero-codomain separability for multiclass cross-entropy loss.} For illustration, we provide a simple example to demonstrate \zero-codomain separability for the multiclass cross-entropy loss using a construction of prediction vector from~\citep{aggarwal2021icml}. Note that in Theorem~\ref{thm:multi_class_proof}, we establish that in fact, multiclass cross-entropy loss admits the stronger notion of $\tau$-codomain separability for any $\tau > 0$.

Assume $N=2$ and $K = 3$. Construct a matrix $\theta$ with first row  $[\frac{2}{10},\frac{3}{10},\frac{5}{10}]$ and second row $[\frac{7}{31},\frac{11}{31},\frac{13}{31}]$. 
Observe that these vectors are chosen using unique prime numbers in the numerator (the denominator is for normalizing the sum to 1), the reasoning for which will be clear shortly. Using $\theta$, one can prove that the cross-entropy loss will be distinct for every labeling by observing that the terms inside the logarithm, that are chosen for the outer sum in~\eqref{eq:multi_class_log_loss}, are distinct for all labelings. For example, if the true labeling is $[0,2]$, then we obtain $\Klogloss([0,2]; \theta) = -\frac{1}{2}\paran{\ln \frac{2}{10} + \ln \frac{13}{31}} = -\frac{1}{2} \ln\paran{\frac{2 \cdot 13}{10 \cdot 31}}$. Similarly, if the true labeling is $[1,0]$, then we obtain $\Klogloss([1,0]; \theta) = -\frac{1}{2}\paran{\ln \frac{3}{10} + \ln \frac{7}{31}} = -\frac{1}{2}\ln\paran{\frac{3 \cdot 7}{10 \cdot 31}} $. The use of primes makes this selection of summands in the $\Klogloss$ score uniquely defined by the true labeling. This follows as the only thing that changes in the $\Klogloss$ score based on the true labeling is the numerator in the $\ln$ term, which is a product of primes based on true labeling. Since the product of primes has a unique factorization, we can recover which primes were used from the product, and since each entry in the matrix $\theta$ is associated with a unique prime, this recovers the true labels. 

\noindent\textbf{Missing proofs.} We show that the  (multiclass) $K$-ary cross-entropy loss is $\tau$-codomain separable for any $\tau > 0$.

\noindent\textbf{Restatement of Theorem~\ref{thm:multi_class_proof}.} \emph{Let $\tau > 0$. Define matrices $\vartheta, \theta \in \mathbb{R}^{N \times K}$ such that 
$$\vartheta_{n,k} = 3^{\paran{2^{(n-1)K+k}N\tau}} \mbox{ and } \theta_{n,k} = \vartheta_{n,k} / \sum_{k=1}^K \vartheta_{n,k}.$$ Then, it holds that $\Klogloss$ is $2 \tau$-codomain separable using $\theta$. If $\tau = 0$, then using $\vartheta_{n,k} = 3^{\paran{2^{(n-1)K+k}}}$ ensures \zero-codomain separability.}
\begin{proof}
We begin by simplifying the expression for $\Klogloss\paran{\theta,\sigma}$ to write it as a sum of two terms: one dependent on the labeling $\sigma$, and the other independent of this labeling. 
\begin{align}\label{eq:multi_class_log_loss_difference}
    \Klogloss\paran{\theta,\sigma} &= \frac{-1}{N}\sum_{i=1}^N \sum_{k=1}^K \ \Big([\sigma_i = k] \cdot \ln \theta_{i,k} \Big) 
    = \frac{-1}{N}\paran{\underbrace{\sum_{i=1}^N \ln \vartheta_{i,\sigma_i}}_{\substack{\text{Labeling Dependent}\\\text{Term}}} - \underbrace{\sum_{i=1}^N \ln \paran{\sum_{k=1}^K \vartheta_{i,k}}}_{\substack{\text{Labeling Independent}\\\text{Term}}}}.
\end{align}
Using~\eqref{eq:multi_class_log_loss_difference}, we then obtain the following:
\begin{align*}
    \Lambda_{\theta}(\Klogloss) &= \min_{\sigma_1, \sigma_2 \in \ZZZ_K^N} \abs{\Klogloss\paran{\theta,\sigma_1} - \Klogloss\paran{\theta,\sigma_2}}\\
    &= \min_{\sigma_1, \sigma_2 \in \ZZZ_K^N} \frac{1}{N}\abs{\sum_{i=1}^N \ \Big(\ln \theta_{i,\sigma_1(i)} \Big) - \sum_{i=1}^N \ \Big(\ln \theta_{i,\sigma_2(i)} \Big)}\\
    &= \min_{\sigma_1, \sigma_2 \in \ZZZ_K^N} \frac{1}{N}\abs{\sum_{i=1}^N \ \Big(\ln \vartheta_{i,\sigma_1(i)} \Big) - \sum_{i=1}^N \ \Big(\ln \vartheta_{i,\sigma_2(i)} \Big)}\\
    &= \min_{\sigma_1, \sigma_2 \in \ZZZ_K^N} (\tau\ln 3) \abs{\sum_{i=1}^N 2^{(i-1)K+\sigma_1(i)} - \sum_{i=1}^N 2^{(i-1)K+\sigma_2(i)}} \geq 2\tau\ln 3 > 2\tau.
\end{align*}
\end{proof}

\subsection{Sigmoid Cross-Entropy Loss} \label{app:sigmoid}

The separability from Theorem~\ref{thm:multi_class_proof} also holds if we apply any bijective activation function before applying the cross-entropy loss. As an example, consider the sigmoid cross-entropy commonly used in the binary classification setting (to compresses arbitrary reals into the range $(0,1)$), defined as follows for $\sigma \in \{0,1\}^N$, $\theta \in (0,1)^N$:
\begin{align}
\label{eq:sigmoid_definition}
	 \frac{-1}{N}\sum_{i=1}^N \ \Big( \sigma_i \ln \paran{\textsc{Sigmoid}(\theta_i)} + (1-\sigma_i)\ln \paran{1-\textsc{Sigmoid}(\theta_i)} \Big),
\end{align}
where $\textsc{Sigmoid}(x) = \paran{1+e^{-x}}^{-1}$ is the sigmoid function. Since $\textsc{Sigmoid}:\mathbb{R} \to (0,1)$ is a bijection (and hence, invertible), given $\textsc{Sigmoid}(x) = y$, we can obtain $x = \ln (y/(1-y))$. Thus, given the matrix $\theta \in [0,1]^{N \times 2}$ from Theorem~\ref{thm:multi_class_proof}, we can construct $\theta' \in (0,1)^{N}$ such that $\theta'_{i} = \ln (\theta_{i,1}/(1-\theta_{i,1}))$ for all $i \in [N]$.
 Once $\theta'$ is obtained, since $\textsc{Sigmoid}(\theta_{i}) = \theta_{i,1}$ and $1-\textsc{Sigmoid}(\theta_{i}) = \theta_{i,2}$  we get that sigmoid cross-entropy loss is $2\tau $-codomain separable using $\theta'$, and hence the approach outlined in \LabelInf~\eqref{eq:labelinf} can be used for $\tau$-robust label inference.

\section{Missing Details from Section~\ref{sec:neural_network}} \label{app:neural_network}

\begin{theorem}\label{thm:nn} 
Let $\tau > 0$ and let $\theta \in (0,1)^{d_2}$ be such that $f:\ZZZ_K \times (0,1)^{d_2} \to \mathbb{R}$ is $2\tau$-codomain separable using $\theta$. Then, for any input $\mathbf{v} \in (0,1)^{d_2}$, given $\ell$ such that $\abs{f(\sigma_\mathbf{v},\textsc{MutNet}_\theta(\mathbf{v})) - \ell} \leq \tau$, the approach outlined in~\LabelInf~\eqref{eq:labelinf} recovers $\sigma_\mathbf{v}$.
\end{theorem}
\begin{proof}
It suffices to show that for given $\theta \in (0,1)^{d_2}$ and any $\mathbf{v} \in (0,1)^{d_2}$, the construction above ensures that $\textsc{MutNet}_\theta(\mathbf{v}) = \theta$. To see this, observe that since all entries in $M_1$ are negative, the product $\mathbf{v}^\top M_1$ has non-positive entries. Thus, when $\textsc{ReLU}$ is applied to $\mathbf{v}^\top M_1$ (element-wise), the output is the zero vector. This zero vector, when fed into the Sigmoid, produces the desired output $\theta$ since $\mathbf{x}'$ is constructed in a way such that $\textsc{Sigmoid}(\mathbf{x}') = \theta$ (element -wise).
\end{proof}

\section{Some Negative Results on $\tau$-codomain Separability}
\label{sec:negative_results}
We now show certain loss functions are not $\tau$-codomain separable. This complements our positive results on $\tau$-codomain separability for cross-entropy and its variants, and Bregman divergence based losses. These negative results on codomain separability rules out label inference in these cases, because of the connections between these two notion established in Proposition~\ref{prop:bound_on_tau_for_label_inference}.

\noindent\textbf{Discrete $L_p$-losses.}
We start with the simple $L_p$-loss defined on the {\em discrete} domain and show it is not $\tau$-codomain separable for any $\tau > 0$.
\begin{theorem}
\label{thm:lp_loss}
For any $p > 0$, the function $f: \{0,1\}^N \times \{0,1\}^N \to \mathbb{R}$ of the form $f(\sigma,\theta) = \pnorm{p}{\theta-\sigma}$ is not $\tau$-codomain separable for any $\tau > 0$.
\end{theorem}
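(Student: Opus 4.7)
The plan is to exploit the fact that when both $\sigma$ and $\theta$ lie in $\{0,1\}^N$, the $L_p$ distance $\|\theta-\sigma\|_p$ collapses to a function of the Hamming distance only. Specifically, for any $i \in [N]$, the quantity $|\theta_i - \sigma_i|$ is either $0$ or $1$, so $|\theta_i - \sigma_i|^p = |\theta_i - \sigma_i|$ for every $p > 0$. Summing yields $\|\theta - \sigma\|_p^p = d_H(\sigma,\theta)$, where $d_H$ denotes the Hamming distance. Hence $f(\sigma,\theta) = d_H(\sigma,\theta)^{1/p}$ and, for fixed $\theta$, the value $f(\sigma,\theta)$ depends on $\sigma$ only through $d_H(\sigma,\theta)$.

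Next I would count the range of this map. The Hamming distance $d_H(\sigma,\theta)$ can take at most $N+1$ distinct values (namely $0,1,\dots,N$). Therefore, as $\sigma$ ranges over all of $\{0,1\}^N$, the codomain of $f(\cdot,\theta)$ contains at most $N+1$ distinct values, while the domain has $2^N$ elements. Assuming $N \geq 2$ (the only interesting case; the $N=1$ case is degenerate since there are only two labelings), we have $2^N > N+1$, so by the pigeonhole principle there must exist two distinct labelings $\sigma_1 \neq \sigma_2$ with $d_H(\sigma_1,\theta) = d_H(\sigma_2,\theta)$, and hence $f(\sigma_1,\theta) = f(\sigma_2,\theta)$.

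Consequently, $\Lambda_\theta(f) = \min_{\sigma_1 \neq \sigma_2} |f(\sigma_1,\theta) - f(\sigma_2,\theta)| = 0$ for every $\theta \in \{0,1\}^N$. By Definition~\ref{def:codomain_separability}, $\tau$-codomain separability would require $\Lambda_\theta(f) \geq \tau > 0$ for some $\theta$, which is impossible. Thus $f$ is not $\tau$-codomain separable for any $\tau > 0$. There is essentially no real obstacle here: the whole argument is a pigeonhole observation once one notices that the $p$-th power collapses trivially on $\{0,1\}$-valued differences. The only subtlety worth flagging is that this is a statement about the \emph{discrete} $L_p$-loss (both $\sigma,\theta \in \{0,1\}^N$); allowing $\theta$ to range over $(0,1)^N$ changes the picture completely, as witnessed by Corollary~\ref{cor:eucl} for the squared Euclidean loss.
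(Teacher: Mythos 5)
Your proof is correct and follows essentially the same route as the paper: both reduce $f(\sigma,\theta)$ to a function of the Hamming distance between $\sigma$ and $\theta$ (since $|\theta_i-\sigma_i|^p \in \{0,1\}$) and then exhibit a collision; the paper picks two labelings at Hamming distance exactly $1$ from $\theta$, while you invoke pigeonhole over the at most $N+1$ possible distances, which is an equally trivial final step. Your explicit caveat that $N\ge 2$ is needed is a correct observation that the paper leaves implicit.
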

\begin{proof}
Fix some $\theta \in \{0,1\}^N$. For any $\sigma \in \{0,1\}^N$, let $I(\sigma,\theta) = \{i \in [N] \ | \ \sigma(i) \neq \theta(i)\}$ be the set of indices on which $\sigma$ and $\theta$ differ. Then, we can simplify the expression for $f$ as follows:
\begin{align*}
    f(\sigma,\theta) = \paran{\sum_{i=1}^N \abs{\theta(i)-\sigma(i)}^p}^{1/p} = \paran{\sum_{i \in I(\sigma,\theta)} \abs{\theta(i)-\sigma(i)}^p}^{1/p} = \abs{I(\sigma,\theta)}^{1/p}.
\end{align*}
Now, let $\sigma_1, \sigma_2 \in \{0,1\}^N$ be such that they differ from $\theta$ in exactly one label, i.e., $|I(\sigma_1,\theta)| = |I(\sigma_2,\theta)| = 1$ and hence, $f(\sigma_1,\theta) = f(\sigma_2,\theta) = 1$. Note that for any choice of $\theta$, there are $N-1$ such labelings. Thus, $\Lambda_\theta(f) = 0$.
\end{proof}

\noindent\textbf{Set-valued Functions.} We now study set-valued loss functions. These are functions that are expressed with respect to a fixed set, as a mapping from subsets of this set to the real line. For example, in our context of codomain separability (in the binary classification setting), the set of interest is that of the $N$ datapoints, and the subsets are interpreted as comprising of those that have been assigned label $1$. For example, if $N=3$ and the subset is $\{1,3\}$, then this would represent the case where datapoints $1$ and $3$ have labels $1$, and datapoint $2$ has label $0$. As we will see, this generalization helps compute upper bounds on the magnitude of noise that will admit label inference (in a single query) using any prediction vector.

We now present our main results in this section. For the discussion here, we will assume $\Omega = \{s_1,\dots,s_N\}$ to denote a set and $2^\Omega$ to denote the power set of $\Omega$. As mentioned before, since the sets of interest in our application can be thought of as the labels for the datapoints, we will assume $|\Omega| = N$, unless mentioned otherwise.

\begin{theorem}
\label{thm:separability_monotonic}
Let $\Omega = \{s_1,\dots,s_N\}$ be a set. Let $f: 2^\Omega \times \Theta \to \mathbb{R}_+$ be a function and $\theta \in \mathbb{R}^N$ be such that $f(\cdot,\theta)$ is monotonic, i.e.,, for all $A \subseteq B \subseteq [N]$, it holds that $f(A,\theta) \leq f(B,\theta)$. Then, $f$ is not $\tau$-codomain separable using $\theta$ for any 
$$\tau > \min_{B \subset [N]}\min_{j \not\in B} \parfrac{f(B \cup \{j\},\theta) - f(B,\theta)}{2}.$$
In particular, if $f(\emptyset,\theta) = 0$, then $f$ is not $\tau$-codomain separable using $\theta$ for any $\tau > \frac{1}{2} \paran{\min_{j \in [N]} f(\{j\},\theta)}$.
\end{theorem}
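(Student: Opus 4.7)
The plan is to exhibit, for any $\tau$ exceeding the right-hand quantity, two distinct subsets of $\Omega$ whose $f$-values differ by at most $2\tau$, thereby blocking $\tau$-codomain separability (via Proposition~\ref{prop:bound_on_tau_for_label_inference}, which reads $\tau$-robust label inference as $\Lambda_\theta(f)\geq 2\tau$). The whole argument will be a single adjacent-subset comparison powered by monotonicity.

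First, let $m := \min_{B \subset [N]}\min_{j \not\in B}\frac{f(B\cup\{j\},\theta)-f(B,\theta)}{2}$. Monotonicity of $f(\cdot,\theta)$ guarantees that $f(B\cup\{j\},\theta)-f(B,\theta)\geq 0$ for every admissible pair, so $m\geq 0$ and the (finite) double minimum is attained at some pair $(B^\ast,j^\ast)$ with $j^\ast\notin B^\ast$. By definition, $f(B^\ast\cup\{j^\ast\},\theta)-f(B^\ast,\theta)=2m$.

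Next, I would take as witness labelings $\sigma_1=B^\ast$ and $\sigma_2=B^\ast\cup\{j^\ast\}$ in $2^\Omega$ (equivalently, their indicator vectors in $\{0,1\}^N$, the coordinates of which differ exactly at position $j^\ast$). These are distinct and satisfy $|f(\sigma_1,\theta)-f(\sigma_2,\theta)|=2m$, so $\Lambda_\theta(f)\leq 2m$. For any $\tau>m$ this yields $\Lambda_\theta(f)<2\tau$, so by Proposition~\ref{prop:bound_on_tau_for_label_inference} the function $f$ does not admit $\tau$-robust label inference using $\theta$, which is the codomain-separability failure asserted by the theorem. For the ``in particular'' clause I would simply plug $B=\emptyset$ into the outer minimum: the hypothesis $f(\emptyset,\theta)=0$ collapses the inner expression to $\tfrac{1}{2}f(\{j\},\theta)$, and minimizing over $j\in[N]$ recovers $\tfrac{1}{2}\min_{j\in[N]}f(\{j\},\theta)$, after which the main argument runs verbatim.

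The only genuine subtlety is bookkeeping the factor of $2$: the witness pair yields gap $2m$, which is exactly what the $2\tau$ threshold in Proposition~\ref{prop:bound_on_tau_for_label_inference} consumes to produce the stated bound $\tau>m$. Everything else is immediate from monotonicity plus the existence of a minimizer on a finite index set.
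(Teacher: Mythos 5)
Your proof is correct and is essentially the paper's own argument in contrapositive form: the paper assumes $\tau$-separability, compares each $B$ with $B\cup\{j\}$, uses monotonicity to drop the absolute value, and concludes $\tau \le \min_B\min_{j\notin B}\tfrac{1}{2}(f(B\cup\{j\},\theta)-f(B,\theta))$, whereas you exhibit the minimizing pair $(B^\ast,j^\ast)$ directly as the witness that $\Lambda_\theta(f)\le 2m < 2\tau$. You also handle the factor of $2$ exactly as the paper does (reading the conclusion through the $2\tau$ threshold of Proposition~\ref{prop:bound_on_tau_for_label_inference}), and the ``in particular'' clause via $B=\emptyset$ matches the paper verbatim.
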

\begin{proof}
Fix some $\sigma \in [0,1]^N$. Then, for $f$ to be $\tau$-codomain separable using $\sigma$, it must hold that:
\begin{align*}
    &\forall B\subset [N], j \not\in B.\ \ \ \ \ \abs{f(B \cup \{j\},\theta) - f(B,\theta)} \ge 2\tau \nonumber\\
    \Longrightarrow &\forall B\subset [N].\ \ \ \ \ \tau \le \frac{1}{2} \paran{\min_{j \not\in B} f(B \cup \{j\},\theta) - f(B,\theta)} \ \ \ \text{(since $f(\theta,\cdot)$ is monotonic)}\nonumber\\
    \Longleftrightarrow &\ \tau \le \min_{B \subset [N]}\min_{j \not\in B} \parfrac{f(B \cup \{j\},\theta) - f(B,\theta)}{2}.
\end{align*}
Taking the contrapositive of this statement establishes the desired result. When $f(\emptyset,\theta)=0$, then setting $B = \emptyset$ gives the desired result.
\end{proof}

\begin{corollary}
Let $f: 2^\Omega \times \Theta  \to \mathbb{R}_+$ be a function such that $f(\cdot,\theta)$ is monotonic for all $\theta \in \mathbb{R}^N$. Then, $f$ is not $\tau$-codomain separable for any $$\tau > \sup_{\theta \in \mathbb{R}^N}\min_{B \subset [N]}\min_{j \not\in B} \parfrac{f(B \cup \{j\},\theta) - f(B,\theta)}{2}.$$In particular, if $f(\emptyset,\theta) = 0$ for all $\theta \in \mathbb{R}^N$, then $f$ is not $\tau$-codomain separable for any $$\tau > \frac{1}{2} \paran{\sup_{\theta \in \mathbb{R}^N}\min_{j \in [N]} f(\{j\},\theta)}.$$
\end{corollary}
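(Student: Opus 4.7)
The plan is to obtain this corollary as an immediate consequence of Theorem~\ref{thm:separability_monotonic} by universally quantifying over $\theta$. Recall from Definition~\ref{def:codomain_separability} that ``$f$ admits $\tau$-codomain separability'' is an \emph{existential} statement: there exists some $\theta$ for which $\Lambda_\theta(f) \geq \tau$. Its negation is therefore universal: $f$ fails to be $\tau$-codomain separable precisely when, for every $\theta$, $f$ is not $\tau$-codomain separable using that particular $\theta$. The strategy is to apply Theorem~\ref{thm:separability_monotonic} pointwise and then collapse the resulting family of bounds via a supremum.

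For the first part, fix any $\theta \in \mathbb{R}^N$. By hypothesis, $f(\cdot,\theta)$ is monotonic, so Theorem~\ref{thm:separability_monotonic} directly applies and yields that $f$ is not $\tau$-codomain separable using $\theta$ whenever
\[
\tau > T(\theta) := \min_{B \subset [N]} \min_{j \notin B} \frac{f(B \cup \{j\},\theta) - f(B,\theta)}{2}.
\]
Contrapositively, if $f$ is $\tau$-codomain separable using $\theta$, then $\tau \leq T(\theta)$. Now suppose, for contradiction, that $f$ is $\tau$-codomain separable for some $\tau > \sup_{\theta} T(\theta)$. By the existential form of the definition, there must exist some $\theta^\star$ with respect to which $f$ is $\tau$-codomain separable, which forces $\tau \leq T(\theta^\star) \leq \sup_{\theta} T(\theta)$, contradicting our assumption on $\tau$. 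This proves the first claim.

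For the ``in particular'' portion, observe that when $f(\emptyset,\theta)=0$ for every $\theta$, plugging $B = \emptyset$ into $T(\theta)$ gives $T(\theta) \leq \tfrac{1}{2}\min_{j \in [N]} f(\{j\},\theta)$; more cleanly, the analogous specialization inside Theorem~\ref{thm:separability_monotonic} already gives that $f$ is not $\tau$-codomain separable using $\theta$ for $\tau > \tfrac{1}{2}\min_j f(\{j\},\theta)$. Repeating the supremum-and-contradiction argument with this pointwise bound yields the stated threshold $\tfrac{1}{2}\sup_\theta \min_j f(\{j\},\theta)$.

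No real obstacle arises. The only subtle point is to keep straight that codomain separability is an existential quantifier over $\theta$, so its negation absorbs the outer supremum exactly as needed; beyond that bookkeeping, the corollary is a direct lift of Theorem~\ref{thm:separability_monotonic} from a fixed $\theta$ to all of $\Theta$.
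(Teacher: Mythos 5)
Your proof is correct and matches the intended argument: the paper states this corollary without proof as an immediate consequence of Theorem~\ref{thm:separability_monotonic}, and your pointwise application of that theorem followed by the observation that separability is existential in $\theta$ (so the supremum over $\theta$ absorbs the per-$\theta$ bounds) is exactly the step being left implicit. No issues.
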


We now show that if in addition to monotonicity, the loss function is also bounded, then we can get stronger negative results.  

\begin{theorem}
\label{thm:separability_monotonic_bounded}
Let $\Omega = \{s_1,\dots,s_N\}$ be a set. Let $f: 2^\Omega \times \Theta  \to \mathbb{R}_+$ be a function such that $f(\cdot,\theta)$ is monotonic and $f(\cdot,\theta) \leq \beta$ for all $\theta \in \mathbb{R}^N$ and for some finite $\beta > 0$. Then, $f$ is not $\tau$-codomain separable for any $\tau > \beta/N$.
\end{theorem}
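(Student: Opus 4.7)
The plan is to exhibit, for every $\theta\in\Theta$, two subsets of $\Omega$ whose $f$-values lie within $\beta/N$ of each other, and then invoke Theorem~\ref{thm:separability_monotonic} (or directly the definition of $\Lambda_\theta(f)$) to rule out $\tau$-codomain separability for $\tau>\beta/N$. The key observation is that monotonicity together with the bound $f(\cdot,\theta)\in[0,\beta]$ forces any maximal chain of subsets to admit a small consecutive gap by an averaging/pigeonhole argument.

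Concretely, fix any $\theta\in\Theta$ and consider the saturated chain $A_0\subset A_1\subset\cdots\subset A_N$ with $A_0=\emptyset$, $A_N=\Omega$, and $A_i=\{s_1,\dots,s_i\}$. By monotonicity, the sequence $f(A_0,\theta)\le f(A_1,\theta)\le\cdots\le f(A_N,\theta)$ is non-decreasing and lies in $[0,\beta]$. Telescoping,
\begin{equation*}
\sum_{i=1}^{N}\bigl(f(A_i,\theta)-f(A_{i-1},\theta)\bigr)=f(A_N,\theta)-f(A_0,\theta)\le\beta,
\end{equation*}
so by averaging there exists an index $i^\star\in[N]$ with $f(A_{i^\star},\theta)-f(A_{i^\star-1},\theta)\le\beta/N$. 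Writing $B=A_{i^\star-1}$ and $j=s_{i^\star}\notin B$, we have produced two distinct subsets of $\Omega$ whose $f$-values differ by at most $\beta/N$, hence $\Lambda_\theta(f)\le\beta/N$.

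Since the $\theta$ above was arbitrary, $\Lambda_\theta(f)\le\beta/N$ for all $\theta\in\Theta$, so by Definition~\ref{def:codomain_separability} the function $f$ cannot admit $\tau$-codomain separability for any $\tau>\beta/N$. (Equivalently, the chain construction realises the minimum in Theorem~\ref{thm:separability_monotonic} at some consecutive pair $B\subset B\cup\{j\}$ with gap at most $\beta/N$, which yields the same conclusion.) The argument is essentially a one-line pigeonhole, and there is no real obstacle; the only thing to be careful about is choosing a saturated chain of length exactly $N+1$ so that the telescoping sum has exactly $N$ terms, which is what allows the $\beta/N$ bound rather than a weaker power-set bound.
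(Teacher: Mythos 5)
Your proof is correct and is essentially the paper's own argument: both use the saturated chain $\emptyset \subset \{s_1\} \subset \cdots \subset \Omega$ and telescope the consecutive differences against the bound $\beta$; the paper merely phrases the averaging step as a proof by contradiction (assuming every gap is at least $\tau$ forces $v_N > \beta$), whereas you state it directly as a pigeonhole bound $\Lambda_\theta(f) \le \beta/N$. No substantive difference.
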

\begin{proof}
Assume that $f$ is $\tau$-codomain separable using some $\theta$. Consider the chain of values $v_0 = f(\{\ \},\theta), v_1 = f(\{1\},\theta), f(\{1,2\},\theta),\dots,v_N = f(\{1,\dots,N\},\theta)$. For each $i \in [N]$, since $f$ is $\tau$-codomain separable, we must have $|v_i - v_{i-1}| \geq \tau$. Since $f$ is monotonic, this implies $v_i - v_{i-1} \geq \tau$. Summing both sides over $i$ gives $\sum_{i=1}^N \paran{v_i - v_{i-1}} = v_N - v_0$, which must be at least $N\tau$ for the inequality above to hold. This implies $v_N \geq N\tau + v_0$, which, for $\tau > \beta/N$ gives $v_N > \beta$ (since $f$ is non-negative). This is a contradiction since $f$ is bounded above by $\beta$.
\end{proof}
\section{Missing Details from Section~\ref{sec:experiments}}
\label{app:experiments}

\captionsetup[subfloat]{labelformat=empty}
\begin{figure*}[t]
    \centering
    \vspace{-0.5em}
    \subfloat[]{\includegraphics[width=150pt]{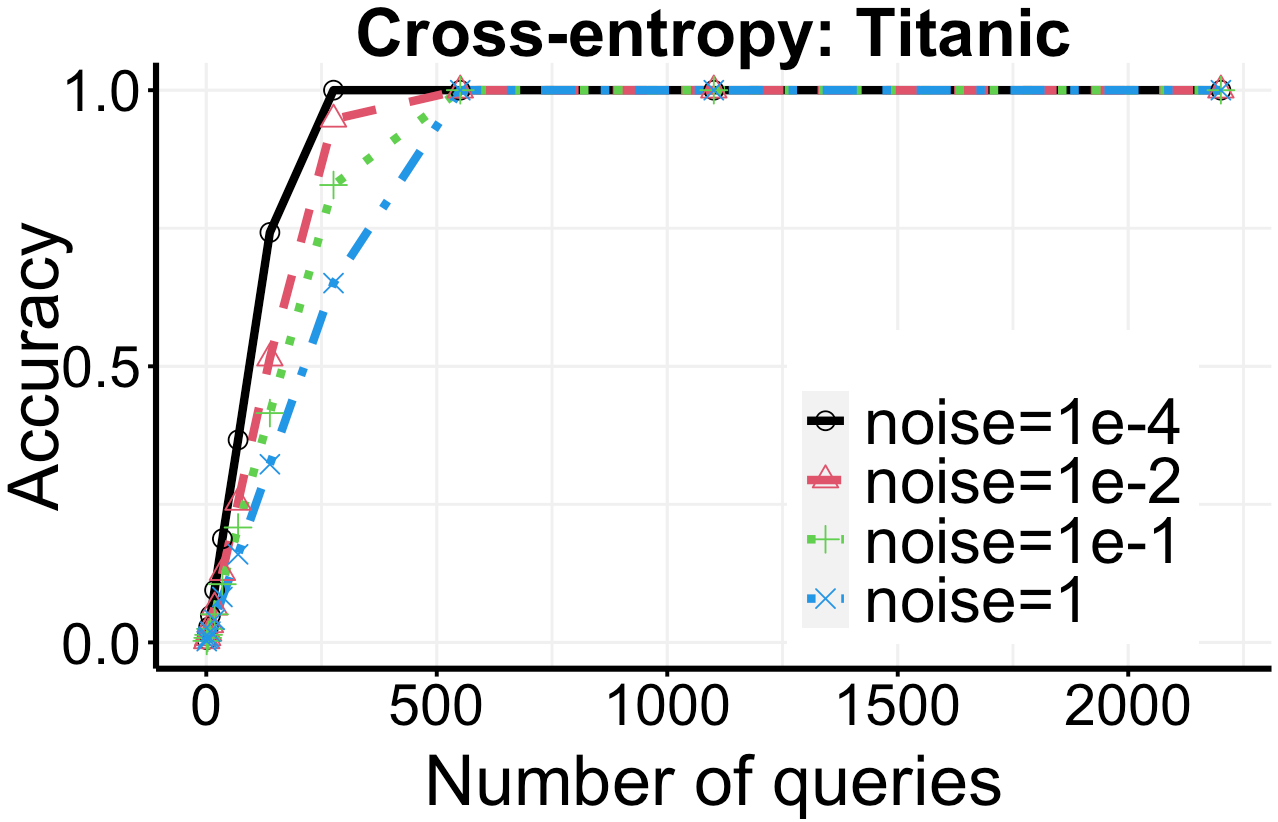}}
    \hspace{0.1em}
    \hspace{0.1em}
    \subfloat[]{\includegraphics[width=150pt]{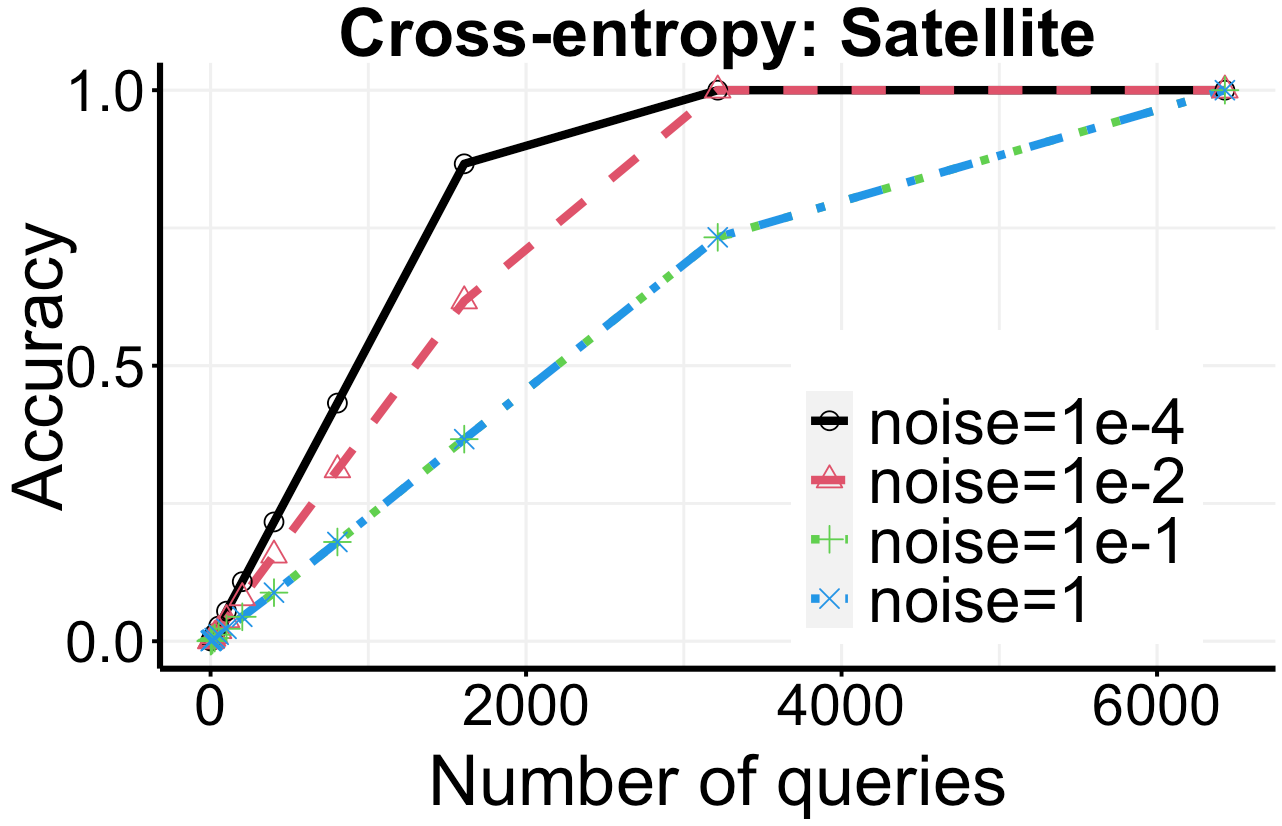}} 
    \\
    \subfloat[]{\includegraphics[width=150pt]{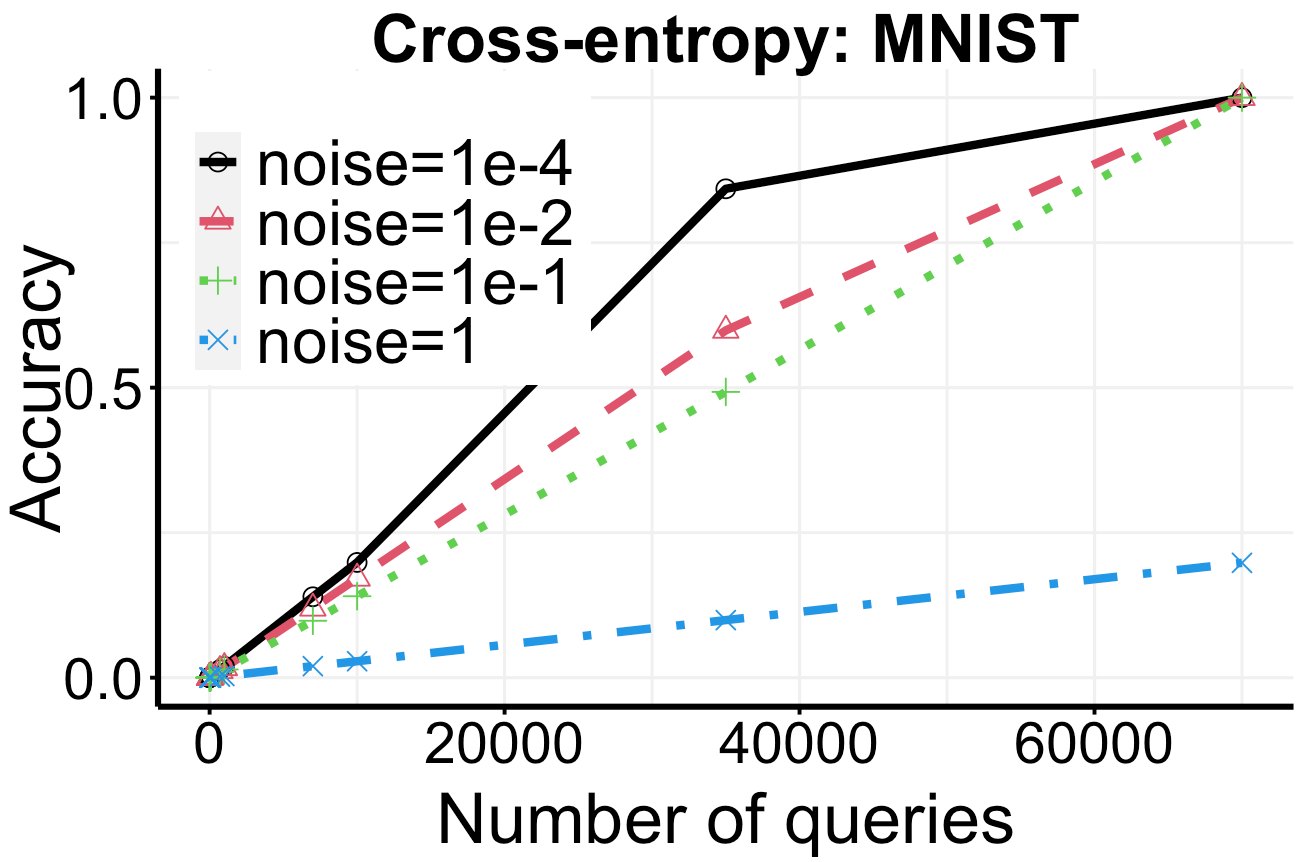}} 
    \hspace{0.1em}
    \subfloat[]{\includegraphics[width=150pt]{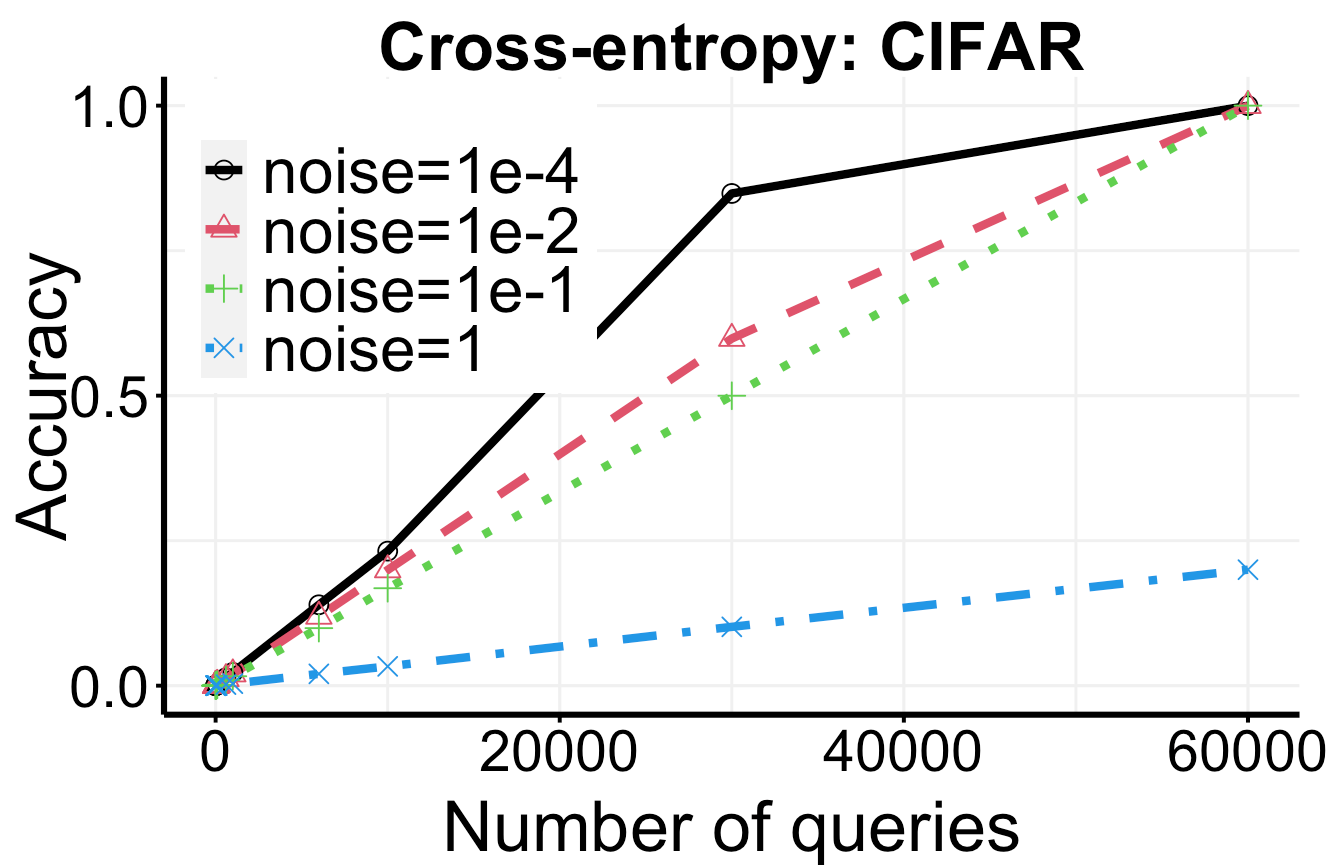}}
    \vspace*{-3ex}
    \caption{Label reconstruction accuracy with the multi-query label inference attack.}
    \label{fig:logloss}
    \vspace{2ex}
\end{figure*} 

\begin{figure}[t]
    \centering
    \vspace{-0.5em}
    \subfloat[(a)]{\includegraphics[width=160pt]{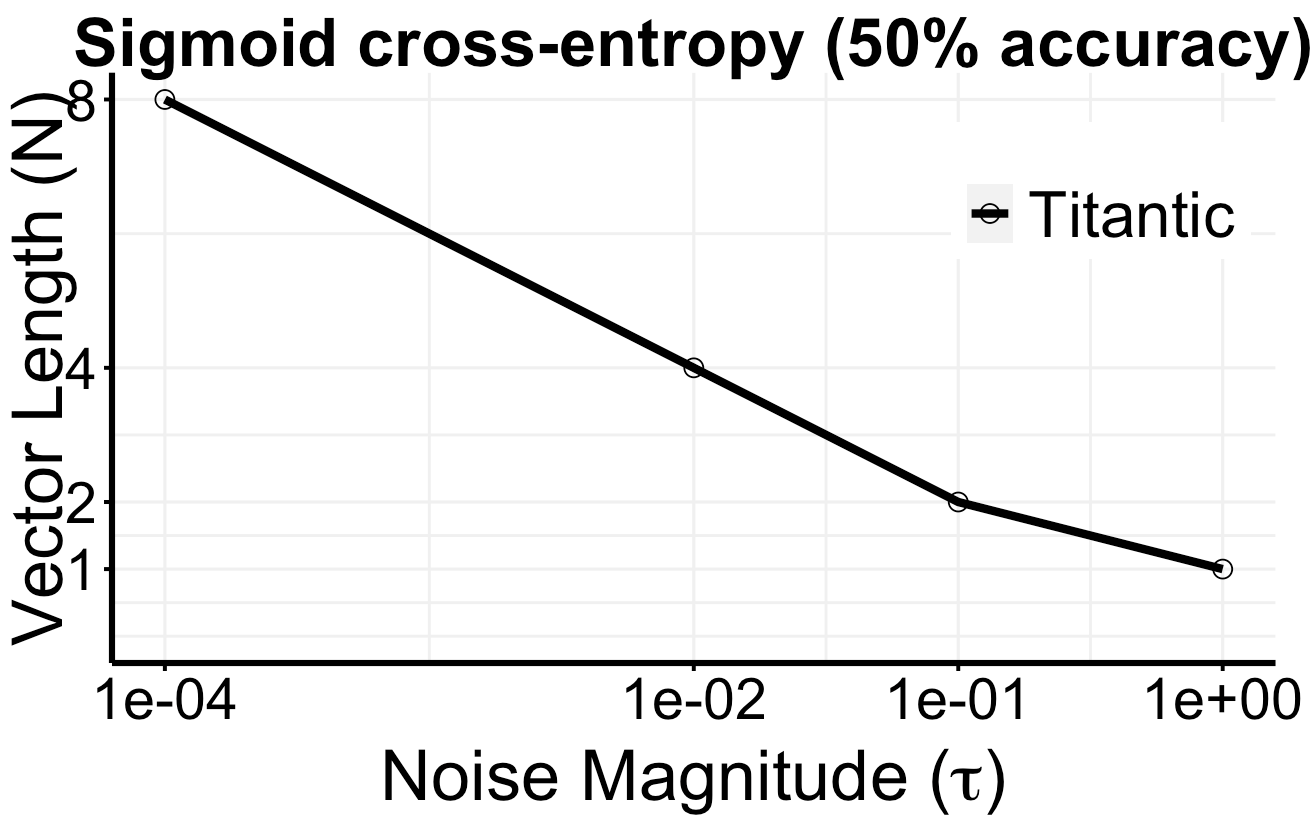}} 
    \hspace{2em}
    \subfloat[(b)]{\includegraphics[width=160pt]{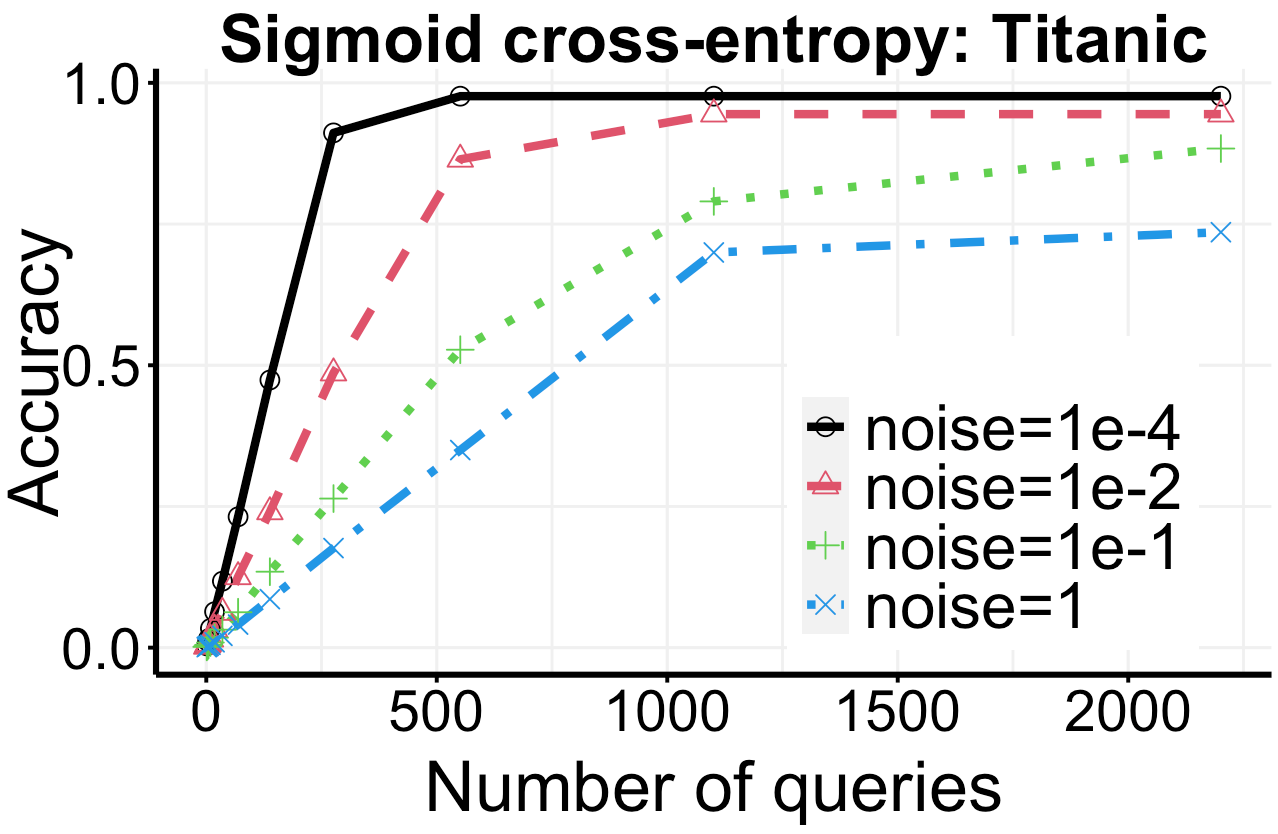}}
    \vspace{-1ex}
    \caption{The plot on the left shows the length of vector recovered (at 50\% accuracy) using single query. The plot on the right shows label reconstruction accuracy with the multi-query label inference attack.}
    \label{fig:sigmoid_experiments}
       \vspace{1ex}
\end{figure}

We now discuss the missing details from Section~\ref{sec:experiments} and present our results for label inference from the Sigmoid cross entropy loss function.


\noindent\textbf{Label Inference from Binary Cross-Entropy Loss.} In our experiments, for binary cross-entropy, we use the label inference attack of \citep{aggarwal2021icml} as a baseline (see Figures~\ref{fig:singlequery}(a) and (b)). 

\begin{theorem}[$\tau$-codomain Separability from Algorithm 2 in~\citep{aggarwal2021icml}]\label{thm:from_icml}
Let $\tau > 0$. For the binary case (with class labels $0$ and $1$), define $\theta_i = \parfrac{3^{2^i}N\tau}{1+3^{2^i}N\tau}$ for all $i \in [N]$. Then, $\logloss$ is $2 \tau$-codomain separable using $\theta$.
\end{theorem}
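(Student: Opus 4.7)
The plan is to mirror the proof of Theorem~\ref{thm:multi_class_proof}, since the given construction is the binary analogue of the matrix used there. First I would expand the binary cross-entropy~\eqref{eq:log_loss_defn} using $\theta_i^{\sigma_i}(1-\theta_i)^{1-\sigma_i} = (1-\theta_i)\cdot\paran{\theta_i/(1-\theta_i)}^{\sigma_i}$, so that
\begin{align*}
\logloss(\sigma,\theta) = \frac{-1}{N}\sum_{i=1}^N \sigma_i \ln\parfrac{\theta_i}{1-\theta_i} \;-\; \frac{1}{N}\sum_{i=1}^N \ln(1-\theta_i),
\end{align*}
which cleanly separates into a labeling-dependent and a labeling-independent term (the second term drops out when we subtract two loss values on different labelings).

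Next I would plug in the specific $\theta_i = 3^{2^i N\tau}/(1+3^{2^i N\tau})$ used in the theorem. This yields $1-\theta_i = 1/(1+3^{2^i N\tau})$, and hence $\theta_i/(1-\theta_i) = 3^{2^i N\tau}$, so $\ln(\theta_i/(1-\theta_i)) = 2^i N\tau \ln 3$. The labeling-dependent term thus simplifies to $-\tau \ln 3 \cdot \sum_{i=1}^N \sigma_i \, 2^i$.

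Combining these, for any two distinct labelings $\sigma_1,\sigma_2 \in \{0,1\}^N$, I would compute
\begin{align*}
\abs{\logloss(\sigma_1,\theta) - \logloss(\sigma_2,\theta)} \;=\; \tau \ln 3 \cdot \abs{\sum_{i=1}^N (\sigma_1(i) - \sigma_2(i))\, 2^i}.
\end{align*}
The sum inside the absolute value represents (up to sign) the difference of two distinct subset sums drawn from $\{2,4,\dots,2^N\}$, which is a set with all distinct subset sums (the standard binary expansion argument); since the exponent starts at $i=1$, every such nonzero difference is an even integer of magnitude at least $2$. Therefore $\Lambda_\theta(\logloss) \ge 2\tau \ln 3 > 2\tau$, establishing $2\tau$-codomain separability by Definition~\ref{def:codomain_separability}.

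The only non-routine step is the subset-sum lower bound, and even that is immediate here because the exponents $\{2^1,\dots,2^N\}$ trivially form a set with distinct subset sums (a special case of $\mu(S_m)=m$ from the Additional Notation paragraph, with $m=2$). The rest is pure algebraic manipulation of the sigmoid-style parameterization, so I expect no serious obstacle — the main care-point is simply keeping the $N^{-1}$ normalization and the $\ln 3$ factor straight so that the final bound is $\ge 2\tau$, not merely $\ge 2\tau/N$ or similar.
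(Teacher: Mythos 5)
Your proof is correct: the paper does not reprove this restated result (it is cited from Algorithm 2 of~\citep{aggarwal2021icml}), but your argument is exactly the technique the paper itself uses for Theorem~\ref{thm:multi_class_proof} and Theorem~\ref{thm:linear_separable} with $g(x)=-\ln x$ --- split off the labeling-independent term, observe $\ln(\theta_i/(1-\theta_i))=2^iN\tau\ln 3$, and invoke the distinct-subset-sum gap of $\{2^1,\dots,2^N\}$ to get $\Lambda_\theta(\logloss)\ge 2\tau\ln 3>2\tau$. Your reading of the exponent as $2^iN\tau$ (rather than $3^{2^i}$ times $N\tau$) is the right one, consistent with Corollary~\ref{cor:IS} and the condition $g(\theta_i)-g(1-\theta_i)>2^iN\tau$ in Theorem~\ref{thm:linear_separable}.
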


Next, we discuss some technical caveats about the results for the softmax cross entropy loss, as observed in Figure~\ref{fig:singlequery}(d).  

\noindent\textbf{Additional bits Needed for Softmax Cross-Entropy Loss.} Recall from our discussion in Section~\ref{sec:experiments} that computing the softmax cross-entropy loss will require an additional $\Omega(NK + \ln(N\tau))$ bits over those required for the multiclass cross-entropy loss. We now formally argue this result.

Observe that for label inference in the softmax case, it suffices to compute a vector $\theta' = [\theta'_1,\dots,\theta'_N]$ such that $\textsc{Softmax}(\theta'_i) = \theta$, where $\theta$ is our desired vector for label inference. This is equivalent to requiring: $e^{\theta'_i}/\sum_j e^{\theta'_j} = \theta_i$, which gives rise to:
\[\frac{e^{x_1}}{\theta_1} = \cdots = \frac{e^{x_N}}{\theta_N}.\] 
Thus, for any $i$ and $j$, we can write $\theta'_i = \theta'_j + \ln \parfrac{\theta_i}{\theta_j}$. Now, let 
$$i_{\shortuparrow} = \arg\max_{i \in [N]}\theta_i \mbox{ and } i_{\shortdownarrow} = \arg\min_{i \in [N]}\theta_i.$$ 
Then, we can write $x_{i_{\shortuparrow}} = x_{i_{\shortdownarrow}} + \ln \parfrac{\theta_{i_{\shortuparrow}}}{\theta_{i_{\shortdownarrow}}}$. Thus, the additional number of bits required to represent the entries in $x$ is $\Omega\paran{\ln \ln \parfrac{\theta_{i_{\shortuparrow}}}{\theta_{i_{\shortdownarrow}}} - \ln \theta_{i_{\shortuparrow}}} = \Omega\paran{\ln \ln \parfrac{\theta_{i_{\shortuparrow}}}{\theta_{i_{\shortdownarrow}}}}$. From our construction in Theorem~\ref{thm:multi_class_proof}, we know that the ratio 
$\frac{\theta_{i_{\shortuparrow}}}{\theta_{i_{\shortdownarrow}}} \approx 3^{2^{NK}N\tau}$. This means that we need an additional $\Omega\paran{\ln \ln 3^{2^{NK}N\tau}} = \Omega\paran{NK + \ln (N\tau)}$ bits for the softmax cross-entropy loss as compared to the (plain) multiclass cross-entropy loss.

\textbf{Additional Experimental Results.} Figure \ref{fig:logloss} presents the results with multiclass cross-entropy loss on Titanic, Satellite, MNIST, and CIFAR datasets, using the same setting as in the multi-query experiments in Figure~\ref{fig:multiquery}.


 Figure~\ref{fig:sigmoid_experiments} presents the results on sigmoid cross-entropy which is applicable only in the binary labeled setting (see~\eqref{eq:sigmoid_definition}) on the Titanic dataset. 
The results are worse compared to (plain) multiclass cross-entropy loss.
This is because the number of bits required to represent $\textsc{Sigmoid}^{-1}(\theta_i)$ (where $\theta_i$ is an element of $\theta$) is $\Omega\paran{\ln\abs{\ln (\theta_i/(1-\theta_i))}}$, which asymptotically dominates the $\Omega\paran{\abs{\ln \theta_i}}$ many bits required to represent $\theta_i$ for the (plain) multiclass cross-entropy loss.

\end{document}